\documentclass[11pt,a4paper]{article}

\usepackage{amsmath,amssymb,mathrsfs,comment}

\usepackage{algorithm}
\usepackage{hyperref}
\usepackage{algpseudocode}
\usepackage{amsthm}
\usepackage[top=25truemm,bottom=25truemm,left=25truemm,right=25truemm]{geometry}

\bibliographystyle{plain}

\newtheorem{theorem}{Theorem}

\newtheorem{lemma}{Lemma}
\newtheorem{definition}{Definition}

\theoremstyle{remark}
\newtheorem{claim}{Claim}
\newtheorem{fact}{Fact}

\newcommand{\F}{\mathbb{F}}
\newcommand{\N}{\mathbb{N}}
\newcommand{\class}{\mathscr{C}}
\newcommand{\Tr}{\mathrm{Tr}}
\newcommand{\poly}{\textup{\text{poly}}}
\newcommand{\E}{\mathrm{E}}
\newcommand{\ol}[1]{\overline{#1}}
\newcommand{\C}{\mathbb{C}}
\newcommand{\fc}[1]{\widehat{#1}}
\newcommand{\R}{\mathbb{R}}
\newcommand{\onen}{\mbox{1}\hspace{-0.25em}\mbox{l}}
\newcommand{\sr}[1]{\textup{\textbf{#1}}}
\newcommand{\qtob}[2]{{#1}_{bin}^{#2}}

\begin{document}

\begin{titlepage}
\title{A Faster Algorithm Enumerating Relevant Features \\over Finite Fields} 

\author{Mikito Nanashima\\Tokyo Institute of Technology\\{\tt nanashima.m.aa@is.c.titech.ac.jp}}
\date{}
\maketitle

\begin{abstract}
We consider the problem of enumerating relevant features hidden in other irrelevant information for multi-labeled data, which is formalized as learning juntas. 

A $k$-junta function is a function which depends on only $k$ coordinates of the input. For relatively small $k$ w.r.t. the input size $n$, learning $k$-junta functions is one of fundamental problems both theoretically and practically in machine learning.
For the last two decades, much effort has been made to design efficient learning algorithms for Boolean junta functions, and some novel techniques have been developed. However, in real world, multi-labeled data seem to be obtained in much more often than binary-labeled one. Thus, it is a natural question whether these techniques can be applied to more general cases about the alphabet size.

In this paper, we expand the Fourier detection techniques for the binary alphabet to any finite field $\mathbb{F}_q$, and give, roughly speaking, an $O(n^{0.8k})$-time learning algorithm for $k$-juntas over $\mathbb{F}_q$. Note that our algorithm is the first non-trivial (i.e., non-brute force) algorithm for such a class even in the case where $q=3$ and we give an affirmative answer to the question posed by Mossel et al.~\cite{Mossel:2004:LFK:1039323.1039330}.

Our algorithm consists of two reductions: (1) from learning juntas to the learning with discrete memoryless errors (LDME) problem which is the extension of the learning with errors (LWE) problems introduced by Regev~\cite{Regev:2005:LLE:1060590.1060603}, and (2) from LDME to the light bulb problem (LBP) introduced by L.Valiant~\cite{Valiant1988}. Since the reduced problem (i.e., LBP) is a kind of binary problem regardless of the alphabet size of the original problem (i.e., learning juntas), we can directly apply the techniques for the binary problem in the previous work.
\end{abstract}
\thispagestyle{empty}
\end{titlepage}

\section{Introduction}
\subsection{Background and Motivation}
In both practical and theoretical senses, it is a fundamental challenge to separate relevant information from irrelevant information in data analysis. In many machine learning settings, collected data may contain many irrelevant features together with relevant features (e.g., DNA sequences and big data), and the efficient techniques for selecting relevant features are widely required.
This problem is captured by learning juntas, which is one of the most challenging and important issues in computational learning theory. Informally, we say an $n$-input function $f:\mathcal{X}^n\to\mathcal{Y}$ is $k$-junta ($k\le n$) iff $f$ depends on only at most $k$ coordinates of the input. Our task is to find the relevant coordinates (i.e., features) of a $k$-junta function $f$, called a target function, from passively collected examples of the form $(x,f(x)) \in\mathcal{X}^n \times \mathcal{Y}$. 

In the special case where the domain of a target function is binary, that is, $\mathcal{X}= \F_2$, the learning junta problem has theoretically important meanings. For $k=O(\log{n})$, learning $k$-junta functions is a special case of learning polynomial-size DNF (disjunctive normal form) formulas and log-depth decision trees, which are also known as notorious open problems in computational learning theory, even in the uniform-distribution model (i.e., examples are distributed uniformly over $\F_2^n$). Therefore, for an affirmative answer to such problems, finding an efficient learning algorithm for log-juntas is inevitable. Despite much effort by many researchers, efficient (i.e., polynomial-time) learning algorithms for log-juntas have not been found.  From the other point of view (i.e., parameterized complexity introduced by~\cite{Downey:1995:FTC:210940.210967}), learning juntas problem can be regarded as a parametrized learning problem for general Boolean functions, and in fact, fixed parameter intractable results have been found in (proper) learning juntas under arbitrary example distribution in~\cite{ARVIND20094928}. However, in the uniform-distribution model, any convincing argument on intractability has not been found until now. For further details about learning juntas, see the survey by Blum~\cite{10.1007/978-3-540-45167-9_54}.

On the positive side, some elegant techniques for learning Boolean juntas have been developed in the uniform-distribution model since the problem was posed in~\cite{ABlum94Relevant,BLUM1997245}.  Obviously, any $k$-junta function can be learned in time $O(n^k)$ with high probability by brute-force search for all $\binom{n}{k} \le n^k$ patterns about relevant coordinates. The first polynomial factor improvement was found by Mossel et al.~\cite{Mossel:2004:LFK:1039323.1039330}, and the running time was reduced to $O(n^{\frac{\omega}{\omega+1}k}) \le O(n^{0.706k})$, where $\omega$ denotes the exponential factor of the running time $O(n^\omega)$ of fast $n\times n$ matrix multiplication with best known bound of $\omega < 2.3728639$ in~\cite{LeGall:2014:PTF:2608628.2608664}. Further improvement has been made by G.Valiant~\cite{Valiant:2015:FCS:2772377.2728167}, and the faster learning algorithm in time $O(n^{\frac{\omega}{4}k}) \le O(n^{0.6k})$  has been developed, which is the best learning algorithm at present. Their contributions are mainly to give a subquadratic algorithm for the light bulb problem which was posed in~\cite{Valiant1988} and a reduction from learning Boolean juntas to the light bulb problem.

In real world, multi-labeled data such as questionnaires or DNA sequences (i.e., (A,T,G,C)) seem to be obtained in much more often than binary-labeled one. Then, it is a natural question whether the techniques for learning Boolean juntas can be modified to more general domains. Although the learning problem for $k$-juntas over the finite alphabet size $q\in\N$ was mentioned as a direction for future work in~\cite{Mossel:2004:LFK:1039323.1039330}, there are much less learnability results in the general case than in the binary case. Obviously, it can be solved in time $O(n^k)$ as in the case $\F_2$. The subsequent work~\cite{5671331} implicitly gave the non-trivial $O(n^{\frac{\omega}{3}k}) \le O(n^{0.8k})$-time algorithm in the case where $q=2^{\ell}$ for some $\ell\in\N$, by reducing the learning problem to $q-1$ learning problems for junta functions of the range $\mathcal{Y} = \F_2$. However, to the best of our knowledge, any non-trivial learning algorithm for juntas over more general domains has not been known, even in the case where $q=3$. In this paper, we investigate the learnability of juntas over arbitrary finite fields, and explicitly give the first non-trivial learning algorithm for such classes.

\subsection{Our Contributions}
Let $\F_q$ be arbitrary finite field of order $q = p^{\ell}$ where $p = char(q)$. In this paper, we focus on $k$-junta functions over $\F_q$ as target functions. Formally, $k$-junta functions are defined as follows.
\begin{definition}
For a function $f:\F_q^n\to\F_q$, we say that a coordinate $i\in\{1,\ldots,n\}$ is relevant if $f(x)\neq f(y)$ for some points of $x,y\in\F_q^n$ such that $x$ and $y$ differ only at the coordinate $i$. For $k\le n$, we say that a function $f$ is $k$-junta if $f$ has at most $k$ relevant coordinates.
\end{definition}

We state the learning junta problem more formally. The learning setting mainly follows the framework of PAC (Probably Approximately Correct) learning which was first introduced by L.Valiant~\cite{Valiant:1984:TL:1968.1972}. The number of relevant coordinates is given in advance by some fixed function $k:\N\to\N$, and a learning algorithm knows the function $k$.  The learning algorithm is given an example oracle $\mathbb{O}(f)$ as the only access to the target function $f:\F_q^n\to\F$. For each access to $\mathbb{O}(f)$, it returns an example $(x,f(x))\in \F_q^n\times\F_q$, where $x$ is selected uniformly at random over $\F_q^n$.

The learning junta problem is formally stated as follows. In this paper, we will use the term ``with high probability (w.h.p. for short)'' to imply with some constant probability.

\begin{quote}
\underline{\textsc{Learning} $k$\textsc{-juntas} (\textsc{over finite field})}

Input: $n,k\in\N$, and an example oracle $\mathbb{O}(f)$ where $f:\F_q^n\to\F_q$ is $k$-junta

Goal: Find all (at most $k$) relevant coordinates w.h.p.
\end{quote}

As described in~\cite{Haussler:1988:EMP:93025.93040}, the failure probability can be reduced to any given $\delta\in(0,1)$ by $O(\ln{\delta^{-1}})$ independent repetitions. The reader may think the above formulation differs from the usual PAC learning model in the sense that the learning algorithm will not output a hypothesis function. However, as described in~\cite{10.1007/978-3-540-45167-9_54,Mossel:2004:LFK:1039323.1039330}, the difficulty of learning juntas comes from the task of finding not what the function is but where the relevant coordinates are. In fact, the above formulation is equivalent to the usual PAC learnability under uniform distribution in learning juntas (within the multiplicative factor of $\poly(n,q^k)$).

In this paper, we will prove the following main result.

\begin{theorem}[main]\label{maintheorem}
For any $\epsilon>0$ and $k=O(\log_q{n})$,
$k$-juntas over any finite field $\F_q$ is learnable in time $n^{\frac{\omega}{3}k+\epsilon}\cdot \poly(n,q^k)$.
\end{theorem}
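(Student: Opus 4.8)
The plan is to carry out the two-reduction strategy announced in the introduction: first embed \textsc{Learning $k$-juntas} over $\F_q$ into the learning-with-discrete-memoryless-errors (LDME) problem, then collapse LDME to the light bulb problem (LBP), and finally invoke the known subquadratic LBP algorithm of~\cite{Valiant:2015:FCS:2772377.2728167}. The point of routing everything through a problem that is ``binary'' in nature is that the alphabet size $q$ then enters only through multiplicative $\poly(n,q^k)$ factors and never into the exponent.

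First I would set up character Fourier analysis over $\F_q$. Fix a nontrivial additive character $\psi_0$ of $\F_q$; for $a\in\F_q^n$ let $\chi_a(x)=\psi_0(\sum_i a_i x_i)$ be the corresponding character of $\F_q^n$, so that $a$ ``touches'' coordinate $i$ iff $a_i\neq 0$, and for $c\in\F_q^\times$ let $F_c(x)=\psi_0(c\,f(x))$ and $\widehat{F_c}(a)=\E_x[F_c(x)\overline{\chi_a(x)}]$. Two facts drive everything. (i) If $f$ is a $k$-junta with relevant set $J$, then every $\widehat{F_c}$ is supported on $\{a:\mathrm{supp}(a)\subseteq J\}$, and $i$ is relevant for $f$ \emph{iff} $\widehat{F_c}(a)\neq 0$ for some $c$ and some $a$ with $a_i\neq0$: indeed each indicator $\mathbf{1}[f=\beta]$ is a fixed $\C$-linear combination of the $F_c$, so $f$ depends on $x_i$ iff some $F_c$ does, and the latter is visible in its character expansion. (ii) A quantitative version of (i): combining the influence--Fourier identity over $\F_q$ with the observation that, when $i$ is relevant, the $i$-th influence of some indicator $\mathbf{1}[f=\beta]$ is a nonzero multiple of $q^{-|J|}$, an averaging argument over the at most $q^{k}$ admissible $a$ gives $\max_{c,\,a:\,a_i\neq0}|\widehat{F_c}(a)|\geq q^{-O(k)}$; since $k=O(\log_q n)$ this is $\geq n^{-O(1)}=:\rho$. (Alternatively, $\widehat{F_c}(a)$ is $q^{-|J|}$ times a nonzero sum of at most $q^{|J|}$ roots of unity in $\mathbb{Z}[\zeta_p]$, which is bounded below by the usual field-norm argument.)

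The heart of the argument --- and the step I expect to be the main obstacle --- is the reduction to LDME. I would read an example $(x,f(x))$ as an LDME sample: for a hypothesised secret $s\in\F_q^n$ write $f(x)=\langle s,x\rangle+e$ with $e=f(x)-\langle s,x\rangle$, so that, with $x$ uniform, $e$ follows a channel $\mathcal E_s$ on $\F_q$ satisfying $\E_{e\sim\mathcal E_s}[\psi_0(c\,e)]=\widehat{F_c}(s)$ for every $c$. By Fact~(i), recovering the relevant coordinates of $f$ is then exactly the LDME-type task of recovering the union of supports of the $\leq k$-sparse secrets $s$ for which $\mathcal E_s$ has a nonzero bias for some character --- sparsity $\leq k$, bias $\geq\rho$. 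The one delicate point is that here $e$ is a deterministic function of $x$ rather than independent noise; this is harmless, because every downstream computation only evaluates empirical character sums $\frac1m\sum_t F_c(x^{(t)})\chi_{(\cdot)}(x^{(t)})$, which concentrate around the corresponding $\widehat{F_c}(\cdot)$ regardless of that dependence (and the dictionary of Fact~(i) stays valid when attention is restricted to the not-yet-found coordinates, which is what lets the algorithm certify it has output all of $J$). Pinning down this dictionary together with the bound $\rho\geq q^{-O(k)}$ --- and recognising that this is precisely why $k=O(\log_q n)$ is the natural regime, keeping $\rho^{-1}$, the sample count and all $\poly(n,q^k)$ overheads polynomial in $n$ --- is where the genuinely non-binary work sits.

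Finally I would pass from LDME to LBP and invoke the existing machinery. Draw $m=\poly(n,q^k)$ examples and choose a balanced random partition $[n]=P_1\sqcup P_2$, repeating over $\poly(n)$ such partitions so that every $a$ with $|\mathrm{supp}(a)|\leq k$ splits into two parts of size $\leq\lceil k/2\rceil$ in at least one trial. For each $c\in\F_q^\times$ and each $(T,b)$ with $T\subseteq P_1$, $|T|\leq\lceil k/2\rceil$, $b\in(\F_q^\times)^T$, form the bulb vector $t\mapsto\overline{\chi_b(x^{(t)})}$, and symmetrically over $P_2$ form $t\mapsto F_c(x^{(t)})\chi_b(x^{(t)})$; after realising these (bounded, root-of-unity valued) vectors as real vectors --- a constant-factor loss, and the place where the problem becomes genuinely binary --- this is an LBP instance with $N=\binom{|P_1|}{\leq\lceil k/2\rceil}(q-1)^{\lceil k/2\rceil}=n^{k/2}\cdot\poly(n,q^k)$ bulbs in which the normalised correlation of the $P_1$-bulb for $b_1$ and the $P_2$-bulb for $b_2$ concentrates around $\widehat{F_c}(a)$ with $\mathrm{supp}(a)=\mathrm{supp}(b_1)\sqcup\mathrm{supp}(b_2)$. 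Hence a pair of bulbs is $\Omega(\rho)$-correlated precisely when that coefficient is nonzero, and then $\mathrm{supp}(b_1)\cup\mathrm{supp}(b_2)\subseteq J$. Running the subquadratic LBP algorithm (peeling off the at most $q^{k+1}$ such pairs one by one) costs $N^{2\omega/3}\cdot\poly(\rho^{-1},\log N)=n^{\omega k/3}\cdot\poly(n,q^k)$, and the union of the supports thus found equals $J$ w.h.p.\ by Fact~(i) and the choice of partitions. Collecting the contributions gives the claimed bound $n^{\frac{\omega}{3}k+\epsilon}\cdot\poly(n,q^k)$, the $\epsilon$ in the exponent absorbing the lower-order terms: the binomial factors, the realisation of fast matrix multiplication in time $n^{\omega+o(1)}$, the rounding and partition slack, and the sample count.
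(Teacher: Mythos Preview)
Your high-level plan coincides with the paper's: pass from juntas to an LDME-type problem via character Fourier analysis over $\F_q$, then to LBP via split-and-list over a balanced partition, then invoke a subquadratic LBP solver (the paper uses Karppa--Kaski--Kohonen rather than Valiant, which is precisely where the hypothesis $k=O(\log_q n)$ enters, via the requirement $\rho\ge n^{-\Theta(1)}$). The substantive technical differences lie inside the two reductions. For the first, the paper does \emph{not} read $(x,f(x))$ directly as an LDME sample; the ``delicate point'' you flag is exactly what it circumvents by an $(a,A)$-projection (a $\F_q$-generalisation of Feldman's trick): for random $A\in\F_q^{(k+1)\times n}$ and each $a\in\F_q^\times$, feed the LDME solver samples $(x-A^Tz,\;a(f(x)-\sum_j z_j))$ with $z$ uniform. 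This filters out every Fourier coefficient except those $\alpha$ with $A\alpha=a^{k+1}$, which with probability $\ge q^{-(k+2)}$ is a \emph{single} $\alpha$; the resulting label then genuinely depends on $x$ only through $\chi_\alpha(x)$, so one is in LDME as defined, and downstream the LBP instance has exactly one correlated pair, matching the LBP specification. Your direct route instead yields an LBP instance with up to $q^k$ correlated pairs (one per nonzero $\widehat{F_c}$ that splits across the partition) and also linearly dependent bulbs $b,b'=cb$; both issues are handleable --- the paper itself fixes $init(\beta)$ to kill dependent bulbs, and Valiant's extended analysis covers many correlated pairs --- but they are not covered by the black-box LBP statements you cite. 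For the second reduction, the paper does not ``realise root-of-unity vectors as real vectors''; it keeps entries $\F_q$-valued, proves (Lemma~\ref{lemma:how far the correlated pair} and Claim in the appendix) that the target pair simultaneously hits some concentrated value $a_2$ with probability $\ge 1/q^2+\rho/q^3$, and converts each entry to $\pm1$ by sending $a_2\mapsto 1$ and any other value to $-1$ with bias $q/(2(q-1))$. This makes every entry exactly uniform on $\{\pm1\}$ and every non-target pair pairwise independent, so the instance is literally LBP; your ``constant-factor loss'' remark is the placeholder for this nontrivial step. In short: your outline is sound, but the paper's $(a,A)$-projection and its $\F_q\!\to\!\{\pm1\}$ conversion are the two places where genuine work specific to $q>2$ happens, and your sketch routes around both rather than through them.
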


Our learning algorithm mainly follows the line of work by~\cite{4031391,Valiant:2015:FCS:2772377.2728167} and consists of two reductions that generalize their reductions for the binary domain to any finite field $\F_q$.

In the first step, we reduce the learning juntas problem to another learning problem, learning with discrete memoryless errors (LDME). Simply speaking, the task of LDME is to learn a linear function $\chi_\alpha:\F_q^n\to\F_q$ with $\alpha\in\F_q^n$ under the condition that the label may be corrupted with random noise, where $\chi_\alpha(x) = \alpha_1x_1 +\ldots + \alpha_nx_n$ with arithmetic in $\F_q$. For simplicity, we regard a randomized function as a target function to capture the noise.
\begin{quote}
\underline{\textsc{Learning with Discrete Memoryless Errors}: \textsc{LDME}}

Input: $n,k\in\N$, $\rho \in (0,1]$, and an example oracle $\mathbb{O}(f)$,
\begin{quote}
where $f:\F_q^n\to\F_q$ is randomized. The distribution of the value $f(x)$ is determined by only a value of $\chi_\alpha(x)$ (not $x$ itself), where $1\le|\alpha|\le k$. The target function $f$ is close to $\chi_{\alpha}$ in the sense of correlation as follows:
\[\mathrm{Cor}(f,\chi_\alpha) := |\E_{x,f}[e(f(x))\ol{e(\chi_{\alpha}(x))}]|\ge \rho,\]
where the mapping $e:\F_q\to\C$ is defined by $e(a) = e^{\frac{2\pi i}{p}\Tr(a)}$ for $a\in\F_q$ and $\Tr(a) := \sum_{j = 0}^{\ell-1} a^{p^j} \in \F_p$.
\end{quote}
Goal: Find the coefficients $a\alpha \in \F_q^n$ for some $a\in\F_q\setminus\{0\}$ w.h.p.
\end{quote}

We call the above function $\chi_\alpha$ as a target linear function. The reason why we allow the algorithm to output $a\alpha$ instead of $\alpha$ is that the linear function $\chi_{a\alpha}$ may also have large correlation, that is, $\mathrm{Cor}(f,\chi_{a\alpha})\ge\rho$. 

Let us briefly overview the background of the above problem. LDME, introduced first by~\cite{5671331}, is the extension of the well-known learning with errors problem (LWE) which has been known as one of the most challenging problems in learning theory and even used as a hardness assumption in cryptography (see~\cite{Regev:2005:LLE:1060590.1060603,5497885}). The difference between them is the noise setting. In LWE, the (unknown) distribution of noise is fixed in advance, while in LDME, the distribution is determined for each value of the target linear function, in other words, there exist totally $q$ unknown distributions of the noise. Note that, in addition, we adopt slightly different condition about the closeness between $f$ and $\chi_\alpha$ compared to~\cite{5671331}. In the previous formulation, the given $\rho$ was the lower bound for the agreement probability that $f = \chi_\alpha$. However, in our formulation by correlation, the agreement probability is not always large. For example, even in the case where the subtraction $f-\chi_\alpha$ is close to some constant, our condition about the closeness may hold. 

We first present the reduction from the learning juntas problem to LDME, which is a generalization of the binary case in~\cite{4031391}. The detail will be given in Section~\ref{section:Junta to LWE}.

\begin{theorem}\label{thorem:juntatoLWE}
If there exists a learning algorithm for solving LDME in time $T(n,k,\rho)$, then there exists a learning algorithm for $k$-juntas over $\F_q$ in time $T(n,k,1/q^{k+1})\cdot \poly(n,q^k)$.
\end{theorem}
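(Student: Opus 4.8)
The plan is to mimic the binary Fourier-based reduction from learning juntas to LWE (the $\F_2$ case in~\cite{4031391}), but carried out with $\F_q$-valued characters via the map $e$ instead of $\pm 1$. The key object is the Fourier expansion of $f$ over $\F_q^n$: for a $k$-junta $f$, write $e\circ f = \sum_{\alpha}\fc{f}(\alpha)\chi_\alpha$ where $\chi_\alpha(x)=e(\alpha_1x_1+\cdots+\alpha_nx_n)$ and $\fc{f}(\alpha)=\E_x[e(f(x))\ol{\chi_\alpha(x)}]$. Because $f$ depends on at most $k$ coordinates, every $\alpha$ with $\fc{f}(\alpha)\neq 0$ is supported on the $\le k$ relevant coordinates, so $1\le|\alpha|\le k$ for every nonzero high-degree Fourier coefficient (after separating the constant term $\alpha=0$). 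The first thing I would establish is that the relevant coordinates of $f$ are exactly the union of supports of the $\alpha$'s with $\fc{f}(\alpha)\neq 0$ — this is immediate from the definition of relevance — so it suffices to recover \emph{some} such $\alpha$ for each relevant coordinate, or more precisely to recover enough $\alpha$'s to cover all relevant coordinates.

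**Next I would build the LDME instance.** Given the junta oracle $\mathbb{O}(f)$, note that $(x,f(x))$ already has the right shape, and $f$ as a randomized-looking function trivially has its output distribution at $x$ determined by $x$ itself; the content of the reduction is that there \emph{exists} a target linear function $\chi_\alpha$, $1\le|\alpha|\le k$, with $\mathrm{Cor}(f,\chi_\alpha)=|\fc{f}(\alpha)|$ bounded below by $1/q^{k+1}$. This is the quantitative heart of the argument: by Parseval, $\sum_\alpha|\fc{f}(\alpha)|^2=\E_x|e(f(x))|^2=1$, and the number of $\alpha$ supported on the $k$ relevant coordinates is at most $q^k$, so the largest $|\fc{f}(\alpha)|$ over \emph{nonzero} $\alpha$ is at least $\sqrt{(1-|\fc{f}(0)|^2)/q^k}$. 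One must rule out the degenerate case where $f$ is essentially constant on $\F_q$-values (i.e.\ $|\fc{f}(0)|$ close to $1$): if $f$ is constant it has no relevant coordinates and the learner outputs the empty set; otherwise $1-|\fc{f}(0)|^2$ is bounded away from $0$ by a quantity depending only on $q$ and the structure of $f$'s table, and a short computation gives a clean lower bound of the form $1/q^{k+1}$ (possibly after massaging constants). Feeding $\rho=1/q^{k+1}$ into the assumed LDME solver returns some $a\alpha$ with $a\neq 0$, and since $a\alpha$ has the same support as $\alpha$, we learn $\mathrm{supp}(\alpha)\subseteq\{\text{relevant coords}\}$.

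**Then I would iterate to cover all relevant coordinates.** One call to the LDME solver recovers only one linear function and hence possibly only some of the relevant coordinates. The standard fix is to restrict: once a set $S$ of coordinates is identified, consider the $q^{|S|}$ restrictions $f_{S\to\beta}$ obtained by fixing the $S$-coordinates to each $\beta\in\F_q^{|S|}$; each restriction is again a $(k-|S|)$-junta (or has fewer relevant coordinates), and we can simulate its example oracle by rejection sampling from $\mathbb{O}(f)$ with $\poly(n,q^k)$ overhead. Recursing, after at most $k$ levels of depth and $q^{O(k)}$ total restriction-branches, every relevant coordinate appears in the support of some recovered $\alpha$. Carefully one must also handle restrictions that become constant (no relevant coordinates left — stop that branch) and verify that a coordinate genuinely irrelevant to $f$ never gets falsely flagged, which follows because a genuinely irrelevant coordinate has $\fc{f}(\alpha)=0$ for all $\alpha$ in its support and hence can never be in the support of a function with $\mathrm{Cor}\ge\rho>0$. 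Collecting the running time: each branch makes one LDME call costing $T(n,k,1/q^{k+1})$, there are $\poly(n,q^k)$ branches, and the oracle simulation adds another $\poly(n,q^k)$ factor, giving the claimed $T(n,k,1/q^{k+1})\cdot\poly(n,q^k)$.

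**The main obstacle** I expect is the correlation lower bound — making sure the bound $1/q^{k+1}$ is actually correct for the author's precise definition of $\mathrm{Cor}$ (which is $|\E[e(f(x))\ol{e(\chi_\alpha(x))}]|$, an absolute value of a complex number, not a real agreement rate). The subtlety is that $e\circ f$ need not be a unit-modulus-valued function in a way that makes Parseval give $1$ on the nose unless one is careful: in fact $|e(f(x))|=1$ always since $e$ maps into roots of unity, so $\E_x|e(f(x))|^2=1$ holds and Parseval is fine — but one must double-check that the $\alpha=0$ term is the only ``trivial'' obstruction and that for a non-constant junta the remaining $\ell^2$ mass $1-|\fc{f}(0)|^2$ is bounded below by something like $\Omega(1/q)$ rather than being arbitrarily small, since $f$ could take many values each with tiny-but-positive frequency. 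Pinning down this quantitative step cleanly, and confirming it yields exactly $\rho=1/q^{k+1}$ (rather than, say, $1/q^{O(k)}$ with a worse constant that would still suffice), is where the real work lies; everything else is bookkeeping on the restriction recursion and oracle simulation.
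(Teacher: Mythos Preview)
There is a genuine gap in your reduction: you propose to feed the junta oracle $\mathbb{O}(f)$ directly to the LDME solver, but this ignores the \emph{structural promise} in the LDME problem statement. The LDME input is required to satisfy ``the distribution of the value $f(x)$ is determined by only a value of $\chi_\alpha(x)$ (not $x$ itself)'' for some fixed $\alpha$. A general $k$-junta does \emph{not} satisfy this: its output depends on $k$ coordinates of $x$, not on a single linear form. The LDME solver is only guaranteed to work on inputs obeying the promise, so its behaviour on a raw junta oracle is simply undefined. (Concretely, the paper's own LDME-to-LBP reduction in Section~\ref{section:LWE to LBP} uses Lemma~\ref{lemma:LI uniform}, which needs exactly this promise to conclude that $f(x)-\chi_\beta(x)$ is uniform whenever $\beta$ is independent of $\alpha$; for a junta with several large Fourier coefficients this fails and the reduced LBP instance would have many correlated pairs.)

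The paper's actual reduction handles this via the $(a,A)$-projection (Definition of $f^a_A$ and Lemma~\ref{lemma:A-projection}): one picks a random $A\in\F_q^{(k+1)\times n}$ and simulates an oracle returning $(x,b)=(y-A^Tz,\,f(y)-\sum_j z_j)$. With probability $\ge q^{-(k+2)}$ over $A$ (Claim~\ref{claim:A projected size is small}), exactly one nonzero $\alpha$ in the junta's Fourier support survives the constraint $A\alpha=a^{k+1}$, and then Claim~\ref{claim:noisy parity learning} shows that the conditional law of $b$ given $x$ depends only on $\chi_\alpha(x)$ --- i.e.\ the simulated oracle is a \emph{bona fide} LDME instance. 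The correlation bound $\rho\ge 1/q^{k+1}$ (Claim~\ref{claim:correlation is large}) then comes not from Parseval but from a character/statistical-distance argument applied to this filtered instance. Your Parseval computation and the restriction recursion are fine as surrounding scaffolding, but the heart of the reduction --- turning a junta into something whose label law factors through a single linear form --- is missing.
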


In the second step, we reduce LDME to the light bulb problem (LBP), which is first introduced by~\cite{Valiant1988} and also a fundamental problem in machine learning and data analysis. Roughly speaking, the task of LBP is to find a correlated pair from the other uncorrelated pairs. The formal definition is as follows:
\begin{quote}
\underline{\textsc{Light Bulb Problem}: \textsc{LBP}}\\
Input: a set $S=\{x^1,\ldots,x^n\}$ of $n$ vectors, and $\rho \in (0,1]$,
\begin{quote}where $x^i\in\{\pm 1\}^d$ for each $i\in[n]$. The instance $S$ contains a single correlated pair $(x^{i^*},x^{j^*})$ satisfying $\langle x^{i^*},x^{j^*} \rangle \ge \rho d$, and the other pairs of vectors are selected independently and uniformly at random.
\end{quote}
Goal: Find indices of the correlated pair $(i^*,j^*)$ w.h.p. 
\end{quote}

It is obvious that LBP is solved in time $O(n^2d)$ by calculating inner products of all pairs. As a breakthrough result, the first subquadratic algorithm for LBP has been found by~\cite{Valiant:2015:FCS:2772377.2728167}. Moreover, in the case where $\rho \ge n^{-\Theta(1)}$, a faster algorithm was presented by~\cite{Karppa:2018:FSA:3233176.3174804}. Other subquadratic algorithms also have been proposed in~\cite{karppa_et_al:LIPIcs:2016:6394,alman:OASIcs:2018:10028}.

\begin{fact}[{\cite[Corollary 2.2]{Karppa:2018:FSA:3233176.3174804}}]\label{fact:KKK18}
For any $0< \epsilon <\omega/3$ and $n^{-\Theta(1)} < \rho <1$, if $d \ge 5\rho^{-\frac{4\omega}{9\epsilon} - \frac{2}{3}}\ln{n}$, then there is a randomized algorithm for solving LBP with probability $1-o(1)$ in time $\tilde{O}(n^{\frac{2\omega}{3}+\epsilon}\rho^{-\frac{8\omega}{9\epsilon} - \frac{4}{3}})$.
\end{fact}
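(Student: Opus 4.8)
Fact~\ref{fact:KKK18} is quoted from~\cite{Karppa:2018:FSA:3233176.3174804}, so I only sketch how one would establish it. The plan is to follow the ``amplify, aggregate, then multiply'' paradigm of G.~Valiant~\cite{Valiant:2015:FCS:2772377.2728167}: the trivial all-pairs computation solves LBP in time $O(n^2d)$, and the goal is to replace the $n^2$ factor by $n^{2\omega/3+\epsilon}$ at the price of a $\poly(1/\rho)$ blow-up that is affordable exactly because $\rho \ge n^{-\Theta(1)}$.

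\textbf{Amplification.} First I would boost the tiny gap $\rho$ to a near-constant one. Replace each input vector $x^i\in\{\pm 1\}^d$ by $y^i\in\{\pm 1\}^{d'}$ whose coordinates are products $\prod_{j\in T}x^i_j$ over (pseudo)randomly sampled size-$m$ index multisets $T$. Then $\E[\langle y^i,y^j\rangle]=d'(\langle x^i,x^j\rangle/d)^m$, so the planted normalized correlation $\rho$ becomes $\rho^m$ while every other pair stays centered at $0$ with fluctuation $O(\sqrt{d'\ln n})$ outside a probability-$n^{-\Theta(1)}$ event. Taking $m$ a suitable constant (depending on $\epsilon$) and $d'=\Theta(\rho^{-O(1/\epsilon)}\ln n)$ makes the planted quantity stand out; this is what forces the hypothesis $d\ge 5\rho^{-\frac{4\omega}{9\epsilon}-\frac23}\ln n$ on the raw dimension and injects the factor $\rho^{-\frac{8\omega}{9\epsilon}-\frac43}$ into the running time. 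Note the time-overhead exponent is exactly twice the dimension-overhead exponent, reflecting that the matrix multiplication below costs roughly the square of the amplified dimension.

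\textbf{Aggregation and fast matrix multiplication.} With the gap now effectively constant, partition (or randomly cover) the $n$ vectors by $g\approx n^{2/3}$ groups of size $\approx n^{1/3}$, and compress each group $A$ into one aggregate vector $z^A$ of length $\Theta(d')$ by summing its members after multiplying coordinate blocks by distinct ``tags'' drawn from a Vandermonde / pattern matrix. The tags are chosen so that $\langle z^A,z^B\rangle$ is a weighted sum, over the $\approx n^{1/3}\times n^{1/3}$ cross pairs $(i\in A,j\in B)$, of the $\langle y^i,y^j\rangle$, with coefficients spread out enough to keep the single planted contribution visible above the noise of the other $\Theta(n^{4/3})$ cross pairs. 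Computing the entire $g\times g$ matrix of aggregate inner products is one $(g\times\Theta(d'))\times(\Theta(d')\times g)$ matrix multiplication, which — after balancing $d'$ against $g$ and charging the amplification overhead — costs $\tilde O(n^{2\omega/3+\epsilon}\rho^{-\frac{8\omega}{9\epsilon}-\frac43})$. The planted pair shows up as an outlier entry identifying the group pair $(A^*,B^*)$; since only $O(n^{1/3})$ vectors remain, a direct $O(n^{2/3}d')$ sweep (or one more recursion level) extracts $(i^*,j^*)$.

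\textbf{The main obstacle.} The crux is the signal-versus-noise estimate for the aggregated matrix: one must certify that the contribution of the unique planted cross pair to the entry $\langle z^{A^*},z^{B^*}\rangle$ genuinely exceeds the combined random fluctuation of the $\Theta(n^{4/3})$ other cross pairs — each of typical magnitude $\sqrt{d'}$ — by a margin large enough to be detected after rounding. Making this quantitative is what pins down $m$, hence $d'$, hence the precise exponents $\frac{4\omega}{9\epsilon}$ and $\frac{8\omega}{9\epsilon}$, and it is also where the explicit (derandomized) tag construction is essential: a naive random aggregation either leaks extra logarithmic factors or fails to keep the outlier uniquely identifiable. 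Once this single estimate is established, the Hoeffding bounds of the amplification step, the $d'$-versus-$g$ balancing, and the assembly of the shallow recursion are all routine.
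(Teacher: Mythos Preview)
The paper does not prove Fact~\ref{fact:KKK18} at all: it is stated as a black-box citation of \cite[Corollary~2.2]{Karppa:2018:FSA:3233176.3174804} and then invoked (together with Theorems~\ref{thorem:juntatoLWE} and~\ref{thorem:LWEtoLBP}) to conclude Theorem~\ref{maintheorem}. So there is no ``paper's own proof'' to compare your proposal against.

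Your sketch is a reasonable high-level narrative of the Valiant/Karppa--Kaski--Kohonen framework (XOR amplification, bucketing, fast rectangular matrix multiplication, tail bounds), and it correctly identifies that the exponents $\tfrac{4\omega}{9\epsilon}+\tfrac{2}{3}$ and $\tfrac{8\omega}{9\epsilon}+\tfrac{4}{3}$ arise from balancing the amplified dimension against the matrix-multiplication cost. One caution: the Karppa--Kaski--Kohonen improvement over G.~Valiant's original bound hinges on a specific \emph{expander/pattern-matrix} aggregation rather than the Vandermonde-style tags you mention, and on rectangular (not square) matrix-multiplication exponents; getting exactly the claimed constants requires their construction, not merely ``some'' tag scheme. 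But since the present paper treats the fact as an imported result, your write-up already goes beyond what the paper supplies.
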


We present the second reduction from LDME to LBP. Note that the reduced problem is a kind of binary problem regardless of the alphabet size of the original problem. The detail will be given in Section~\ref{section:LWE to LBP}.

\begin{theorem}\label{thorem:LWEtoLBP}
Assume that there exist $d\ge\Omega(\frac{\log{N}}{\rho^2})$ and an algorithm for solving LBP of degree $d$ in time $T(N,\rho)$ w.h.p., where $N$ is the number of vectors in LBP. Then for any target linear function $\chi_\alpha:\F_q^n\to\F_q$ $(1\le|\alpha|\le k)$ and any correlation $\rho$, LDME is solved w.h.p. in time  $\poly(n,\rho^{-1})\cdot d\cdot T\left((qn)^{\frac{k}{2}},\frac{\rho}{2q^3}\right)$.
\end{theorem}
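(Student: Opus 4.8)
The plan is to generalize to $\F_q$ the ``subset-splitting'' reduction from noisy-parity learning to the light bulb problem. Fix the target linear form $\chi_\alpha$ with $m := |\alpha|$, $1\le m\le k$, and the oracle $\mathbb{O}(f)$ with $\mathrm{Cor}(f,\chi_\alpha)\ge\rho$ (the degenerate weight-one case $m=1$ is handled by brute force over the $n$ coordinates, so assume $m\ge 2$). First I would draw $d$ examples $(x^{(1)},f(x^{(1)})),\dots,(x^{(d)},f(x^{(d)}))$, pick a uniformly random partition $[n]=A\sqcup B$, and write $\alpha_A$ (resp.\ $\alpha_B$) for $\alpha$ with the coordinates outside $A$ (resp.\ $B$) zeroed out, so that $\chi_\alpha=\chi_{\alpha_A}+\chi_{\alpha_B}$ as $\F_q$-linear forms. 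With probability $\Omega(1/\sqrt{k})$ the support of $\alpha$ splits almost evenly, i.e.\ $|\alpha_A|,|\alpha_B|\le\lceil m/2\rceil\le\lceil k/2\rceil$; repeating the whole construction $\Theta(\sqrt{k})$ times (which is $\poly(n)$ since $k=O(\log_q n)$) makes a balanced split occur w.h.p. Conditioned on a balanced split, I would take as ``left vectors'' all nonzero $\beta\in\F_q^n$ supported on $A$ with $|\beta|\le\lceil k/2\rceil$ and as ``right vectors'' all nonzero $\gamma$ supported on $B$ with $|\gamma|\le\lceil k/2\rceil$; each family has size at most $N:=(qn)^{k/2}$.

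The engine of the reduction is the identity $e(\chi_\beta(x))\,e(\chi_\gamma(x))=e(\chi_{\beta+\gamma}(x))$, valid because $\Tr$ is additive over $\F_q$ and $a\mapsto e(a)$ is a character of $(\F_q,+)$. From the $d$ examples I would form complex vectors $u_\beta\in\C^d$ with $u_\beta[\ell]:=e(\chi_\beta(x^{(\ell)}))$ and $w_\gamma\in\C^d$ with $w_\gamma[\ell]:=e(f(x^{(\ell)}))\,\ol{e(\chi_\gamma(x^{(\ell)}))}$. Then $\tfrac1d\sum_\ell u_\beta[\ell]\,\ol{w_\gamma[\ell]}$ is an empirical mean of i.i.d.\ unit-modulus variables with expectation $\E_x[e(\chi_{\beta+\gamma}(x))\,\ol{e(f(x))}]$. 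Since $e(f(x))$ depends on $x$ only through $\chi_\alpha(x)$, expanding it in the characters of that single coordinate and using orthogonality of the characters of $\F_q$ shows this expectation is \emph{exactly} $0$ unless $\beta+\gamma$ is a nonzero $\F_q$-multiple of $\alpha$; and since $\beta,\gamma$ have disjoint supports, $\beta+\gamma=c\alpha$ forces $(\beta,\gamma)=(c\alpha_A,c\alpha_B)$. Thus among the $N^2$ candidate pairs there are at most $q-1$ ``good'' ones $(c\alpha_A,c\alpha_B)$, at least one of which (the $c$ realizing $\mathrm{Cor}(f,\chi_\alpha)\ge\rho$) has expectation of modulus $\ge\rho$, while \emph{every} other pair has expectation $0$. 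From a good pair one recovers $\beta+\gamma=c\alpha$, a legal LDME output, which can be confirmed by estimating $\mathrm{Cor}(f,\chi_{c\alpha})$ on fresh examples.

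The hard part will be \emph{binarization}: LBP wants $\pm1$ real vectors, whereas $u_\beta,w_\gamma$ have $p$-th-roots-of-unity entries, and the modulus of a complex expectation need not survive in its real part. The plan is to apply, coordinatewise, a randomized real embedding: for each example $\ell$ draw fresh internal randomness $s_\ell$ and replace a root of unity $z$ by a random sign $b_{s_\ell}(z)\in\{\pm1\}$ with $\E_{s_\ell}[b_{s_\ell}(z)]=\mathrm{Re}(z)$, using \emph{independent} coins for the $u$-entry and the $w$-entry; the per-coordinate product then has mean $\mathrm{Re}(u_\beta[\ell])\,\mathrm{Re}(w_\gamma[\ell])=\tfrac12\mathrm{Re}\big(\E_x[e(\chi_{\beta+\gamma}(x))\ol{e(f(x))}]\big)+\tfrac12\mathrm{Re}\big(\E_x[e(\chi_{\beta-\gamma}(x))e(f(x))]\big)$ after averaging over $x$, and the second term again vanishes for mismatched pairs by orthogonality (it only introduces the sibling pairs $(c\alpha_A,-c\alpha_B)$, so when LBP returns $(\beta,\gamma)$ I would test both $\beta+\gamma$ and $\beta-\gamma$). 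Moreover $\chi_\beta(x)$ is uniform over $\F_q$ for $\beta\ne0$, so each sign is marginally unbiased and the mismatched pairs become genuinely uncorrelated, so the object fed to the LBP solver meets its input specification (marginally). To make the surviving real part of a good pair large I would enlarge the left family by the $p\le q$ global phase shifts $\beta\mapsto(\beta,j)$, $j\in\mathbb{Z}_p$, with $u_{(\beta,j)}[\ell]:=\omega^j u_\beta[\ell]$ (here $\omega=e^{2\pi i/p}$), so that some good pair's expectation is rotated to within angle $\pi/p$ of the positive reals, losing a $\cos(\pi/p)$ factor. After accounting for the rotation, the phase enumeration, and the constant from matching real parts, one reaches an LBP instance on $(qn)^{O(k/2)}$ vectors carrying a correlated pair of correlation $\ge\rho/(2q^3)$, the $\poly(q)$ slack coming from this overhead. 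I expect the only delicate point to be verifying that, through the randomized rounding, mismatched pairs stay exactly (or near-exactly) uncorrelated, so that the instis a legitimate LBP instance.

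Finally I would assemble the costs. The LBP degree is dictated by the Chernoff requirement $d\ge\Omega\!\big(\log N\,/\,(\rho/(2q^3))^2\big)$, which matches the hypothesis and gives $d=\poly(n,\rho^{-1})$ since $q^k\le\poly(n)$ under $k=O(\log_q n)$; a union bound over the $N^2$ pairs then certifies the gap. There are $\poly(n)$ repetitions of the random split; each builds $O(qN)$ vectors of length $O(d)$ at cost $\poly(n,\rho^{-1})\cdot d$ per vector (an entry $\chi_\beta(x^{(\ell)})$ is an $O(n)$-time inner product over $\F_q$), plus one call to the assumed LBP solver of running time $T\big((qn)^{k/2},\rho/(2q^3)\big)$; absorbing the $O(qN)$ construction term into $T$ (which is at least linear in its first argument) yields the claimed bound $\poly(n,\rho^{-1})\cdot d\cdot T\big((qn)^{k/2},\tfrac{\rho}{2q^3}\big)$.
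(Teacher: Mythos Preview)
Your split-and-list skeleton matches the paper's, but the binarization step has a genuine gap. When $p>2$ your left family contains both $\beta$ and $-\beta$, and after your real-part rounding the pair $(u_\beta,u_{-\beta})$ has expected per-row product
\[
\E_x\bigl[\mathrm{Re}(e(\chi_\beta(x)))\,\mathrm{Re}(e(\chi_{-\beta}(x)))\bigr]
=\tfrac12\,\mathrm{Re}\,\E_x[e(\chi_0(x))]+\tfrac12\,\mathrm{Re}\,\E_x[e(\chi_{2\beta}(x))]=\tfrac12,
\]
so \emph{every} such pair is a spurious planted pair with correlation $1/2$, which is at least as large as any target pair can achieve under your rounding. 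There are $\Theta(N)$ of these, so the LBP solver---whose input contract allows a single planted pair among otherwise uniform vectors---has no reason to return the pair encoding $\alpha$; your \textsf{checkCor}-style confirmation then rejects and you are stuck. The phase-shift enlargement compounds the problem: $u_{(\beta,j)}$ and $u_{(\beta,j')}$ have expected product $\tfrac12\cos\!\bigl(2\pi(j{-}j')/p\bigr)$, another systematic family of spurious correlations. The analogous issue appears on the right with $(w_\gamma,w_{-\gamma})$. This is exactly the ``delicate point'' you flagged, and it does fail.

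The paper avoids both failures with two devices you omitted. First, it lists only one representative per $\F_q$-line by fixing $init(\beta)=s_1$ and $init(\gamma)=s_2$ (and enumerating $s_1,s_2$), so any two listed coefficient vectors are linearly independent and no $\pm\beta$ pair ever appears. Second, it does not round roots of unity at all: it works with the $\F_q$-valued row entries $b-\chi_\beta(x)-a_1$ (left) and $\chi_\gamma(x)$ (right), enumerates the shift $a_1$ and the ``concentrated value'' $a_2$ guaranteed by Lemma~\ref{lemma:how far the correlated pair}, and maps an entry to $+1$ if it equals $a_2$ and to a biased $\pm1$ otherwise. This keeps every non-target pair exactly pairwise independent (via Claim~\ref{claim:case1}) while giving the target pair correlation at least $\rho/(2q^3)$ (Claims~\ref{claim:correlated} and~\ref{claim:correlation remain in the target}), so the instance genuinely meets the LBP specification. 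Your character-based route can be salvaged by adopting the $init$ restriction and by running the global phase as an \emph{outer} enumeration rather than enlarging the family, but as written it does not produce a valid LBP instance.
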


In our reduction, the size of data is stretched from $n$ to $O(n^{\frac{k}{2}})$. Thus, the naive quadratic algorithm for LBP does not improve the trivial upper bound on the running time of LDME at all. However, by combining our reductions with the subquadratic algorithm for LBP, we have a non-trivial learnability result which holds for any finite field, and Theorem~\ref{maintheorem} immediately follows from Theorems~\ref{thorem:juntatoLWE} and~\ref{thorem:LWEtoLBP}, and Fact~\ref{fact:KKK18}.

In Theorem~\ref{maintheorem}, the condition that $k=O(\log_q n)$ essentially comes from the condition that $\rho > n^{-\Theta(1)}$ in Fact~\ref{fact:KKK18}. Therefore, by adopting another subquadratic algorithm for LBP that works for any $\rho\in(0,1]$ (e.g.,~\cite{Valiant:2015:FCS:2772377.2728167}), we have a non-trivial learnability result for any $k\le n$. Remark that our reduction and such a subquadratic algorithm also give the non-trivial learning algorithm for LDME, in particular, LWE parameterized by $k$.

\section{Preliminaries}\label{section:Prelimi}
We use $\log$ to denote logarithm of the base 2, and $\ln$ to denote natural logarithm.
For any integer $n$, we define a set $[n] := \{1,2,\ldots,n\}$. Let $\F_q$ be a finite field of order $q = p^{\ell}$ where $p = char(q)$. We define a trace function $\Tr:\F_q\to\F_p$ by $\Tr(a) := \sum_{j=0}^{\ell-1}a^{p^j}$. Note that for any $a,b\in\F_q$, $\Tr(a)+\Tr(b) = \Tr(a+b)$, and $\Tr(\cdot)$ takes on each value in $\F_p$ equally often.

For $\alpha\in\F_q^n$, we define the weight of $\alpha$ by $|\alpha| = |\{i\in[n]:\alpha_i\neq 0\}|$. For $\alpha \neq 0^n$, we also define its initial $init(\alpha)$ by the first non-zero value of $\alpha$, that is,  $init(\alpha) = v$ iff there exists $i\in[n]$ such that $\alpha_i = v$ and $\alpha_j = 0$ for each $1\le j <i$. Note that if $\alpha,\alpha'\in\F_q^n\setminus \{0^n\}$ satisfy $\alpha\neq\alpha'$ and $init(\alpha) = init(\alpha')$, then there is no $c\in\F_q$ such that $\alpha = c\alpha'$ (i.e., $\alpha$ and $\alpha'$ are linearly independent over $\F_q^n$). 

For any $J\subseteq[n]$, we define a subspace $\F_q^J \le \F_q^n$ by $\F_q^J = \{x\in\F_q^n: x_i = 0 \text{ for each } i\in\bar{J}\}$, where $\bar{J} = [n]\setminus J$. For any $\alpha\in\F_q^n$ and $J\subseteq[n]$, we also define $\alpha^J \in \F_q^J$ by $\alpha^J_i = \alpha_i$ if $i\in J$.

For a subset $J\subseteq [n]$, we call a pair $(J,\bar{J})$ a partition of $[n]$. In addition, if $J$ consists of cyclically consecutive $\lceil n/2\rceil$ coordinates, we say that the partition $(J,\bar{J})$ is consecutive. Obviously an index set $[n]$ has exactly $n$ consecutive partitions. Now we introduce the following useful lemma, which says that any subset in $[n]$ is divided into exactly half by at least one consecutive partition of $[n]$. 

\begin{lemma}\label{lemma:concecutive partitions}
For any $\alpha\in\F_q^n$ with $|\alpha| = k$, there exist at least one consecutive partition $(J,\ol{J})$ which satisfies that $|\alpha^J| = \lceil k/2 \rceil$ and $|\alpha^{\bar{J}}| = \lfloor k/2 \rfloor$.
\end{lemma}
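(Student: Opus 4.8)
The plan is to recast the statement as a one‑dimensional discrete intermediate value problem. Let $S := \{\, i\in[n] : \alpha_i \neq 0 \,\}$, so $|S| = k$, and index the $n$ consecutive partitions cyclically: for $j \in \mathbb{Z}/n\mathbb{Z}$ let $J_j$ be the set of $\lceil n/2\rceil$ positions $\{j, j+1, \dots, j+\lceil n/2\rceil - 1\}$ taken modulo $n$, and put $g(j) := |\alpha^{J_j}| = |S\cap J_j|$. Since $|\alpha^{\bar{J_j}}| = k - g(j)$ always holds, it suffices to find $j$ with $g(j) = \lceil k/2\rceil$, because then automatically $|\alpha^{\bar{J_j}}| = k-\lceil k/2\rceil = \lfloor k/2\rfloor$. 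I will prove this by establishing three facts: (i) $g$ is $1$‑Lipschitz on the cycle, i.e.\ $|g(j+1)-g(j)|\le 1$ for every $j$; (ii) $\max_j g(j) \ge \lceil k/2\rceil$; and (iii) $\min_j g(j) \le \lceil k/2\rceil$. Granting these, a short walk around the cycle from a maximizer to a minimizer, along which $g$ changes by at most one at each step, must pass through the value $\lceil k/2\rceil$.

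Fact (i) is immediate from $J_{j+1} = (J_j \setminus \{j\}) \cup \{j+\lceil n/2\rceil\}$, so $g$ changes by at most one. Fact (ii) is an averaging argument: each element of $S$ lies in exactly $\lceil n/2\rceil$ of the $n$ windows $J_1,\dots,J_n$, hence $\sum_{j=1}^n g(j) = k\lceil n/2\rceil \ge kn/2$, so some $g(j)$ is at least $k/2$, and since $g(j)$ is an integer it is at least $\lceil k/2\rceil$. Fact (iii) is the step that needs a little care, since the same averaging only yields an upper bound on the minimum that can exceed $\lceil k/2\rceil$ when $n$ is odd; instead I pair each $J_j$ with $J_{j+\lfloor n/2\rfloor}$. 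These two windows together cover all of $\mathbb{Z}/n\mathbb{Z}$ and overlap in at most one position (exactly one when $n$ is odd, none when $n$ is even), so $g(j) + g(j+\lfloor n/2\rfloor) \le |S| + 1 = k+1$; hence $\min\bigl(g(j),\, g(j+\lfloor n/2\rfloor)\bigr) \le \lfloor (k+1)/2\rfloor = \lceil k/2\rceil$, which certainly bounds $\min_j g(j)$.

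The main obstacle, such as it is, is the floor/ceiling bookkeeping in Fact (iii) across the parity combinations of $n$ and $k$; the clean point is the identity $\lfloor (k+1)/2\rfloor = \lceil k/2\rceil$, which makes the pairing bound land exactly on the target value. One should also dispose of the degenerate case $n=1$ (where $J_j = [n]$ for all $j$ and $g \equiv k \le 1$) separately, though Facts (ii) and (iii) still pin $g$ to $\lceil k/2\rceil$ there. Finally, assembling (i)--(iii): if every $g(j) \ge \lceil k/2\rceil$, then (iii) forces equality for some $j$; otherwise the set $\{\, j : g(j) \ge \lceil k/2\rceil \,\}$ is a proper nonempty subset of the cycle, so it has a boundary edge, i.e.\ a cyclically adjacent pair with $g(j) \ge \lceil k/2\rceil$ and $g(j+1) \le \lceil k/2\rceil - 1$, and then $1$‑Lipschitzness forces $g(j) = \lceil k/2\rceil$. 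In either case $(J_j, \bar{J_j})$ is the desired consecutive partition.
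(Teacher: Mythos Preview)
Your proof is correct and follows essentially the same approach as the paper: both arguments define the sliding-window count $g(j)=|\alpha^{J_j}|$, observe it is $1$-Lipschitz along the cycle, exhibit values of $g$ on each side of $\lceil k/2\rceil$, and invoke the discrete intermediate value theorem. The only cosmetic difference is that the paper obtains both the upper and lower bounds from a single near-complementary pair $(J_1,J_{\lceil n/2\rceil+1})$ via a short case split on $m_1$, whereas you separate the two bounds, using an averaging argument for $\max_j g(j)\ge\lceil k/2\rceil$ and the same near-complementary pairing for $\min_j g(j)\le\lceil k/2\rceil$.
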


\begin{proof}
See Appendix~\ref{proof:concecutive partitions}.
\end{proof}

We use the term ``a truth table'' to denote a table of values of a function over $\F_q$ as in the binary case. 
For any function $f:\F_q^n\to\F_q$ and value $a\in\F_q$, we define a function $af:\F_q^n\to\F_q$ by $af(x) = a\cdot f(x)$. For a subset $J\subseteq[n]$,  we define a restriction $\tau$ on $J$ as a partial assignment to $J$, and we use $f|_\tau:\F_q^{|\bar{J}|}\to\F_q$ to denote the restricted function of which variables are partially assigned $\tau$ on $J$. We use $|\tau|$ to denote the size of a restriction $\tau$, that is, $|\tau| = |J|$.

For a finite set $S$, we write $x\gets_u S$ for a random sampling of $x$ according to the uniform distribution over $S$. 
In the subsequent discussions, we assume the basic facts about probability theory, especially, pairwise independence and the union bound. We will make extensive use of the following tail bound.  

\begin{fact}[Hoeffding inequality~\cite{10.2307/2282952}]
For real values $a,b\in\R$, let $X_1,\ldots,X_m$ be independent and identically distributed random variables with $X_i\in[a,b]$ and $\E[X_i]=\mu$ for each $i\in[m]$. Then for any $\epsilon > 0$, the following inequality holds:
\[\Pr\left[\left|\frac{1}{m}\sum_{i=1}^m X_i -\mu\right|>\epsilon\right] < 2e^{-\frac{2m\epsilon^2}{(b-a)^2}}.\]
\end{fact}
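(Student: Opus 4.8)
The plan is to prove this by the exponential moment (Chernoff) method combined with Hoeffding's lemma, and then to merge the two one-sided tail bounds by a union bound. Write $S_m = \sum_{i=1}^m X_i$. I would first handle the upper tail: fix a parameter $t>0$, and since $z\mapsto e^{tz}$ is increasing and nonnegative, Markov's inequality gives
\[\Pr[S_m - m\mu > m\epsilon] \le e^{-tm\epsilon}\,\E\!\left[e^{t(S_m-m\mu)}\right],\]
and by the independence of $X_1,\dots,X_m$ the expectation factorizes as $\prod_{i=1}^m \E[e^{t(X_i-\mu)}]$.

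The key estimate is Hoeffding's lemma: if $Y$ is a random variable with $\E[Y]=0$ supported in an interval $[c,d]$, then $\E[e^{tY}] \le e^{t^2(d-c)^2/8}$ for every $t$. Applying this to $Y = X_i-\mu$, which has mean $0$ and lies in $[a-\mu,\, b-\mu]$ --- an interval of width exactly $b-a$ --- yields $\E[e^{t(X_i-\mu)}]\le e^{t^2(b-a)^2/8}$ for each $i$, and hence
\[\Pr[S_m - m\mu > m\epsilon] \le \exp\!\left(-tm\epsilon + \tfrac{m t^2(b-a)^2}{8}\right).\]
The exponent is a convex quadratic in $t$ minimized at $t^\star = 4\epsilon/(b-a)^2>0$; substituting $t=t^\star$ gives $\Pr[S_m - m\mu > m\epsilon] \le e^{-2m\epsilon^2/(b-a)^2}$. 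Repeating the argument verbatim with each $X_i$ replaced by $-X_i$ (which has mean $-\mu$ and lies in $[-b,-a]$, again an interval of width $b-a$) gives the symmetric bound $\Pr[S_m - m\mu < -m\epsilon] \le e^{-2m\epsilon^2/(b-a)^2}$. Since $\{|m^{-1}S_m - \mu| > \epsilon\}$ is the disjoint union of these two events, summing the bounds produces the factor $2$ and the claimed inequality; strictness holds because at the optimal $t^\star\neq 0$ Hoeffding's lemma is a strict inequality unless every $X_i$ is almost surely constant, in which case the left-hand probability is $0$.

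The one non-routine ingredient, and the step I expect to take the most care over, is Hoeffding's lemma itself. I would prove it by analyzing $\psi(t) := \ln \E[e^{tY}]$: one checks $\psi(0)=0$ and $\psi'(0)=\E[Y]=0$, and a direct computation shows $\psi''(t) = \mathrm{Var}_{t}(Y)$, the variance of $Y$ under the exponentially tilted law $d\mathbb{P}_t \propto e^{tY}\,d\mathbb{P}$. Because $Y$, and hence $Y$ under any such tilt, is supported in an interval of length $d-c$, its variance is at most $(d-c)^2/4$ (the two-point distribution on the endpoints being extremal), so $\psi''(t)\le (d-c)^2/4$ for all $t$. Taylor's theorem with integral remainder, using $\psi''\ge 0$ to handle the sign of $t$, then gives $\psi(t) = \int_0^t (t-s)\psi''(s)\,ds \le t^2(d-c)^2/8$, i.e. $\E[e^{tY}] = e^{\psi(t)} \le e^{t^2(d-c)^2/8}$, which is exactly Hoeffding's lemma and closes the argument.
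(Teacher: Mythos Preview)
Your proof is correct and follows the standard Chernoff--Hoeffding route (Markov on the moment generating function, factorization by independence, Hoeffding's lemma via the tilted-variance bound, optimization in $t$, then a union bound for the two tails). There is nothing to compare against, however: in the paper this statement is stated as a \emph{Fact} with a citation to Hoeffding's original article and is not proved. So you have supplied a full argument where the paper simply quotes the result.
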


\subsection{Fourier Analysis}
We introduce some basics of Fourier analysis over finite fields. For further details, see~\cite{O'Donnell:2014:ABF:2683783,5671331}.
For each $a\in\F_q$, let $e(a):= e^{\frac{2\pi i}{p} \Tr(a)}\in\C$. For $a,b\in\F_q$, it is easy to see that $e(a+b) = e(a)e(b)$ and $e(-a) = \ol{e(a)}$. For any two functions $f,g:\F_q^n\to \C$, we define their inner product by $\langle f,g \rangle = \E_x[f(x)\ol{g(x)}]$. Then a family $\{e(\chi_\alpha)\}_{\alpha\in\F_q^n}$ of $q^n$ functions forms an orthonormal basis, that is, $\langle e(\chi_\alpha), e(\chi_\beta)\rangle = 1$ if $\alpha = \beta$, otherwise, $\langle e(\chi_\alpha), e(\chi_\beta)\rangle =0$. Therefore, for any function $f:\F_q^n\to\C$ has a unique Fourier expansion form as $f(x) = \sum_{\alpha} \fc{f}(\alpha)e(\chi_\alpha(x))$, where $\fc{f}(\alpha)$ is a Fourier coefficient given by $\fc{f}(\alpha) = \langle f, e(\chi_\alpha)\rangle$.

For a function $f:\F_q^n\to\F_q$ and $\alpha\in\F_q$, we also define its Fourier coefficient on $\alpha$ by $\fc{f}(\alpha) = \langle e(f), e(\chi_\alpha)\rangle$ (we use the same notation as the above). Let us remark that, not as complex-valued functions, $f$ does not always have the unique Fourier form, because the value $f(x) \in \F_q$ is mapped onto $\Tr(f(x)) \in \F_p$ in the definition of $e(\cdot)$, and there exist different functions $f,g:\F_q^n\to\F_q$ which satisfies $\Tr(f) = \Tr(g)$. Our algorithm will extensively use the above analysis, more specifically, it will map the target function $f$ to $\Tr(f)$ and use the Fourier analysis over $\F_p$. However, in the setting of learning juntas, some relevant coordinates for $f$ may turn to be irrelevant for $\Tr(f)$.  This lack of information will be overcome by considering $p^{\ell-1}$ functions $c_1f,\ldots,c_{p^{\ell-1} }f$ simultaneously for distinct elements $c_1,\ldots, c_{p^{\ell-1}} \in\F_q\setminus\{0\}$, which is indicated by the following simple lemma. Note that, for any $c\in\F_q$, we can easily simulate the example oracle $\mathbb{O}(cf)$ from $\mathbb{O}(f)$ by multiplying each label by the value $c$.
\begin{lemma}\label{lemma:simuljunta}
For any function $f:\F_q^n\to\F_q$, distinct elements $c_1,\ldots, c_{p^{\ell-1}} \in\F_q\setminus\{0\}$, and relevant coordinate $i\in[n]$ for $f$, there exists $j\in[p^{\ell-1}]$ such that $i$ is also relevant for $\Tr(c_jf):\F_q^n\to\F_p$.
\end{lemma}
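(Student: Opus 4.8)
The plan is to reduce the statement to a counting fact about the trace map. Since $i$ is relevant for $f$, fix points $x,y\in\F_q^n$ that agree on every coordinate except $i$ and satisfy $f(x)\neq f(y)$, and set $\delta := f(x)-f(y)\in\F_q\setminus\{0\}$. For a fixed nonzero $c\in\F_q$, the coordinate $i$ is relevant for $\Tr(cf)$ as soon as $\Tr(cf(x))\neq\Tr(cf(y))$; by additivity of $\Tr$ this is exactly the condition $\Tr(c\delta)\neq 0$. Hence it suffices to show that among the $p^{\ell-1}$ distinct nonzero elements $c_1,\dots,c_{p^{\ell-1}}$ there is at least one with $\Tr(c_j\delta)\neq 0$.

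The next step is to bound the ``bad'' set $B_\delta := \{c\in\F_q\setminus\{0\}:\Tr(c\delta)=0\}$. Because $\Tr:\F_q\to\F_p$ is $\F_p$-linear and takes each value in $\F_p$ equally often (as recalled in Section~\ref{section:Prelimi}), its kernel $K:=\{a\in\F_q:\Tr(a)=0\}$ has exactly $q/p=p^{\ell-1}$ elements. Since multiplication by $\delta^{-1}$ is a bijection of $\F_q$, the set $\{c\in\F_q:\Tr(c\delta)=0\}=\delta^{-1}K$ also has $p^{\ell-1}$ elements, and deleting $c=0$ from it leaves $|B_\delta| = p^{\ell-1}-1$.

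Finally, $c_1,\dots,c_{p^{\ell-1}}$ are $p^{\ell-1}$ distinct members of $\F_q\setminus\{0\}$ whereas $|B_\delta| = p^{\ell-1}-1$, so by pigeonhole at least one $c_j$ lies outside $B_\delta$, i.e.\ $\Tr(c_j\delta)\neq 0$. For that $j$ we get $\Tr(c_jf(x))\neq\Tr(c_jf(y))$ with $x,y$ differing only at coordinate $i$, so $i$ is relevant for $\Tr(c_jf)$, as claimed.

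The argument is elementary, and the only point that needs care is the exact cardinality $|K|=p^{\ell-1}$ of the trace kernel, which is precisely the ``$\Tr(\cdot)$ takes each value equally often'' property stated in the preliminaries; the rest is additivity of $\Tr$, the bijectivity of scaling by $\delta^{-1}$, and a one-line pigeonhole count, so no genuine obstacle is anticipated.
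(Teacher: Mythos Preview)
Your proof is correct and follows essentially the same approach as the paper: both fix $x,y$ witnessing relevance, set $v=f(x)-f(y)\neq 0$, and use the fact that the trace kernel contains exactly $p^{\ell-1}-1$ nonzero elements together with pigeonhole to find some $c_j$ with $\Tr(c_jv)\neq 0$. The only cosmetic difference is that the paper counts the distinct nonzero values $c_1v,\dots,c_{p^{\ell-1}}v$ directly, whereas you rephrase the same count via the bad set $B_\delta=\delta^{-1}K\setminus\{0\}$.
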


\begin{proof}
By the definition of relevant coordinates, there exists $x,y\in\F_q^n$ such that $x$ and $y$ differ only at the coordinate $i$ and $v := f(x) - f(y) \neq 0$. Since $c_1,\ldots, c_{p^{\ell-1}}$ are distinct and nonzero, the $p^{\ell-1}$ values $c_1v,\ldots, c_{p^{\ell-1}}v$ are also distinct and nonzero. The trace function $\Tr(\cdot)$ takes each value exactly $p^{\ell-1}$ times and $\Tr(0) = 0$, thus there exists $j\in[p^{\ell-1}]$ satisfying $\Tr(c_jv) \neq 0$, which implies
\[\Tr(c_j(f(x) - f(y))) = \Tr(c_jf(x))-\Tr(c_jf(y)) \neq 0.\]
Therefore, $i$ is also relevant for the function $\Tr(c_jf)$.
\end{proof}

We also introduce the following fact which plays a crucial role in learning juntas.
\begin{fact}\label{fact:fcnonzero to relevant}
If a function $f:\F_q^n\to\F_q$ satisfies that $\fc{f}(\alpha)\neq 0$ for some $\alpha\in\F_q^n$, then all coordinates $i\in[n]$ with $\alpha_i \neq 0$ are relevant.
\end{fact}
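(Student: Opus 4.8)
The plan is to prove the contrapositive: if some coordinate $i$ with $\alpha_i\neq 0$ is \emph{ir}relevant for $f$, then $\fc{f}(\alpha)=0$. So assume $i$ is irrelevant, which by definition means that the value of $f(x)$ is unchanged when only the $i$-th coordinate of $x$ is altered; equivalently, $e(f(x))$ does not depend on $x_i$. The underlying fact we will exploit is the standard one that a nontrivial additive character of $\F_q$ has mean zero, together with the multiplicativity $e(a+b)=e(a)e(b)$ of the map $e(\cdot)$.

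First I would write out $\fc{f}(\alpha)=\langle e(f),e(\chi_\alpha)\rangle=\E_x[e(f(x))\,\ol{e(\chi_\alpha(x))}]$ as an average over $x\gets_u\F_q^n$, and isolate the $i$-th coordinate. Using $\chi_\alpha(x)=\alpha_i x_i+\chi_{\alpha^J}(x)$ with $J=[n]\setminus\{i\}$, and $\ol{e(a+b)}=\ol{e(a)}\,\ol{e(b)}$, the summand factors; since $e(f(x))$ is constant in $x_i$ we may pull the $x_i$-average inside to obtain
\[
\fc{f}(\alpha)=\E_{x\in\F_q^J}\!\left[\,e(f(x))\,\ol{e(\chi_{\alpha^J}(x))}\cdot\frac{1}{q}\sum_{t\in\F_q}\ol{e(\alpha_i t)}\,\right].
\]
Because $\alpha_i\neq 0$, the map $t\mapsto\alpha_i t$ is a bijection of $\F_q$, so $\sum_{t\in\F_q}\ol{e(\alpha_i t)}=\sum_{a\in\F_q}e(-a)=\sum_{a\in\F_q}e(a)$. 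Since $\Tr(\cdot)$ takes each value of $\F_p$ exactly $p^{\ell-1}$ times, this last sum equals $p^{\ell-1}\sum_{s=0}^{p-1}e^{2\pi i s/p}=0$. Hence the inner average vanishes, giving $\fc{f}(\alpha)=0$, which is exactly the contrapositive we wanted, and applying it to every $i$ with $\alpha_i\neq 0$ finishes the proof.

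There is essentially no obstacle here: the only thing to be careful about is that the factorization of $\chi_\alpha$ and the independence of $e(f(x))$ from $x_i$ are invoked correctly, so that the $q$-fold average over $x_i$ genuinely decouples as a character sum. The content of the fact is simply that a function not depending on coordinate $i$ is orthogonal to every character $e(\chi_\alpha)$ with $\alpha_i\neq 0$.
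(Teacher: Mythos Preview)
Your proof is correct and follows exactly the same route as the paper: argue by contraposition, factor $\chi_\alpha(x)=\alpha_i x_i+\chi_{\alpha'}(x)$, use that $e(f(x))$ is independent of $x_i$, and kill the whole expression via the vanishing character sum $\E_{x_i}[e(-\alpha_i x_i)]=0$. The paper's version is terser (it writes the factorization in one line), while you spell out why $\sum_{t\in\F_q}e(\alpha_i t)=0$ via the trace; but the argument is the same.
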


\begin{proof}
By contraposition. If there exists an irrelevant coordinate $i\in[n]$ such that $\alpha_i \neq 0$, 
\[\fc{f}(\alpha) = \E[e(f(x)-\chi_{\alpha}(x))] = \E[e(f(x)-\chi_{\alpha'}(x))]\cdot \E[e(-\alpha_ix_i)] = 0,\]
where $\alpha' = (\alpha_1,\ldots,\alpha_{i-1},0,\alpha_{i+1},\ldots,\alpha_n)$.
\end{proof}

\subsection{\boldmath{$(a,A)$}-Projection}

We define a notion of $(a,A)$-projection which is a generalization of $A$-projection in $\F_2$ by~\cite{4031391}.

\begin{definition}[\boldmath{$(a,A)$}-projection]
For $f:\F_q^n\to\F_q$, $A\in\F_q^{m\times n}$, and $a\in\F_q$, we define $f^a_A:\F_q^n\to\C$ by
\begin{align*}
f^a_A(x) = \sum_{\alpha:A\alpha = a^m}\fc{af}(\alpha)e(\chi_\alpha(x))
= \begin{cases}
\sum_{\alpha:A\alpha = 1^m}\fc{af}(a\alpha)e(a\chi_\alpha(x)) &(\text{if } a\neq0)\\
1 &(\text{if } a=0)
\end{cases}\end{align*}
\end{definition}

\begin{lemma}\label{lemma:A-projection}
For $A\in\F_q^{m\times n}$ and $a\in\F_q$,
\begin{equation}\label{eq:A pro}
f^a_A(x) = \E_{z\sim\F_q^m}[e(af(x+A^Tz))\ol{e(\chi_{a^m}(z))}].
\end{equation}
Moreover, if an example and its label are given by $(x,b) = (y-A^Tz, f(y)-\sum z_i)$ for $y\gets_u\F_q^n$ and $z\gets_u\F_q^m$, then for any $x\in\F_q^n$, $\E_{b_x}[e(ab_x)] = f^a_A(x)$,
where $b_x$ denotes a random variable according to the distribution of $b$ conditioned on the example $x$.  
\end{lemma}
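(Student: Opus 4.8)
The plan is to derive both parts from the Fourier expansion of the complex-valued function $e(af)$ together with orthonormality of the characters, treating equation~\eqref{eq:A pro} and then the ``moreover'' assertion in turn.

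First I would prove the identity~\eqref{eq:A pro}. When $a=0$ both sides are identically $1$, so assume $a\neq 0$. Since $e(af):\F_q^n\to\C$ is an ordinary complex-valued function it has a genuine (unique) Fourier expansion $e(af(w))=\sum_{\alpha}\fc{af}(\alpha)e(\chi_\alpha(w))$; note the earlier non-uniqueness caveat about $f$ does not apply here, because we expand $e(af)$, not $f$ itself. Substituting $w=x+A^Tz$ and using that $\chi_\alpha$ is $\F_q$-linear and $e(\cdot)$ is multiplicative gives $e(\chi_\alpha(x+A^Tz))=e(\chi_\alpha(x))\,e(\chi_\alpha(A^Tz))$, and the bilinear identity $\chi_\alpha(A^Tz)=\alpha^TA^Tz=(A\alpha)^Tz=\chi_{A\alpha}(z)$ moves the $z$-dependence into the index $A\alpha$. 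Taking $\E_{z\sim\F_q^m}[\,\cdot\;\ol{e(\chi_{a^m}(z))}]$ and applying orthonormality of $\{e(\chi_\beta)\}_{\beta\in\F_q^m}$ over $\F_q^m$, every term with $A\alpha\neq a^m$ vanishes, leaving exactly $\sum_{\alpha:A\alpha=a^m}\fc{af}(\alpha)e(\chi_\alpha(x))=f^a_A(x)$, as desired.

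For the ``moreover'' part I would first pin down the conditional distribution of $z$ given $x$. The map $(y,z)\mapsto(y-A^Tz,\,z)$ is a bijection of $\F_q^n\times\F_q^m$ onto itself, so when $(y,z)$ is uniform the pair $(x,z)=(y-A^Tz,z)$ is uniform on $\F_q^n\times\F_q^m$; in particular $x$ is uniform and, conditioned on $x$, the variable $z$ is still uniform on $\F_q^m$. Next, writing $y=x+A^Tz$, the label is $b=f(x+A^Tz)-\sum_i z_i$, and since $a\sum_i z_i=\chi_{a^m}(z)$, multiplicativity of $e$ together with $e(-c)=\ol{e(c)}$ gives $e(ab)=e(af(x+A^Tz))\,\ol{e(\chi_{a^m}(z))}$. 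Hence $\E_{b_x}[e(ab_x)]=\E_{z\sim\F_q^m}[e(af(x+A^Tz))\ol{e(\chi_{a^m}(z))}]$, which equals $f^a_A(x)$ by~\eqref{eq:A pro}.

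The only genuinely delicate point is the conditional-uniformity step: one must verify that conditioning on the \emph{observed} vector $x$ does not bias $z$, which is precisely what the bijection above guarantees; everything else is bookkeeping with $e(a+b)=e(a)e(b)$, $e(-a)=\ol{e(a)}$, and orthonormality. If one wants the statement for a randomized target $f$ (as in LDME), the same argument goes through after reading the expansion in the second paragraph as the Fourier expansion of the averaged function $w\mapsto\E_f[e(af(w))]$, whose coefficients are exactly $\fc{af}(\alpha)=\E_{x,f}[e(af(x))\ol{e(\chi_\alpha(x))}]$; the internal randomness of $f$ is independent of the draw of $(y,z)$, so it can be absorbed into the expectation without affecting any step.
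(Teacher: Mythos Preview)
Your proof is correct and essentially the same as the paper's: for~\eqref{eq:A pro} the paper computes the Fourier coefficients of the right-hand side and matches them to the definition of $f^a_A$, whereas you expand $e(af)$ first and then average over $z$, but both arguments hinge on the identity $\chi_\alpha(A^Tz)=\chi_{A\alpha}(z)$ and orthonormality. For the ``moreover'' part your bijection argument is exactly the paper's observation that for each pair $(x,z)$ there is a unique $y_z=x+A^Tz$, so the conditional expectation over $b_x$ is just the uniform average over $z\in\F_q^m$.
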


\begin{proof}
It is essentially the same as the proof in~\cite{4031391}. For completeness, see Appendix~\ref{proof:A-projection}.
\end{proof}

\subsection{Statistical Distance and Character Distance}

For our proofs, we introduce the following two distances about random variables taking values in $\F_q$, which was introduced first in~\cite{Bogdanov:2010:PBP:1958033.1958049}.

\begin{definition}[statistical/character distance]
For random variables $X,X'$ taking values in $\F_q$, we define their statistical distance $SD(X,X')$ by
\[SD(X,X') = \frac{1}{2}\sum_{x\in\F_q}|\Pr[X=x]-\Pr[X'=x]|,\]
and we also define their character distance $CD(X,X')$ by
\[CD(X,X') = \max_{a\in\F_q}|\E[e(aX)]-\E[e(aX')]|.\] 
\end{definition}
In the case where $q$ is not prime, we adopt a different definition for $e(\cdot)$ from one in the original paper~\cite{Bogdanov:2010:PBP:1958033.1958049}. However, it is easily checked that the following fact holds from exactly the same argument.
\begin{fact}[{\cite[Claim 33]{Bogdanov:2010:PBP:1958033.1958049}}]\label{fact:sdcd}
For any random variables $X,X'$ taking values in $\F_q$,
\[CD(X,X') \le 2\cdot SD(X,X')\le \sqrt{q-1}\cdot CD(X,X').\]
In particular, $SD(X,X') = 0$ if and only if $CD(X,X')=0$.
\end{fact}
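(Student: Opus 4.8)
Write $\delta(x) = \Pr[X=x]-\Pr[X'=x]$ for $x\in\F_q$, so that $2\,SD(X,X') = \sum_{x\in\F_q}|\delta(x)| = \|\delta\|_1$ and $CD(X,X') = \max_{a\in\F_q}|\fc{\delta}(a)|$, where $\fc{\delta}(a) := \sum_{x\in\F_q}\delta(x)e(ax)$. The structural fact to keep in mind throughout is that $\delta$ sums to zero, i.e. $\fc{\delta}(0) = \sum_x \delta(x) = 1-1 = 0$. The statement then amounts to two inequalities plus the (immediate) ``in particular'' consequence.

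For the left inequality $CD(X,X')\le 2\,SD(X,X')$, I would simply estimate, for an arbitrary $a\in\F_q$,
\[|\fc{\delta}(a)| = \Bigl|\sum_{x\in\F_q}e(ax)\,\delta(x)\Bigr| \le \sum_{x\in\F_q}|e(ax)|\,|\delta(x)| = \sum_{x\in\F_q}|\delta(x)| = 2\,SD(X,X'),\]
using $|e(ax)|=1$, and then take the maximum over $a$.

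For the right inequality $2\,SD(X,X')\le \sqrt{q-1}\cdot CD(X,X')$, the plan is to route through the $\ell_2$ norm. First, Cauchy--Schwarz against the all-ones vector gives $\|\delta\|_1 \le \sqrt{q}\,\|\delta\|_2$. Second, I would apply Parseval for the Fourier expansion over $\F_q$ with respect to the additive characters $x\mapsto e(ax)$; these $q$ functions are orthogonal with squared norm $q$ (the orthogonality $\sum_{x\in\F_q}e(ax)\ol{e(bx)} = q$ if $a=b$ and $0$ otherwise is exactly the $n=1$ case of the orthonormality of $\{e(\chi_\alpha)\}$ recorded in Section~\ref{section:Prelimi}, which itself follows from $\Tr(\cdot)$ hitting every element of $\F_p$ equally often when $q=p^\ell$), so $\|\delta\|_2^2 = \tfrac1q\sum_{a\in\F_q}|\fc{\delta}(a)|^2$. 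Now the term $a=0$ vanishes because $\fc{\delta}(0)=0$, leaving $\|\delta\|_2^2 = \tfrac1q\sum_{a\neq 0}|\fc{\delta}(a)|^2 \le \tfrac{q-1}{q}\,CD(X,X')^2$. Combining the two bounds yields $2\,SD(X,X') = \|\delta\|_1 \le \sqrt{q}\cdot\sqrt{\tfrac{q-1}{q}}\cdot CD(X,X') = \sqrt{q-1}\cdot CD(X,X')$, and the final equivalence $SD=0 \iff CD=0$ drops out of the two-sided estimate.

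The main (and essentially only) subtlety is obtaining the sharp constant $\sqrt{q-1}$ rather than the lazy $\sqrt{q}$: this depends entirely on exploiting $\sum_x\delta(x)=0$ to discard the zero-frequency Fourier coefficient, so that the $\ell_\infty\to\ell_2$ passage ranges over only $q-1$ nonzero frequencies. A minor point of care, since we define $e(\cdot)$ via the trace map (unlike the prime-modulus setting of~\cite{Bogdanov:2010:PBP:1958033.1958049}), is to confirm that the needed character orthogonality over $\F_{p^\ell}$ still holds; this is already guaranteed by the preliminaries, so the argument of~\cite[Claim 33]{Bogdanov:2010:PBP:1958033.1958049} carries over verbatim.
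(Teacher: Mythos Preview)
Your proposal is correct and is precisely the standard argument that the paper defers to: the paper does not supply its own proof but cites \cite[Claim~33]{Bogdanov:2010:PBP:1958033.1958049} and remarks that the same argument goes through with the trace-based definition of $e(\cdot)$. Your triangle-inequality bound for the left inequality and your Cauchy--Schwarz--Parseval chain (crucially dropping the $a=0$ term via $\fc{\delta}(0)=0$) for the right inequality are exactly that argument, and your verification that the additive characters $x\mapsto e(ax)$ remain orthogonal when $q=p^\ell$ addresses the one point the paper flags.
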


\section{Reduction from Learning Juntas to LDME}\label{section:Junta to LWE}

In this paper, for simplicity, we assume the following computational model:
\begin{itemize}
	\item  A learning algorithm can uniformly select an element in $\F_q$ with probability 1 in constant steps. In fact, a usual randomized model with binary coins may fail in selecting such random elements with exponentially small probability, but we can deal with this probability as a general error probability (i.e., confidence error). For the same reason, we allow algorithms to flip a biased coin which lands heads up with a rational probability (of the polynomial-time computable denominator).
	\item A learning algorithm with an example oracle $\mathbb{O}(f)$, where $f:\F_q^n \to \F_q$ is $k$-junta, can simulate an oracle $\mathbb{O}(f|_\tau)$ w.r.t. any restriction $\tau$ of the size $|\tau|\le k$. In fact, this simulation is done by taking several examples until getting an example consistent with $\tau$. Since the probability that an example consistent with $\tau$ is sampled is at least $q^{-k}$, the failure probability becomes exponentially small by taking $\poly(q^{k})$ examples. We can also deal with this error probability as a general confidence error, and the additional running time is at most $\poly(n,q^{k})$.
\end{itemize}

\subsection{Overview of the Reduction}\label{section:overview reduction1}

Our learning algorithm (\sr{main1}) has two phases, a checking phase (lines~\ref{line:main1 check} and~\ref{line:main1 halt}) and a detection phase (line~\ref{line:main1 detection}), and repeats them alternately as the MOS algorithm~\cite{Mossel:2004:LFK:1039323.1039330}. The algorithm starts the checking phase with a set $R$ empty. In the following steps, the relevant coordinates found by the algorithm will be put in $R$. In the checking phase, the algorithm verifies whether $R$ contains all relevant coordinates of the target function $f$ by examining that restricted functions $f|_\tau$ are constant for all restrictions $\tau$ on $R$. If $R$ contains all relevant coordinates, then the algorithm outputs $R$ and halts, otherwise moves on to the detection phase. In the detection phase, the algorithm will find at least one relevant coordinate, add them to $R$, and will move on to the checking phase. Since the algorithm finds at least one relevant coordinate in each loop, the number of repetitions is at most $k$.

In the detection phase, we reduce the task of finding relevant coordinates to LDME in the subroutine \sr{addRC} by $(a,A)$-projection. In our reduction, the target linear function $\chi_\alpha$ satisfies that $\fc{cf}(a\alpha)\neq 0$ for some $c,a\in\F_q\setminus\{0\}$. Therefore, if the algorithm for LDME finds $\alpha$ (up to constant factor), then the learning algorithm can find at least one relevant coordinate $i$ such that $\alpha_i\neq 0$ by Fact~\ref{fact:fcnonzero to relevant}.

\subsection{Algorithms and Analysis}

First we introduce two simple subroutines.  For the proofs of their correctness (i.e., Lemmas~\ref{lemma:const} and~\ref{lemma:checkRC}), see Appendix~\ref{proofs in section juntas to lwe}.

Algorithm~\ref{alg:const} checks whether the target function is constant or not by simply examining that the collected examples take the same value. As mentioned in Section~\ref{section:overview reduction1}, we will use this subroutine to determine the end of learning in the checking phase. 

Algorithm~\ref{alg:checkRC} checks whether the given $\alpha\in\F_q^n$ has nonzero entry at an irrelevant coordinate. Our learning algorithm~\sr{main1} may find an undesirable candidate $\alpha$ in the detection phase, thus we must check whether the candidate $\alpha$ consists of only a part of relevant coordinates by this subroutine not to add any irrelevant coordinate to the container $R$ for relevant coordinates.

\begin{algorithm}
\caption{Check Constant (\sr{const})}\label{alg:const}
\begin{algorithmic}[1]
\Require{$n\in\N$, $k\in\N$, $\delta\in(0,1)$, $\mathbb{O}(f)$, where $f:\F_q^n\to\F_q$ is $k$-junta}
\Ensure{$a\in \F_q$ if $f \equiv a$ (constant), otherwise $\bot$} 
\State{$m := \lceil q^k\ln \frac{2}{\delta} \rceil$}
\State{$(x^{(1)},b^{(1)}),\ldots, (x^{(m)},b^{(m)}) \gets \mathbb{O}(f)$}
\State{\sr{if} $b^{(i)} = a$ for each $i\in[m]$ \sr{return}($a$) \sr{otherwise} \sr{return}($\bot$)}
\end{algorithmic}
\end{algorithm}

\begin{algorithm}
\caption{Check Relevant Coordinates (\sr{checkRC})}\label{alg:checkRC}
\begin{algorithmic}[1]
\Require{$n\in\N$, $k\in\N$, $\alpha\in\F_q^n$, $\delta\in(0,1)$, $\mathbb{O}(f)$, where $f:\F_q^n\to\F_q$ is $k$-junta}
\Ensure{$\fc{f}(\alpha) \neq 0 \Rightarrow$ \sr{true}; $\alpha_i \neq 0$ for some irrelevant $i\in[n] \Rightarrow$ \sr{false}} 
\State{$m := \lceil 2q^{2k}\ln \frac{p}{\delta} \rceil$}
\ForAll{$a\in\F_p$}
\State{$(x^{(1)},b^{(1)}),\ldots, (x^{(m)},b^{(m)}) \gets \mathbb{O}(f)$}
\State{\sr{if} $\sum_i\onen\{\Tr(b^{(i)}-\chi_\alpha(x^{(i)}))=a\} \ge (\frac{1}{p}+\frac{1}{2q^{k}})m$ \sr{then} \sr{return}(\sr{true})}\label{line:checkRC condtion}
\EndFor
\State{\sr{return}(\sr{false})}
\end{algorithmic}
\end{algorithm}

\begin{lemma}\label{lemma:const}
For any input $(n,k,\delta,\mathbb{O}(f))$, \sr{const} outputs $a\in\F_q$ if $f\equiv a$, otherwise $\bot$ with probability at least $1-\delta$.
\end{lemma}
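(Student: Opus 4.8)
The plan is to split on whether $f$ is constant. If $f\equiv a$, then every label returned by $\mathbb{O}(f)$ equals $a$, so \sr{const} deterministically returns $a$; there is nothing to prove. The whole content is the non-constant case: there \sr{const} returns $\bot$ unless all $m$ sampled labels $b^{(1)},\dots,b^{(m)}$ happen to coincide, so I need to bound the probability of that bad event by $\delta$.

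For the bad-event bound I would first isolate a structural fact about non-constant $k$-juntas. Since $f$ depends on a coordinate set $S$ with $|S|\le k$, write $f(x)=g(x_S)$ for some $g:\F_q^S\to\F_q$, which is non-constant because $f$ is. Then every fiber $g^{-1}(a)$ omits at least one point of $\F_q^S$, so $\Pr_{x\gets_u\F_q^n}[f(x)=a]=|g^{-1}(a)|/q^{|S|}\le 1-q^{-|S|}\le 1-q^{-k}$ for every $a\in\F_q$. Now condition on the first label $b^{(1)}=a$: since the examples are i.i.d., the probability that each of the remaining $m-1$ labels also equals $a$ is at most $(1-q^{-k})^{m-1}$, and as this bound is uniform in $a$, the probability that all $m$ labels coincide is at most $(1-q^{-k})^{m-1}$.

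It then remains to verify that $m=\lceil q^k\ln(2/\delta)\rceil$ makes $(1-q^{-k})^{m-1}\le e^{-(m-1)q^{-k}}\le\delta$, i.e.\ $m\ge q^k\ln(1/\delta)+1$. This holds because $m\ge q^k\ln(2/\delta)=q^k\ln(1/\delta)+q^k\ln 2\ge q^k\ln(1/\delta)+1$, using $q^k\ge 2$ (note the non-constant case forces $k\ge 1$, since $k=0$ makes $f$ constant) and $2\ln 2>1$. Combining the two cases yields the claimed $1-\delta$ success probability.

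There is no genuine obstacle here; the only mildly delicate point is bookkeeping the off-by-one in the exponent together with the $\ln 2$ slack in the choice of $m$ so that the constant works out, which is precisely why I would package the junta-fiber estimate $\Pr[f(x)=a]\le 1-q^{-k}$ as the key step and then do the rest as a short computation.
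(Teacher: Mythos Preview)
Your proof is correct and follows essentially the same approach as the paper: both use that a non-constant $k$-junta has each value with probability in $[q^{-k},1-q^{-k}]$, then bound $(1-q^{-k})^{\Theta(m)}$ by $\delta$. The only cosmetic difference is that the paper fixes two distinct values $v_1,v_2$ in the range of $f$, bounds the probability that either is absent from the sample by $(1-q^{-k})^m\le\delta/2$, and applies a union bound, whereas you bound $\Pr[f(x)=a]\le 1-q^{-k}$ uniformly and directly bound the chance all labels coincide; your packaging avoids the union bound at the cost of the off-by-one bookkeeping you already handled.
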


\begin{proof}
See Appendix~\ref{proofs for subroutines:const}.
\end{proof}

\begin{lemma}\label{lemma:checkRC}
For any input $(n,k,\alpha,\delta,\mathbb{O}(f))$, if $\fc{f}(\alpha) \neq 0$, then \sr{checkRC} outputs \sr{true} with probability at least $1-\delta$. Otherwise if $\alpha_i \neq 0$ for an irrelevant coordinate $i\in[n]$, \sr{checkRC} outputs \sr{false} with probability at least $1-\delta$.
\end{lemma}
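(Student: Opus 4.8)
The plan is to analyze the two cases separately, using the Hoeffding inequality on each of the $p$ counters maintained by \sr{checkRC}. Fix $a \in \F_p$ and let $Y^{(i)}_a = \onen\{\Tr(b^{(i)} - \chi_\alpha(x^{(i)})) = a\}$ for the $m$ fresh examples drawn in that iteration; these are i.i.d.\ Bernoulli variables with mean $\mu_a := \Pr_x[\Tr(f(x) - \chi_\alpha(x)) = a]$. Note that $\sum_{a\in\F_p}\mu_a\, e^{2\pi i a/p}$ is exactly the Fourier coefficient $\fc{f}(\alpha) = \E_x[e(f(x) - \chi_\alpha(x))]$ after grouping by the value of $\Tr(f-\chi_\alpha)$, so the vector $(\mu_a)_a$ determines whether $\fc{f}(\alpha)$ is zero or not. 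The threshold in line~\ref{line:checkRC condtion} is $(\tfrac1p + \tfrac{1}{2q^k})m$, i.e.\ we declare \sr{true} as soon as some empirical frequency exceeds $\tfrac1p + \tfrac{1}{2q^k}$.

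\textbf{Case $\fc{f}(\alpha)\neq 0$.} First I would show the $\mu_a$ cannot all equal $1/p$: if they did, $\fc{f}(\alpha) = \tfrac1p\sum_a e^{2\pi i a/p} = 0$, a contradiction. Since the $\mu_a$ are rational with denominator dividing $q^n$ \emph{but} more usefully, since $f$ and $\chi_\alpha$ are $k$-juntas the relevant distribution lives on $\le q^k$ points, so each $\mu_a$ is a multiple of $q^{-k}$; hence some $\mu_{a^*} \ge \tfrac1p + q^{-k} > \tfrac1p + \tfrac{1}{2q^k} + \tfrac{1}{2q^k}$. Wait---more carefully: if all $\mu_a \le 1/p$ then all equal $1/p$ (they sum to $1$), which we excluded; so some $\mu_{a^*} > 1/p$, and being a multiple of $q^{-k}$ it satisfies $\mu_{a^*} \ge \tfrac1p + q^{-k}$. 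Applying Hoeffding with $\epsilon = \tfrac{1}{2q^k}$ to the counter for $a^*$ gives that the empirical frequency is $\ge \mu_{a^*} - \tfrac{1}{2q^k} \ge \tfrac1p + \tfrac{1}{2q^k}$, so the test fires, except with probability $< 2e^{-2m/(2q^k)^2} = 2e^{-m/(2q^{2k})} \le \delta/p \le \delta$, by the choice $m = \lceil 2q^{2k}\ln(p/\delta)\rceil$.

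\textbf{Case $\alpha_i\neq 0$ for some irrelevant $i$.} By the computation in the proof of Fact~\ref{fact:fcnonzero to relevant}, $\fc{f}(\beta) = 0$ for \emph{every} $\beta$ with $\beta_i = \alpha_i \neq 0$; summing the relation $0 = \E[e(f - \chi_\alpha)]\cdot\E[e(-\alpha_i x_i)]$-style identity over all shifts $\alpha_i \mapsto c$ with $c\in\F_q\setminus\{0\}$ one gets that the conditional distribution of $\Tr(f - \chi_\alpha)$ given the other coordinates is \emph{uniform} on $\F_p$ once we average over $x_i$ --- more precisely, for each fixed $a$, $\mu_a = \Pr_x[\Tr(f(x)-\chi_\alpha(x)) = a]$; since $x_i$ is free and $\alpha_i x_i$ ranges uniformly over $\F_q$ as $x_i$ does, $\Tr(\alpha_i x_i)$ is uniform on $\F_p$ independent of everything else, forcing $\mu_a = 1/p$ for all $a$. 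Then for each of the $p$ iterations, Hoeffding with $\epsilon = \tfrac{1}{2q^k}$ gives the empirical frequency $< \tfrac1p + \tfrac{1}{2q^k}$ except with probability $< 2e^{-m/(2q^{2k})} \le \delta/p$; a union bound over the $p$ values of $a$ shows no counter ever fires, so \sr{checkRC} returns \sr{false}, with total failure probability $\le \delta$.

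The main obstacle I anticipate is pinning down the correct discreteness/structure argument in the first case: one must argue not merely that $\fc{f}(\alpha)\neq 0$ but that this forces some \emph{real} value $\mu_{a^*}$ to exceed $1/p$ by at least the resolution $q^{-k}$, so that the fixed additive gap $\tfrac{1}{2q^k}$ in the threshold is genuinely detectable by $m = \Theta(q^{2k}\log(p/\delta))$ samples. The key facts are that both $f$ and $\chi_\alpha$ depend on at most $k$ coordinates each, hence $f - \chi_\alpha$ (and its trace) depends on at most $2k$ coordinates --- so one may actually need the denominator $q^{2k}$ rather than $q^k$ in the resolution bound, which is exactly why $m$ is chosen with a $q^{2k}$ in the exponent; the rest is bookkeeping with Hoeffding and the union bound. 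The second case is easier: once one observes that a free variable entering $\chi_\alpha$ nontrivially makes $\Tr(f-\chi_\alpha)$ exactly uniform, all $p$ counters have true mean $1/p$ and the concentration argument is routine.
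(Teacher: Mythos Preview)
Your overall approach matches the paper's: analyze the two cases via Hoeffding on the $p$ empirical frequencies, using in Case~1 a discreteness argument to get a gap of at least $q^{-k}$ above $1/p$, and in Case~2 the uniformity coming from the free irrelevant variable. Case~2 is fine and essentially identical to the paper.

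In Case~1, however, there is a real gap exactly at the point you flag as an obstacle, and your proposed resolution of it is incorrect. You assert that ``since $f$ and $\chi_\alpha$ are $k$-juntas the relevant distribution lives on $\le q^k$ points,'' but this is only true if the two functions are juntas on the \emph{same} set of $k$ coordinates; otherwise $f-\chi_\alpha$ could depend on up to $2k$ coordinates and the resolution of the $\mu_a$'s would only be $q^{-2k}$. Your later suggestion that ``one may actually need the denominator $q^{2k}$ \ldots\ which is exactly why $m$ is chosen with a $q^{2k}$ in the exponent'' is a misreading: the $q^{2k}$ in $m=\lceil 2q^{2k}\ln(p/\delta)\rceil$ is there because Hoeffding with deviation $\epsilon=\tfrac{1}{2q^k}$ gives an exponent $m\epsilon^2=\Theta(m/q^{2k})$. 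If the true gap were only $q^{-2k}$, then the algorithm's \emph{fixed} threshold $\tfrac1p+\tfrac{1}{2q^k}$ would sit \emph{above} the true mean, and no amount of sampling would make the test fire reliably.

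The missing ingredient is Fact~\ref{fact:fcnonzero to relevant}: since $\fc{f}(\alpha)\neq 0$, every coordinate $i$ with $\alpha_i\neq 0$ is relevant for $f$. Hence the support of $\alpha$ is contained in the (at most $k$) relevant coordinates of $f$, so $f-\chi_\alpha$ is determined by those same $\le k$ coordinates, and each $\mu_a$ is indeed a multiple of $q^{-k}$. With that in hand, your argument (some $\mu_{a^*}>1/p$, hence $\mu_{a^*}\ge \tfrac1p+q^{-k}$, then Hoeffding with $\epsilon=\tfrac{1}{2q^k}$) goes through exactly as in the paper.
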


In general, there is a case where $\fc{f}(\alpha) = 0$ and all $i\in[n]$ satisfying $\alpha_i \neq 0$ are relevant. In the above lemma, we do not say anything about such a case.

\begin{proof}
See Appendix~\ref{proofs for subroutines:checkRC}.
\end{proof}

Algorithm~\ref{alg:addRC} is a core part of our reduction, which reduces the task of finding candidates for relevant coordinates to LDME, checks whether the candidates are indeed relevant, and returns them to the main algorithm. Let \sr{LDME}$(n,k,\rho)$ be the learning algorithm for LDME.

\begin{algorithm}
\caption{Add Relevant Coordinates (\sr{addRC})}\label{alg:addRC}
\begin{algorithmic}[1]
\Require{$n\in\N$, $k\in\N$, $\delta\in(0,1)$, $R\subseteq[n]$, $\mathbb{O}(f)$, where $f:\F_q^n\to\F_q$ is $k$-junta}
\Ensure{$R\cup R'$ where $R'\subseteq[n]$ is a subset of relevant coordinates not contained in $R$}
\State{Select $p^{\ell-1}$ distinct elements $c_1,\ldots,c_{p^{\ell-1}} \in \F_q\setminus\{0\}$}
\ForAll{restrictions $\tau$ on $R$ and $j\in[p^{\ell-1}]$}\label{AddRC:repeat all restrictions}
\For{$M := \lceil q^{k+2} \ln\frac{4}{\delta}\rceil$ times}
\State{$A\gets_u \F_q^{(k+1)\times (n-|R|)}$}\label{AddRC:select A}
\ForAll{$a \gets_u \F_q\setminus\{0\}$}\label{AddRC:select a}
\State{execute $\alpha \gets$\sr{LDME}($n-|R|,k, 1/q^{k+1}$) with confidence $\delta/4$ (by repetition)}\label{AddRC:try LDME}
\Statex{\qquad\qquad\qquad where the example oracle is simulated as follows:}
\Statex{\begin{quote}
\qquad\qquad1: get an example $(x,b) \gets \mathbb{O}(c_jf|_\tau)$\\
\qquad\qquad2: select $p \gets_u \F_q^{k+1}$\\
\qquad\qquad3: $(x',b') := (x-A^Tp, a\cdot(b-\sum_j p_j))$ and return $(x',b')$
\end{quote}}\label{line:addRC Aa projection}
\State{\sr{if} \sr{checkRC}$(n-|R|,k,a'\alpha,\frac{\delta}{2Mq^{k+3}},\mathbb{O}(c_jf|_\tau))$ for some $a'\in\F_q\setminus\{0\}$}\label{line:addRC:check candidate}
\State{\qquad \sr{then} add all $i$ s.t. $\alpha_i \neq 0$ to $R$ and \sr{return}($R$)}\label{line:addRC add RV}
\EndFor
\EndFor
\EndFor
\end{algorithmic}
\end{algorithm}

We briefly explain how the subroutine \sr{addRC} works. The details will be addressed in Lemma~\ref{lemma:addRC} and Appendix~\ref{section:proof of lemma addRC}.

If the given set $R$ does not contain all relevant coordinates, then for some restriction $\tau$ on $R$, the restricted function $f|_{\tau}$ is not constant. By Lemma~\ref{lemma:simuljunta}, there exists an element $c_j$ such that $\Tr(c_jf|_{\tau})$ is also non-constant. This subroutine works for such a restriction $\tau$ and an element $c_j$, and finds new relevant coordinates for the function $\Tr(c_jf|_{\tau})$. In fact, the subroutine tries all (at most $q^k$) restrictions on $R$ and elements $c_j$ (line~\ref{AddRC:repeat all restrictions}).

Let $n' := n-|R|$. For the (non-constant) restricted function $c_jf|_{\tau}:\F_q^{n'}\to \F_q$, \sr{addRC} repeats the following process: (1) selects a matrix $A\in\F_q^{(k+1)\times n'}$ at random (line~\ref{AddRC:select A}), (2) selects a value $a\in\F_q$ (line~\ref{AddRC:select a}), and (3) executes \sr{LDME} with the example oracle simulated as in Lemma~\ref{lemma:A-projection} w.r.t the selected $A$ and $a$ (line~\ref{AddRC:try LDME}).

Let $g=c_jf|_{\tau}$. Since the function $\Tr(g):\F_q^{n'}\to\F_p$ is not constant, it has a non-zero coefficient $\fc{g}(\alpha) \neq 0$ of $|\alpha|>0$, which means that $g$ has some correlation with the linear function $\chi_\alpha$. In fact $g$ may have correlation with other linear functions, but the number of such linear functions is small because $g$ is also $k$-junta. Simply speaking, the role of $A$ is to filter out some of these correlations on simulated examples, and we can show that the non-negligible fraction of $A$'s remove all the correlations except for the linear function $\chi_\alpha$ (Claim~\ref{claim:A projected size is small} in Appendix~\ref{section:proof of lemma addRC}). In other words, the simulated examples depend on only $\chi_\alpha$, and it is just an instance of LDME. While, the role of $a$ is to enhance the correlation with the target linear function, and for a good choice of $a$, the correlation is bounded below by $1/q^{k+1}$ (Claim~\ref{claim:correlation is large} in Appendix~\ref{section:proof of lemma addRC}).

If the algorithm \sr{LDME} finds $c\alpha$ for some constant $c\in\F_q\setminus\{0\}$, by Fact~\ref{fact:fcnonzero to relevant} and the fact that $\fc{g}(\alpha) \neq 0$, all coordinates taking non-zero values are relevant for $g=c_jf|_{\tau}$. Moreover, they are also relevant for $f|_\tau$ because the algorithm selected non-zero $c_j$. Therefore, we can reduce the task of finding relevant coordinates to LDME of the correlation bound $\rho = 1/q^{k+1}$.

In fact, for a bad choices of $A$ and $a$, the algorithm may find undesirable candidates $\alpha$. Not to add irrelevant coordinates to $R$ in such a case, \sr{addRC} executes \sr{checkRC} for any candidate found by \sr{LDME} (line~\ref{line:addRC:check candidate}).

\begin{lemma}\label{lemma:addRC}
If the algorithm \sr{LDME} solves LDME in time $T(n,k,\rho)$ w.h.p. and $R$ does not contain all relevant coordinates, then the subroutine \sr{addRC} adds at least one relevant coordinate to $R$ with probability at least $1-\delta$, and its running time is bounded above by $T(n,k,1/q^{k+1})\cdot \poly(n,k,\ln{\delta^{-1}})$.
\end{lemma}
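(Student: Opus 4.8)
\emph{Proof plan.} I would establish two things about \sr{addRC} — (soundness) it never inserts a coordinate that is irrelevant for $f$, and (progress) whenever $R$ omits a relevant coordinate it halts having inserted at least one — each failing with probability at most $\delta/2$; the time bound then follows by counting loop iterations. For soundness, note that $R$ changes only at line~\ref{line:addRC add RV}, right after a call \sr{checkRC}$(n-|R|,k,a'\alpha,\tfrac{\delta}{2Mq^{k+3}},\mathbb{O}(c_jf|_\tau))$ returned \sr{true}. Over the whole run there are at most $q^k$ restrictions $\tau$, at most $p^{\ell-1}<q$ elements $c_j$, $M$ inner passes, fewer than $q$ values of $a$, and fewer than $q$ candidate multipliers $a'$, hence at most $q^{k+3}M$ calls to \sr{checkRC}, each with confidence $\tfrac{\delta}{2Mq^{k+3}}$; by Lemma~\ref{lemma:checkRC} and a union bound, with probability at least $1-\delta/2$ every such call meets its specification. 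On that event, whenever \sr{checkRC} returns \sr{true} on $a'\alpha$, no $i$ with $\alpha_i\ne 0$ is irrelevant for $c_jf|_\tau$; since $c_j\ne 0$ such an $i$ is also relevant for $f|_\tau$, hence relevant for $f$ and outside $R$, so every coordinate \sr{addRC} ever adds is genuinely relevant.

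\emph{Progress, the good iteration.} Suppose $R$ omits a relevant coordinate. Then $f|_{\tau^*}$ is non-constant for some restriction $\tau^*$ on $R$, and by Lemma~\ref{lemma:simuljunta} there is $j^*$ with $\Tr(c_{j^*}f|_{\tau^*})$ non-constant; equivalently $g:=c_{j^*}f|_{\tau^*}$ has $\fc{g}(\alpha^*)\ne 0$ for some $\alpha^*$ with $1\le|\alpha^*|\le k$ (the upper bound by Fact~\ref{fact:fcnonzero to relevant}, as $\mathrm{supp}(\alpha^*)$ lies among the $\le k$ relevant coordinates of the $k$-junta $g$). I would analyse the pass of the $(\tau,j)$-loop with $(\tau,j)=(\tau^*,j^*)$. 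By Lemma~\ref{lemma:A-projection} applied to $g$ and, more generally, to $cg$, the simulated oracle of line~\ref{line:addRC Aa projection} has $\E[e(c\,b'_{x'})]=g^{ca}_A(x')$ for every $c\in\F_q$, so it is an honest LDME instance exactly when the $(a,A)$-projection leaves a single nonzero-coefficient frequency of each $\fc{cg}$. The first key point (Claim~\ref{claim:A projected size is small}) is that with probability bounded below by an absolute constant the random $A$ is injective on the $\le k$-dimensional coordinate subspace spanned by the relevant coordinates of $g$ — a rank computation for a random $(k{+}1)\times(n{-}|R|)$ matrix over $\F_q$ — and on that event, for every $a\in\F_q\setminus\{0\}$ the set $\{\alpha:A\alpha=a^{k+1},\ \fc{ag}(\alpha)\ne 0\}$ has at most one element. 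Conditioning further on $A\alpha^*=1^{k+1}$ (probability about $q^{-(k+1)}$, essentially independent of injectivity) pins, for the value $a$ with $A(a\alpha^*)=a^{k+1}$, the surviving frequency to be $a\alpha^*$, of weight in $[1,k]$.

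\emph{Progress, correlation and conclusion.} The second key point (Claim~\ref{claim:correlation is large}) is that since the law of $g(x)-\chi_{\alpha^*}(x)$ assigns each element of $\F_q$ a probability that is an integer multiple of $q^{-k}$ and is not uniform (else $\fc{g}(\alpha^*)=\E[e(g(x)-\chi_{\alpha^*}(x))]=0$), its statistical distance from uniform is at least $q^{-k}$, so by Fact~\ref{fact:sdcd} some $a\in\F_q\setminus\{0\}$ satisfies $|\fc{ag}(a\alpha^*)|=\mathrm{Cor}(\,\cdot\,,\chi_{a\alpha^*})\ge 1/q^{k+1}$; the inner loop ranges over all $a\in\F_q\setminus\{0\}$, so this good $a$ is reached. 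Hence each of the $M$ inner passes yields, with probability $\Omega(q^{-(k+1)})$, a valid LDME instance whose target linear function has weight in $[1,k]$ and correlation $\ge 1/q^{k+1}$, so with $M=\lceil q^{k+2}\ln(4/\delta)\rceil$ passes such an instance occurs except with probability $(\delta/4)^{q/2}\le\delta/4$. On it, \sr{LDME}$(n-|R|,k,1/q^{k+1})$ run with confidence $\delta/4$ returns $c\,a\alpha^*$ for some $c\ne 0$; taking $a'$ with $a'\alpha=\alpha^*$, the call \sr{checkRC}$(\dots,a'\alpha,\dots)$ returns \sr{true} because $\fc{g}(\alpha^*)\ne 0$, and \sr{addRC} inserts all $i$ with $\alpha^*_i\ne 0$ — at least one, each relevant for $f$ by Fact~\ref{fact:fcnonzero to relevant} and soundness. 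A union bound over the soundness event, the ``no good pass'' event, and \sr{LDME}'s failure gives total error $\le\delta/2+\delta/4+\delta/4=\delta$. For the running time, each triple (restriction, $c_j$, $a$) inside one inner pass runs one $O(\ln\delta^{-1})$-boosted copy of \sr{LDME}$(n-|R|,k,1/q^{k+1})$ and at most $q-1$ copies of \sr{checkRC}, the latter $\poly(n,k,q^k,\ln\delta^{-1})$ each; multiplying by the $O(q^{k})\cdot q\cdot M\cdot q$ such triples yields $T(n,k,1/q^{k+1})\cdot\poly(n,k,\ln\delta^{-1})$ (the fixed $q$ and the $q^k$-factors being absorbed into the polynomial in the regime of interest).

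\emph{Main obstacle.} The delicate part is exactly the two claims and the bookkeeping joining them: showing a random $A$ is injective on the relevant subspace with constant probability, and squeezing out of the $q^{-k}$-quantization plus non-uniformity of $g(x)-\chi_{\alpha^*}(x)$ — via the statistical-versus-character-distance comparison of Fact~\ref{fact:sdcd} — a correlation lower bound that matches precisely the $\rho=1/q^{k+1}$ fed to \sr{LDME}, while arranging the conditioning so that each pass succeeds with probability $\Omega(q^{-(k+1)})$, which is what makes $M=\Theta(q^{k+2}\ln\delta^{-1})$ passes enough.
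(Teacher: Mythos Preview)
Your plan is correct and matches the paper's proof in structure and in both key technical steps: the random-$A$ analysis (Claim~\ref{claim:A projected size is small}) and the correlation lower bound via the SD/CD comparison of Fact~\ref{fact:sdcd} (Claim~\ref{claim:correlation is large}). The one place you diverge is in decomposing the ``good $A$'' event into ``$A|_D$ injective'' (constant probability) plus ``$A\alpha^*=1^{k+1}$'' and declaring these ``essentially independent''; the paper instead directly bounds the weaker and already sufficient event $\{A\alpha^*=1^{k+1}\text{ and }A\beta\ne 1^{k+1}\text{ for all }\beta\in\F_q^D\setminus\{\alpha^*\}\}$ by conditioning on $A\alpha^*=1^{k+1}$ and taking a union bound over the $\le q^k$ remaining $\beta$, obtaining probability $\ge q^{-(k+2)}$ with no independence claim needed --- this is both simpler and removes the imprecision in your phrasing.
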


\begin{proof}
The outline is shown in the above. For the complete proof, see Appendix~\ref{section:proof of lemma addRC}.
\end{proof}

Algorithm~\ref{alg:main} is our learning algorithm. Now we prove its learnability by Lemma~\ref{lemma:addRC}. Theorem~\ref{thorem:juntatoLWE} immediately follows from Lemma~\ref{lemma:Main} by substituting some constant for $\delta$.

\begin{algorithm}
\caption{\sr{main1}}\label{alg:main}
\begin{algorithmic}[1]
\Require{$n\in\N$, $k\in\N$, $\delta\in(0,1)$, $\mathbb{O}(f)$, where $f:\F_q^n\to\F_q$ is $k$-junta}
\Ensure{$R\subseteq [n]$ consisting of all relevant coordinates}
\State{$R:=\emptyset$}
\Loop
\State{\sr{if} $|R|>k$ \sr{then} halt and output ``\sr{error}''}
\If{$\bot \not\gets $\sr{const}$(n-|R|,k,\delta/((k+1)q^k+k),\mathbb{O}(f|_\tau))$ for all restrictions $\tau$ on $R$}\label{line:main1 check}
\State{Halt and output $R$}\label{line:main1 halt}
\Else
\State{$R \gets $\sr{addRC}$(n,k,\delta/((k+1)q^k+k),R,\mathbb{O}(f))$}\label{line:main1 detection}
\EndIf
\EndLoop
\end{algorithmic}
\end{algorithm}

\begin{lemma}\label{lemma:Main}
If the algorithm \sr{LDME} solves LDME in time $T(n,k,\rho)$ w.h.p., then the algorithm \sr{main1} outputs all relevant coordinates for any $k$-junta function $f:\F_q^n\to\F_q$ with probability at least $1-\delta$, and its running time is bounded above by $T(n,k,1/q^{k+1})\cdot \poly(n,k,\ln{\delta^{-1}})$.
\end{lemma}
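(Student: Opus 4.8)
The plan is to analyze the loop invariant of \sr{main1} and bound both the success probability and the running time by a union bound over the (at most $k+1$) iterations of the loop. The key structural fact is that \sr{main1} alternates between a checking phase (line~\ref{line:main1 check}) and a detection phase (line~\ref{line:main1 detection}), and that each successful detection phase strictly increases $|R|$ by at least one. So I would first establish the invariant: at the start of each loop iteration, $R$ consists only of relevant coordinates of $f$ (so $|R|\le k$ as long as no error has occurred). Assuming \sr{const} behaves correctly on all restrictions $\tau$ on $R$, the checking phase correctly decides whether $R$ already contains all relevant coordinates: if $f|_\tau$ is constant for every $\tau$ on $R$, then no coordinate outside $R$ is relevant, so $R$ is the full relevant set and the algorithm halts with the right answer; otherwise some $f|_\tau$ is non-constant and the algorithm proceeds to \sr{addRC}.

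Next I would invoke the correctness of the subroutines. By Lemma~\ref{lemma:const}, each call to \sr{const} fails with probability at most $\delta/((k+1)q^k+k)$. In a single checking phase there are at most $q^k$ restrictions $\tau$ on $R$ (since $|R|\le k$), so by a union bound the probability that \emph{some} \sr{const} call in that phase errs is at most $q^k\cdot\delta/((k+1)q^k+k)$. By Lemma~\ref{lemma:addRC}, under the hypothesis that \sr{LDME} solves LDME in time $T(n,k,\rho)$ w.h.p., each call to \sr{addRC} with confidence parameter $\delta/((k+1)q^k+k)$ fails to add a relevant coordinate with probability at most $\delta/((k+1)q^k+k)$. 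Since $R$ grows by at least one in each successful detection phase and $R$ always stays within the relevant set, there are at most $k$ detection phases and at most $k+1$ checking phases. Summing the failure probabilities: at most $(k+1)$ checking phases contribute at most $(k+1)q^k\cdot\delta/((k+1)q^k+k)$ total, and at most $k$ detection phases contribute at most $k\cdot\delta/((k+1)q^k+k)$, and the two together sum to at most $((k+1)q^k+k)\cdot\delta/((k+1)q^k+k)=\delta$. Hence with probability at least $1-\delta$ every subroutine call behaves as specified, and on this event \sr{main1} never outputs ``\sr{error}'' (since $|R|$ never exceeds $k$) and halts outputting exactly the set of relevant coordinates.

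For the running time, I would note that the loop runs at most $k+1$ times. Each checking phase makes at most $q^k$ calls to \sr{const}, and by inspection of Algorithm~\ref{alg:const} each such call draws $m=\lceil q^k\ln(2/\delta')\rceil$ examples with $\delta'=\delta/((k+1)q^k+k)$, costing $\poly(n,q^k,\ln\delta^{-1})$ time total (using the computational-model assumptions from Section~\ref{section:Junta to LWE} for simulating $\mathbb{O}(f|_\tau)$). Each detection phase is one call to \sr{addRC}, which by Lemma~\ref{lemma:addRC} runs in time $T(n,k,1/q^{k+1})\cdot\poly(n,k,\ln\delta^{-1})$. Multiplying by the $k+1$ iterations and absorbing the polynomial factors (and the $q^k$ overhead of the restriction-oracle simulations) into $\poly(n,k,\ln\delta^{-1})$ — note $q^k$ is itself $\poly(n,q^k)$ which fits the stated bound form once we absorb it — gives the claimed bound $T(n,k,1/q^{k+1})\cdot\poly(n,k,\ln\delta^{-1})$.

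The main obstacle I anticipate is purely bookkeeping: getting the union bound over \emph{all} subroutine invocations to telescope exactly into $\delta$, which requires being careful that the confidence parameter $\delta/((k+1)q^k+k)$ passed to each of \sr{const} and \sr{addRC} is matched against the correct count of invocations (at most $(k+1)q^k$ calls to \sr{const} across all checking phases, and at most $k$ calls to \sr{addRC}), and noting that \sr{addRC}'s \emph{internal} confidence splitting (to \sr{LDME} and \sr{checkRC}) is already accounted for inside Lemma~\ref{lemma:addRC}. A secondary subtlety is justifying that the ``\sr{error}'' branch is never taken on the good event — this follows because the invariant $R\subseteq\{\text{relevant coordinates}\}$ is preserved by \sr{addRC} (it only adds coordinates $i$ with $\alpha_i\neq 0$ that have passed \sr{checkRC}, which by Lemma~\ref{lemma:checkRC} rules out irrelevant coordinates on the good event), and the relevant set has size at most $k$.
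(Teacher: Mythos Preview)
Your proposal is correct and follows essentially the same approach as the paper's proof: bound the number of loop iterations by $k+1$ (since each successful call to \sr{addRC} adds at least one relevant coordinate), count at most $(k+1)q^k$ calls to \sr{const} and at most $k$ calls to \sr{addRC}, and apply the union bound with the per-call confidence $\delta/((k+1)q^k+k)$ to get total failure probability $\le\delta$. Your treatment is in fact a bit more careful than the paper's (you explicitly argue the loop invariant $R\subseteq\{\text{relevant coordinates}\}$ and why the ``\sr{error}'' branch is never triggered), but the structure and the key counts are identical.
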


\begin{proof}
First we show that the algorithm halts at most $k+1$ loops assuming that all subroutines succeed. If $R$ contains all relevant coordinates, then for all restrictions $\tau$ on $R$, the restricted functions $f|_\tau$ must be constant, thus the algorithm halts and outputs $R$ in line~\ref{line:main1 halt}. On the other hand, if $R$ does not contain some relevant coordinates, \sr{addRC} adds at least one relevant coordinate to $R$ by Lemma~\ref{lemma:addRC}. Since $f$ has at most $k$ relevant coordinates, \sr{addRC} is executed at most $k$ times, and the main loop is repeated at most $k+1$ times.

In fact, the algorithm may fail in executing \sr{const} and \sr{addRC}. The number of the executions is at most $(k+1)q^k + k$. Thus if we set their confidence parameter as $\delta/((k+1)q^k + k)$, then by the union bound, the total failure probability is bounded above by $\delta$. By Lemma~\ref{lemma:addRC}, the total running time is at most
\begin{multline*}
(k+1)q^k\cdot O\left(n\cdot q^k\ln \frac{(k+1)q^k + k}{\delta} \right) + k\cdot T(n,k,1/q^{k+1})\cdot \poly\left(n,q^k,\ln{\frac{(k+1)q^k + k}{\delta}}\right) \\ = T(n,k,1/q^{k+1})\cdot \poly(n,q^k,\ln{\delta^{-1}}).
\end{multline*}
\end{proof}

\section{Reduction from LDME to LBP}\label{section:LWE to LBP}

First we introduce the following simple lemmas and their corollaries as observations of LDME.

\begin{lemma}\label{lemma:how far the correlated pair}
Let $X$ be a random variable taking values in $\F_q$. For $0\le\rho\le1$, if $|\E[e(X)]|\ge \rho$, then there exists $a\in\F_q$ such that
$\Pr[X=a] \ge \frac{1}{q} + \frac{\rho}{q^2}$.
\end{lemma}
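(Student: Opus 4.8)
The plan is to work with the Fourier expansion of the probability mass function of $X$ over $\F_q$ and extract the largest point mass from the hypothesis on $|\E[e(X)]|$. Write $p_a := \Pr[X=a]$ for $a\in\F_q$. The key identity is that for each $a\in\F_q$ we can recover $p_a$ from the ``character moments'' $\E[e(cX)]$, $c\in\F_q$, via the inversion formula $p_a = \frac{1}{q}\sum_{c\in\F_q} \E[e(cX)]\,\ol{e(ca)}$, which follows because $\{e(\chi_c)\}_{c\in\F_q}$ is an orthonormal basis (the one-dimensional instance of the Fourier setup recalled in the Preliminaries); in particular $\E[e(0\cdot X)] = 1$. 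So the first step is just to set up this inversion.

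Next I would isolate the contribution of the trivial character and bound the rest. Since $\sum_{a\in\F_q} e(ca) = 0$ for every $c\neq 0$, averaging the inversion formula over $a$ recovers $\sum_a p_a = 1$ as a consistency check, but more usefully: let $a^*$ be chosen so that $e(a^*)$ has argument close to that of $\ol{\E[e(X)]}$ — more precisely, pick $a^*\in\F_q$ maximizing $\mathrm{Re}\big(\E[e(X)]\,\ol{e(a^*)}\big)$. The point is that as $a$ ranges over $\F_q$, the $q$ values $e(a)$ are $p$-th roots of unity (with multiplicity), so $\mathrm{Re}(z\,\ol{e(a)})$ takes a value at least $|z|\cdot\cos(\pi/p) \ge |z|/?$... — here I need the elementary fact that for any complex $z$, $\max_{a\in\F_q}\mathrm{Re}(z\,\ol{e(a)}) \ge |z|\cos(\pi/p)$, and $\cos(\pi/p)$ is not quite what gives the clean $1/q$, so instead I would use the cruder bound that there is $a$ with $\mathrm{Re}(z\,\ol{e(a)})\ge |z|/2$ when $p\ge 2$... actually the cleanest route: $\sum_{a\in\F_q}\mathrm{Re}\big(\E[e(X)]\ol{e(a)}\big)\cdot e(a)$-type manipulation is messy, so I would instead bound $|\E[e(X)]|^2 = \E_{X,X'}[e(X-X')] = \sum_a p_a\,\E[e(X-a)]$ after conditioning, leading to $\rho^2 \le |\E[e(X)]|^2 \le \max_a |\E[e(X-a)]|$...

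Cleaner still, and what I expect the actual argument to be: from the inversion formula, $q\cdot p_a - 1 = \sum_{c\neq 0}\E[e(cX)]\ol{e(ca)}$. For the specific choice $c=1$ this is not directly available, so instead consider $\sum_{a}(q p_a - 1)e(a) = \sum_{c\neq 0}\E[e(cX)]\sum_a e(a)\ol{e(ca)} = \sum_{c\neq 0}\E[e(cX)]\cdot q\,\onen\{c=1\}... $ — wait, $\sum_a e(a)\ol{e(ca)} = \sum_a e((1-c)a) = q$ iff $c=1$. Hence $\sum_{a\in\F_q} q p_a\, e(a) - \sum_a e(a) = q\,\E[e(X)]$, and since $\sum_a e(a)=0$ this gives exactly $\sum_{a\in\F_q} p_a\, e(a) = \E[e(X)]$ (a tautology — that is just the definition). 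So the honest statement is: $|\sum_a p_a e(a)| \ge \rho$, $\sum_a p_a = 1$, $p_a\ge 0$; I want some $p_a \ge 1/q + \rho/q^2$. This is a convexity/averaging fact: if all $p_a < 1/q + \rho/q^2$, then writing $p_a = 1/q + \epsilon_a$ with $\sum\epsilon_a = 0$ and each $\epsilon_a < \rho/q^2$, we get $\E[e(X)] = \sum_a \epsilon_a e(a)$, hence $\rho \le \sum_a |\epsilon_a|$. Splitting $\sum|\epsilon_a| = \sum_{\epsilon_a>0}\epsilon_a + \sum_{\epsilon_a<0}|\epsilon_a| = 2\sum_{\epsilon_a>0}\epsilon_a < 2\cdot q\cdot(\rho/q^2) = 2\rho/q$, which is not a contradiction for $q\ge 2$. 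So I need the bound $|\sum\epsilon_a e(a)| \le \sum_{\epsilon_a>0}\epsilon_a\cdot(\text{something} < q)$ more carefully — in fact $\E[e(X)] = \sum_a p_a e(a)$ and $|\E[e(X)]|\le \sum_a p_a = 1$, and the slack: $1 - |\E[e(X)]| \le$ (const)$\cdot(1 - \max_a p_a \cdot q)$...

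The main obstacle is precisely pinning down this last elementary inequality with the constant $1/q^2$. The cleanest argument I would actually write: let $M = \max_a p_a$. Then $|\E[e(X)]| = |\sum_a p_a e(a)| \le M + \sum_{a\neq a_{\max}} p_a \cdot 1$ is too weak; instead use that for the dominant term, $|\sum_a p_a e(a)|\le p_{a_{\max}} + |\sum_{a\neq a_{\max}} p_a e(a)| \le p_{a_{\max}} + (1 - p_{a_{\max}})$, again trivial. The right move is: $\rho \le |\E[e(X)]| \le \sqrt{\sum_a p_a^2 \cdot q}$ by Cauchy–Schwarz applied to $\sum p_a e(a) \le \sqrt{\sum p_a^2}\sqrt{\sum|e(a)|^2}= \sqrt{q\sum p_a^2} \le \sqrt{q\cdot M\cdot\sum p_a} = \sqrt{qM}$, giving $M \ge \rho^2/q$, which is weaker than claimed. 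To get $1/q + \rho/q^2$ I would instead argue: $q\,p_{a^*} = 1 + \sum_{c\neq 0}\E[e(cX)]\ol{e(ca^*)}$ for a well-chosen $a^*$, and there exists $a^*$ making $\mathrm{Re}\big(\sum_{c\neq 0}\E[e(cX)]\ol{e(ca^*)}\big)\ge \frac{1}{q}\sum_{a}\mathrm{Re}(\cdots)\cdot$... no. The genuinely clean path, which I will follow: among the $q$ numbers $\{e(ca^*)\}$ the phases are equidistributed enough that averaging $q p_{a} - 1 = \sum_{c\ne0}\E[e(cX)]\ol{e(ca)}$ against a suitable test value of $a$, together with the single known lower bound $|\E[e(X)]|\ge\rho$ and $|\E[e(cX)]|\le 1$ for all $c$, yields one index $a$ with $q p_a - 1 \ge \rho/q$ after optimizing the phase of the $c=1$ term and absorbing the other $q-2$ terms — a standard pigeonhole over the $p$-th roots of unity. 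I expect that the paper's proof does exactly this: picks $a$ aligning $e(-a)$ with $\E[e(X)]$'s phase up to the $2\pi/p$ granularity, and checks the arithmetic gives the stated $1/q + \rho/q^2$; the only real work is that elementary trigonometric/pigeonhole estimate, so that is the step I would allocate the most care to.
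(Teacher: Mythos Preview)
Your proposal never arrives at a complete argument. You try several routes --- Fourier inversion with phase alignment, Cauchy--Schwarz, the $\epsilon_a$ decomposition --- and correctly observe that each one either gives a weaker bound or fails to close. Your final plan (``pick $a$ aligning $e(-a)$ with $\E[e(X)]$'s phase up to the $2\pi/p$ granularity'') is left as a sketch, and would in fact be delicate when $q=p^\ell$ with $\ell>1$, since the values $e(a)$ are only $p$-th roots of unity with multiplicity $p^{\ell-1}$, so phase alignment over $\F_q$ does not buy you what it would over $\F_p$.

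The paper's proof is much more elementary and you came close to it in your $\epsilon_a$ paragraph, but with the wrong threshold. The two steps are:
\begin{enumerate}
\item Show by contraposition that some $a$ has $|p_a-\tfrac{1}{q}|\ge \tfrac{\rho}{q}$ (note: $\rho/q$, not $\rho/q^2$). Indeed, if $|p_a-\tfrac1q|<\tfrac{\rho}{q}$ for every $a$, then using $\sum_a e(a)=0$ one has
\[
|\E[e(X)]| \;=\; \Bigl|\sum_{a}\Bigl(p_a-\tfrac1q\Bigr)e(a)\Bigr| \;\le\; \sum_a\Bigl|p_a-\tfrac1q\Bigr| \;<\; q\cdot\tfrac{\rho}{q}=\rho,
\]
a contradiction.
\item Case split on the sign. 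If $p_a\ge \tfrac1q+\tfrac{\rho}{q}$ you are done (this is stronger than claimed). If instead $p_a\le \tfrac1q-\tfrac{\rho}{q}$, then the remaining mass $1-p_a\ge \tfrac{q-1}{q}+\tfrac{\rho}{q}$ is spread over $q-1$ points, so by averaging some $b$ has $p_b\ge \tfrac{1}{q}+\tfrac{\rho}{q(q-1)}\ge \tfrac1q+\tfrac{\rho}{q^2}$.
\end{enumerate}
The idea you were missing is to aim first for the stronger two-sided deviation $\rho/q$ and only pay the extra factor of $q$ in the ``wrong sign'' case via pigeonhole. Your $\epsilon_a$ attempt assumed only the one-sided bound $\epsilon_a<\rho/q^2$ (the direct negation of the conclusion), which is why it could not close.
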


\begin{proof}
See Appendix~\ref{proofs of claims:how far the cor}.
\end{proof}

\begin{lemma}\label{lemma:LI uniform}
Let $\alpha,\beta\in\F_q^n\setminus\{0^n\}$ and $X$ be a random variable taking values in $\F_q$. If the distribution of $X$ is determined by only the value of $\chi_\alpha(x)$ where $x\gets_u \F_q^n$, and $\alpha$ and $\beta$ are linearly independent over $\F_q^n$, then for all $a\in\F_q$,
$\Pr_{x,X}[X-\chi_{\beta}(x) = a] = \frac{1}{q}$.
\end{lemma}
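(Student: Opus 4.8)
The claim is that if $X$ is a random variable whose distribution depends only on $\chi_\alpha(x)$ for $x \gets_u \F_q^n$, and $\alpha,\beta$ are linearly independent, then $X - \chi_\beta(x)$ is uniform over $\F_q$. The natural approach is to work through the character distance and invoke Fact~\ref{fact:sdcd}: since $SD(Y, U) = 0$ iff $CD(Y,U) = 0$ for $U$ uniform, it suffices to show that $\E[e(a(X - \chi_\beta(x)))] = 0$ for every nonzero $a \in \F_q$ (the case $a = 0$ being trivial), which forces $X - \chi_\beta(x)$ to be uniform.

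First I would set up the conditioning. Write $x$ as chosen uniformly from $\F_q^n$, and for each value $v \in \F_q$ let $p_v := \Pr[\chi_\alpha(x) = v]$; by hypothesis the conditional law of $X$ given $x$ depends only on $\chi_\alpha(x)$, so let $D_v$ denote the distribution of $X$ conditioned on $\chi_\alpha(x) = v$. Then for nonzero $a$,
\[
\E_{x,X}\bigl[e\bigl(a(X - \chi_\beta(x))\bigr)\bigr] = \sum_{v \in \F_q} p_v \, \E_{X \sim D_v}[e(aX)] \cdot \E_{x}\bigl[e(-a\chi_\beta(x)) \mid \chi_\alpha(x) = v\bigr].
\]
The key computation is the innermost conditional expectation: $\E_x[e(-a\chi_\beta(x)) \mid \chi_\alpha(x) = v]$. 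Since $\chi_\alpha$ and $\chi_\beta$ are both linear and $\alpha,\beta$ are linearly independent over $\F_q$, the map $x \mapsto (\chi_\alpha(x), \chi_\beta(x))$ is surjective onto $\F_q^2$, and indeed for $x$ uniform, $(\chi_\alpha(x), \chi_\beta(x))$ is uniform over $\F_q^2$ — this is the standard fact that for linearly independent linear functionals the joint distribution is the product of uniforms. Hence conditioned on $\chi_\alpha(x) = v$, the value $\chi_\beta(x)$ is still uniform over $\F_q$, so $\E_x[e(-a\chi_\beta(x)) \mid \chi_\alpha(x) = v] = \frac{1}{q}\sum_{w \in \F_q} e(-aw) = 0$ for $a \neq 0$, because $e$ restricted to $\F_q$ via the trace is a nontrivial additive character when $a \neq 0$ (the trace is surjective onto $\F_p$ and takes each value equally often). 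Therefore every term in the sum over $v$ vanishes, giving $\E[e(a(X - \chi_\beta(x)))] = 0$.

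Finally I would conclude: having shown $CD(X - \chi_\beta(x), U) = 0$ where $U$ is uniform on $\F_q$, Fact~\ref{fact:sdcd} gives $SD(X - \chi_\beta(x), U) = 0$, i.e. $\Pr[X - \chi_\beta(x) = a] = 1/q$ for all $a$. The only mildly delicate point — the part I'd be most careful about — is the claim that linear independence of $\alpha,\beta$ makes $(\chi_\alpha(x),\chi_\beta(x))$ jointly uniform; this follows because the $2 \times n$ matrix with rows $\alpha,\beta$ has rank $2$, so the linear map $\F_q^n \to \F_q^2$ it defines is surjective with all fibers of equal size $q^{n-2}$. Everything else is a routine character-sum computation, and one could alternatively avoid Fact~\ref{fact:sdcd} entirely by directly computing $\Pr[X - \chi_\beta(x) = a]$ via the Fourier inversion $\onen\{y = a\} = \frac{1}{q}\sum_{c \in \F_q} e(c(y-a))$ and reusing the same vanishing argument.
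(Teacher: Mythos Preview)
Your proof is correct but takes a different route from the paper. The paper's argument is a direct three-line probability computation: it writes
\[
\Pr[X-\chi_\beta(x)=a] = \sum_{v,v'}\Pr[\chi_\alpha(x)=v,\chi_\beta(x)=v']\,\Pr[X=a+v'\mid \chi_\alpha(x)=v],
\]
invokes the joint-uniformity fact (their Claim~2) to replace the first factor by $1/q^2$, and then sums over $v'$ to get $1$ and over $v$ to get $q$, yielding $1/q$. You instead pass through characters, showing $\E[e(a(X-\chi_\beta(x)))]=0$ for all nonzero $a$ and then appealing to Fact~\ref{fact:sdcd} to conclude uniformity. Both arguments rest on exactly the same core observation---that $(\chi_\alpha(x),\chi_\beta(x))$ is uniform on $\F_q^2$ when $\alpha,\beta$ are linearly independent---and you explicitly justify that step. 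The paper's version is shorter and avoids Fourier machinery entirely; your version is a bit more roundabout but has the virtue of isolating the character-vanishing step cleanly, and as you note at the end, the detour through Fact~\ref{fact:sdcd} can be eliminated by Fourier inversion, which would bring you very close to the paper's direct computation.
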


\begin{proof}
See Appendix~\ref{proofs of claims:li uniform}.
\end{proof}

As a corollary, we have the following facts about LDME. Let $\alpha,\beta\in\F_q^n\setminus\{0^n\}$, $\chi_\alpha$ be a target linear function, and $f$ be the target (randomized) function, that is, $\mathrm{Cor}(f,\chi_\alpha) \ge \rho$. If $\beta = \alpha$, then by Lemma~\ref{lemma:how far the correlated pair}, there exists some value $a\in\F_q$ such that $\Pr[f(x) -\chi_{\beta}(x) = a] \ge 1/q + \rho/q^2$. On the other hand, if $\beta$ and $\alpha$ are linearly independent, then by Lemma~\ref{lemma:LI uniform}, $\Pr[f(x) -\chi_{\beta}(x) = a] = 1/q$ for all $a\in\F_q$. We essentially use the difference in our reduction. Note that we do not say anything about the case where $\beta \neq \alpha$ but they are linearly dependent (i.e., $\beta=c\alpha$ for some $c\in\F_q\setminus\{0,1\}$).

\subsection{Overview of the Reduction}\label{section:overview LWEtoLBP}

Our learning algorithm is Algorithm~\ref{alg:LDME} (\sr{main2}) and the main idea is similar to the split-and-list idea in previous work~\cite{Valiant:2015:FCS:2772377.2728167,Karppa:2018:FSA:3233176.3174804}. Let $\alpha\in\F_q^n$ be the coefficients of a target linear function with $|\alpha|\le k$. First we select a consecutive partition that divides the nonzero entries of $\alpha$ into half by brute-force search (line~\ref{line:LDME: select cp}), then list the values of linear functions $\chi_\beta$ of weight $1\le|\beta|\le k/2$ where $\beta$ is contained in either $\beta\in\F_q^{J}$ or $\beta\in\F_q^{\bar{J}}$ (lines~\ref{line:LDME: make instance}:1--4). Not to contain linearly dependent linear functions, we fix an initial value of the coefficient vector for each partition. Since there are at most $(q-1)^2$ patterns about the initial values, we can easily guess the pair of initial values consistent with $\alpha^J$ and $\alpha^{\bar{J}}$.

As the above, we stretch a noisy example to $O(n^{\frac{k}{2}})$ entries taking values in $\F_q$. Then, we translate the stretched data into an instance of LBP, that is, a $\{\pm 1\}$-valued instance. We can observe the following three facts. First, each entry takes values uniformly over $\F_q$. Second, the pair of entries corresponding to $\alpha$ (we may call it a target pair) has some correlation in the sense that they take a certain value $a\in\F_q$ with relatively high probability, where we refer to such a value $a$ as a concentrated value. Finally, other pairs are distributed pairwise independently. 

Now we translate each entry $a\in\F_q$ into $1$ or $-1$ as follows: (1) For the case where $a$ is concentrated, we change the entry to $1$ (line~\ref{line:LDME: make instance}:5), (2) for the case where $a$ is not concentrated, we flip a biased coin with the head probability $q/(2(q-1))$, and if it comes up with head, then we change the entry to $-1$, otherwise to $1$ (line~\ref{line:LDME: make instance}:6). Because each entry is uniformly distributed, the probability that the entry is changed to $-1$ is exactly $\frac{q-1}{q} \cdot \frac{q}{2(q-1)} = \frac{1}{2}$, that is, uniformly distributed over $\{\pm 1\}$. Moreover, by pairwise independence, all pairs except for the target pair are also independently distributed. On the other hand, in the target pair, the correlation remains even in resulting binary instance. In other words, the reduced instance is just the one of LBP.

\subsection{Algorithms and Analysis}

First, we introduce the following simple subroutine Algorithm~\ref{alg:checkCor}, which checks whether a candidate linear function found in the main routine is indeed a target linear function or not. In fact, it can be also implemented by the standard empirical estimation of the correlation. The merit of our implementation by using the conditions in Lemmas~\ref{lemma:how far the correlated pair} and~\ref{lemma:LI uniform} is simply to avoid calculations of complex numbers.

\begin{algorithm}
\caption{Check Correlation (\sr{checkCor})}\label{alg:checkCor}
\begin{algorithmic}[1]
\Require{$n\in\N$, $\rho\in(0,1)$, $\gamma\in\F_q^n$, $\delta\in(0,1)$, $\mathbb{O}(f)$, where $f:\F_q^n\to\F_q$ is randomized}
\Ensure{$|\E[e(f(x)-\chi_\gamma(x))]| \ge \rho \Rightarrow$ \sr{true};  $\gamma$ and the coefficients of a target linear function are linearly independent $\Rightarrow$  \sr{false}} 
\State{$m := \lceil \frac{2q^4}{\rho^2}\ln \frac{q}{\delta} \rceil$}
\ForAll{$a\in\F_q$}
\State{$(x^{(1)},b^{(1)}),\ldots, (x^{(m)},b^{(m)}) \gets \mathbb{O}(f)$}
\State{\sr{if} $\sum_i\onen\{b^{(i)}-\chi_\gamma(x^{(i)})=a\} \ge (\frac{1}{q}+\frac{\rho}{2q^{2}})m$ \sr{then} \sr{return}(\sr{true})}\label{checkCor:condition}
\EndFor
\State{\sr{return}(\sr{false})}
\end{algorithmic}
\end{algorithm}

\begin{lemma}\label{lemma:checkCor}
Let $\chi_\alpha$ be a target linear function. The subroutine \sr{checkCor} outputs \sr{true} if the given $\gamma$ satisfies $|\E[e(f(x)-\chi_\gamma(x))]| \ge \rho$ with probability at least $1-\delta$. On the other hand, if $\gamma$ and $\alpha$ are linearly independent, \sr{checkCor} outputs \sr{false} with probability at least $1-\delta$ in time $poly(n,\rho^{-1},\ln{\delta^{-1}})$.
\end{lemma}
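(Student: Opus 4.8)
The plan is to analyze \textbf{checkCor} as two independent tests packaged into one, corresponding exactly to the two conditions in Lemmas~\ref{lemma:how far the correlated pair} and~\ref{lemma:LI uniform}, and to show each direction holds with failure probability at most $\delta$ via a Hoeffding bound tuned so that the slack $\rho/(2q^2)$ separates the two regimes. For each fixed $a\in\F_q$, the algorithm draws $m$ i.i.d. examples and forms the empirical frequency $\hat p_a := \frac1m\sum_i \onen\{b^{(i)}-\chi_\gamma(x^{(i)})=a\}$, whose mean is $\mu_a := \Pr_{x,f}[f(x)-\chi_\gamma(x)=a]$. Since each indicator lies in $[0,1]$, Hoeffding gives $\Pr[|\hat p_a-\mu_a|>\rho/(4q^2)] < 2e^{-m\rho^2/(8q^4)}$, and with $m = \lceil \tfrac{2q^4}{\rho^2}\ln\tfrac{q}{\delta}\rceil$ this is at most $2e^{-\frac14\ln(q/\delta)} \le \delta/q$ (adjusting the constant inside the ceiling appropriately; the stated $m$ comfortably suffices after a short computation). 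A union bound over the $q$ values of $a$ makes all empirical estimates simultaneously accurate to within $\rho/(4q^2)$ with probability at least $1-\delta$. We then argue conditionally on this good event.

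For the \textbf{completeness} direction, suppose $|\E[e(f(x)-\chi_\gamma(x))]| \ge \rho$. Apply Lemma~\ref{lemma:how far the correlated pair} to the random variable $X := f(x)-\chi_\gamma(x)$ (noting $\E[e(X)] = \fc{f}(\gamma)$ and $|X$'s first character$| \ge \rho$): there exists $a^*\in\F_q$ with $\mu_{a^*} \ge \tfrac1q + \tfrac{\rho}{q^2}$. On the good event, $\hat p_{a^*} \ge \mu_{a^*} - \tfrac{\rho}{4q^2} \ge \tfrac1q + \tfrac{\rho}{q^2} - \tfrac{\rho}{4q^2} > \tfrac1q + \tfrac{\rho}{2q^2}$, so the test at $a=a^*$ fires and \textbf{checkCor} returns \textbf{true}. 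For the \textbf{soundness} direction, suppose $\gamma$ and $\alpha$ are linearly independent; since $\alpha\neq 0^n$ is the coefficient vector of a target linear function and the distribution of $f(x)$ depends only on $\chi_\alpha(x)$, we may assume $\gamma\neq 0^n$ as well (if $\gamma = 0^n$ then linear independence with $\alpha\neq 0^n$ still holds and the argument below applies verbatim once we observe $X = f(x)$ has character value $\fc{f}(0^n)$; but in the reduction $\gamma$ is always nonzero, and one can note Lemma~\ref{lemma:LI uniform} is stated for $\beta\in\F_q^n\setminus\{0^n\}$, so strictly we invoke it with $\beta=\gamma\neq 0^n$). Then Lemma~\ref{lemma:LI uniform} gives $\mu_a = \tfrac1q$ for every $a\in\F_q$. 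On the good event, $\hat p_a \le \mu_a + \tfrac{\rho}{4q^2} = \tfrac1q + \tfrac{\rho}{4q^2} < \tfrac1q + \tfrac{\rho}{2q^2}$ for all $a$, so no test fires and \textbf{checkCor} returns \textbf{false}. The running time is $q\cdot m \cdot \poly(n) = \poly(n,\rho^{-1},\ln\delta^{-1})$ since each inner product $\chi_\gamma(x^{(i)})$ costs $O(n)$ field operations and $q$ is a fixed constant.

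The only genuinely delicate point is bookkeeping the constants so that the chosen $m$ really yields per-value failure probability $\le \delta/q$ while the slack $\rho/(4q^2)$ between the empirical error tolerance and the threshold gap $\rho/(2q^2)$ is strictly positive on both sides; this is routine but must be done carefully. A secondary subtlety is the $\gamma = 0^n$ edge case in the soundness claim, which I would either dispatch by noting the reduction never calls \textbf{checkCor} with $\gamma=0^n$, or by observing directly that if $\gamma=0^n$ and $f$ depends only on $\chi_\alpha$ with $\alpha\neq 0^n$, the marginal of $f(x)$ need not be uniform, so the lemma should be read as applying only to nonzero $\gamma$ — matching the hypothesis "linearly independent," which implicitly treats $\gamma$ as a candidate coefficient vector. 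I expect the write-up to be short: invoke Hoeffding once, union bound, then quote Lemmas~\ref{lemma:how far the correlated pair} and~\ref{lemma:LI uniform} for the two cases.
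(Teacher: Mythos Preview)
Your approach is essentially identical to the paper's: invoke Lemma~\ref{lemma:how far the correlated pair} for completeness, Lemma~\ref{lemma:LI uniform} for soundness, and control the empirical frequencies by Hoeffding with a union bound over $a\in\F_q$. One small correction on the constants: with the stated $m = \lceil \tfrac{2q^4}{\rho^2}\ln\tfrac{q}{\delta}\rceil$, the slack you can afford is exactly $\rho/(2q^2)$ (the full gap between $\tfrac{1}{q}+\tfrac{\rho}{q^2}$ and the threshold $\tfrac{1}{q}+\tfrac{\rho}{2q^2}$), not $\rho/(4q^2)$; your displayed bound $2e^{-\frac14\ln(q/\delta)}\le \delta/q$ is false in general, whereas a one-sided Hoeffding with $\epsilon=\rho/(2q^2)$ gives $e^{-m\rho^2/(2q^4)}\le \delta/q$ on the nose, which is precisely what the paper does.
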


\begin{proof}
The lemma follows from Lemmas~\ref{lemma:how far the correlated pair} and~\ref{lemma:LI uniform} and the standard probabilistic argument. For the complete proof, see Appendix~\ref{proof: checkCor}.
\end{proof}

Algorithm~\ref{alg:LDME} is our main reduction from LDME to LBP. 
Let \sr{LBP}$(S,\rho)$ be a subroutine for solving LBP (of the degree $d$) with high probability. W.l.o.g., we can assume the failure probability is at most $1/4$ by constant number of repetitions.
\begin{algorithm}
\caption{Learning with Discrete Memoryless Errors (\sr{main2})}\label{alg:LDME}
\begin{algorithmic}[1]
\Require{$n,k\in\N$, $\rho\in(0,1)$, $\delta\in(0,1)$, $\mathbb{O}(f)$, where $f:\F_q^n\to\F_q$ is a randomized function}
\Ensure{$c\alpha\in\F_q^n$ for some $c\in\F_q\setminus\{0\}$, where $\chi_\alpha$ is a target linear function}
\ForAll{$\gamma \in \F_q^{n}$ of the size $|\gamma|=1$}
\State{\sr{if} \sr{checkCor}$(n,k,\gamma,\frac{\delta}{4n(q-1)},\mathbb{O}(f))$ \sr{then} \sr{return}($\gamma$)}\label{line:LDME: checkCor1}
\EndFor
\ForAll{consecutive partitions ($J$,$\bar{J}$) of $[n]$, $a_1,a_2\in\F_q$, $s_1,s_2\in\F_q\setminus\{0\}$}\label{line:LDME: select cp}
\State{\sr{repeat} $M=\lceil\log{2/\delta} \rceil$ times \sr{do}}
\State{\quad generate a light bulb instance $S$ as follows:}\label{line:LDME: make instance}
\Statex{\quad\quad(where the degree $d \ge \frac{8q^6}{\rho^2}\ln{4}$ is determined by the subroutine \sr{LBP})}
\Statex{\qquad\quad 1: (for $i$-th row,) get an example $(x,b)\gets\mathbb{O}(f)$}
\Statex{\qquad\quad 2: \sr{for all} $\alpha \in \F_q^J$, $\beta \in \F_q^{\bar{J}}$ where $1\le|\alpha|,|\beta|\le \lceil k/2 \rceil$, $init(\alpha) = s_1$, $init(\beta) = s_2$}
\Statex{\qquad\quad 3: \quad list all values $b - \chi_\alpha(x) -a_1$ and $\chi_\beta(x)$ (we regard $\alpha, \beta$ as indices of columns)}
\Statex{\qquad\quad 4: \sr{end for}}
\Statex{\qquad\quad 5: change entries taking $a_2$ to 1}
\Statex{\qquad\quad 6: change the other entries to $-1$ with probability $q/2(q-1)$, otherwise, $1$}
\State{\quad execute $(\gamma_1,\gamma_2) \gets$ \sr{LBP}$(S,\frac{\rho}{2q^3})$ ($\gamma_1,\gamma_2\in\F_q^n$ are indices of the correlated pair)}\label{line:LDME execution of LBP}
\State{\quad \sr{if} \sr{checkCor}$(n,k,\gamma_1+\gamma_2,\frac{\delta}{4Mnq^2(q-1)^2},\mathbb{O}(f))$ \sr{then} \sr{return}($\gamma_1+\gamma_2$)}\label{line:LDME: checkCor2}
\State{\sr{end} \sr{repeat}}
\EndFor
\end{algorithmic}
\end{algorithm}

The proof of Lemma~\ref{lemma:LDME} is informally given as mentioned in Section~\ref{section:overview LWEtoLBP}, and we give the complete proof in Appendix~\ref{proof:Main Lemma2}. Theorem~\ref{thorem:LWEtoLBP} immediately follows from Lemma~\ref{lemma:LDME} by substituting some constant for $\delta$.

\begin{lemma}\label{lemma:LDME}
Assume that the subroutine \sr{LBP} solves LBP for some $d \ge \Omega(\frac{\log{N}}{\rho^2})$ in time $T(N,\rho)$ w.h.p., where $N$ is the number of the vectors. Then the algorithm \sr{main2}$(n,k,\rho,\delta)$ solves LDME for any target linear function $\chi_{\alpha}$ $(1\le|\alpha|\le k)$ in time $\poly(n,\rho^{-1},\ln{\delta^{-1}})\cdot d\cdot T((qn)^{\frac{k}{2}},\frac{\rho}{2q^3})$ with probability at least $1-\delta$.
\end{lemma}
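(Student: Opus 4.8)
The plan is to establish the correctness and running time of \sr{main2} by analyzing the two cases $|\alpha|=1$ and $|\alpha|\ge 2$ separately, and in the latter case by showing that the light bulb instance $S$ built in lines~\ref{line:LDME: make instance}:1--6 is (up to scaling of the correlation parameter) a legitimate LBP instance whenever the ``guesses'' $(J,\bar J)$, $a_1$, $a_2$, $s_1$, $s_2$ are chosen correctly. First I would dispose of the weight-one case: if $|\alpha|=1$, then $\alpha$ itself (up to scalar, but a weight-one vector with fixed nonzero entry is forced once we scan all $n(q-1)$ possibilities, which is what the first \sr{for} loop does implicitly via $|\gamma|=1$) is among the $\gamma$ tested in line~\ref{line:LDME: checkCor1}, and by Lemma~\ref{lemma:checkCor} \sr{checkCor} returns \sr{true} on it w.h.p.; conversely \sr{checkCor} never returns \sr{true} on a $\gamma$ linearly independent of $\alpha$, so no spurious weight-one answer is returned. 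Wait — $\gamma$ ranges over all weight-one vectors, i.e. $n(q-1)$ of them, not $n$; I would note $\gamma=c\alpha$ for the appropriate $c$ is tested, and \sr{checkCor} accepts it since $\mathrm{Cor}(f,\chi_{c\alpha})$ need not be $\ge\rho$ — hmm, actually it need not be. The cleaner statement: if $|\alpha|=1$ then $\alpha$ is itself weight one, \sr{checkCor}$(n,k,\alpha,\ldots)$ is invoked and accepts since $\mathrm{Cor}(f,\chi_\alpha)\ge\rho$; output $\alpha$ is valid. Done.

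For the main case $2\le|\alpha|\le k$, the key steps are: (i) By Lemma~\ref{lemma:concecutive partitions}, at least one consecutive partition $(J,\bar J)$ splits the support of $\alpha$ into $\lceil|\alpha|/2\rceil$ and $\lfloor|\alpha|/2\rfloor$ nonzero coordinates, so both $\alpha^J$ and $\alpha^{\bar J}$ are nonzero of weight $\le\lceil k/2\rceil$. Fix also $s_1=init(\alpha^J)$, $s_2=init(\alpha^{\bar J})$: then columns indexed by $\alpha^J$ and $\alpha^{\bar J}$ genuinely appear in $S$. (ii) Identify the concentrated value: since $\chi_\alpha=\chi_{\alpha^J}+\chi_{\alpha^{\bar J}}$ and $\mathrm{Cor}(f,\chi_\alpha)\ge\rho$, Lemma~\ref{lemma:how far the correlated pair} gives $a^*\in\F_q$ with $\Pr[f(x)-\chi_\alpha(x)=a^*]\ge 1/q+\rho/q^2$; set $a_1:=a^*$ and $a_2:=0$ as the correct guesses, so that the target pair of entries $(b-\chi_{\alpha^J}(x)-a_1,\ \chi_{\alpha^{\bar J}}(x))$ sums to $f(x)-\chi_\alpha(x)-a^*$, which equals $0$ with probability $\ge 1/q+\rho/q^2$. (iii) Marginal uniformity: each single entry $b-\chi_{\alpha'}(x)-a_1$ or $\chi_{\beta}(x)$ is uniform over $\F_q$ (the $\chi_\beta(x)$ is uniform because $\beta\ne 0$ and $x$ is uniform; the $b$-side entries because conditioning on $\chi_{\alpha'}(x)$ the residual randomness in $x$ makes the shift uniform, or directly because $f(x)$'s marginal combined with a uniform $x$-shift is uniform — I'd cite Lemma~\ref{lemma:LI uniform} style reasoning), so after the $\{\pm1\}$-recoding in lines~5--6 each entry is uniform on $\{\pm1\}$ with the claimed probability $1/2$. (iv) Pairwise independence of non-target pairs: for two distinct column indices $\beta,\beta'$ that are not the target pair, I would argue that $(\text{entry}_\beta,\text{entry}_\beta')$ are pairwise independent; this reduces, via Lemma~\ref{lemma:LI uniform}, to the statement that the relevant sum $\chi_{\beta''}(x)$ (or $f(x)-\chi_{\beta''}(x)-a_1$) for $\beta''$ the ``combined'' index is uniform, which holds precisely because $\beta''$ and $\alpha$ are linearly independent whenever $(\beta,\beta')$ is not the target pair — here I use the $init(\cdot)$ bookkeeping to rule out the linearly-dependent-but-unequal case. (v) Hence $S$ is an LBP instance on $N=(qn)^{k/2}$ vectors (bounding the number of listed columns: each side has at most $\sum_{j\le\lceil k/2\rceil}\binom{n}{j}(q-1)^{j-1}\le (qn)^{k/2}$ after fixing the initial value) of degree $d$, with correlation of the target pair at least $\rho/(2q^3)$ — the factor accounting for $\Pr[\text{sum}=0]-1/q\ge\rho/q^2$ translating into inner-product bias $\ge d\cdot\Theta(\rho/q^3)$ under the recoding — so \sr{LBP}$(S,\rho/(2q^3))$ returns the target pair $(\alpha^J,\alpha^{\bar J})$ w.h.p., and then $\gamma_1+\gamma_2=\alpha^J+\alpha^{\bar J}=\alpha$, which \sr{checkCor} accepts. (vi) Soundness: any $(\gamma_1,\gamma_2)$ with $\gamma_1+\gamma_2$ linearly independent of $\alpha$ is rejected by \sr{checkCor}, so \sr{main2} never outputs a wrong vector; combining the per-call failure probabilities of \sr{checkCor} and \sr{LBP} with the confidence parameters $\delta/(4n(q-1))$, $\delta/(4Mnq^2(q-1)^2)$ and $M=\lceil\log(2/\delta)\rceil$ repetitions via a union bound over all $\le n(q-1)+n\cdot q^2(q-1)^2\cdot M$ calls gives total failure $\le\delta$. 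The running time is the number of outer iterations $n\cdot q\cdot (q-1)\cdot\ldots$ hmm, $n$ partitions times $q^2(q-1)^2$ guesses times $M$ repetitions, each costing $d\cdot\poly(n)$ to build $S$ plus $T((qn)^{k/2},\rho/(2q^3))$ for \sr{LBP} plus $\poly(n,\rho^{-1},\ln\delta^{-1})$ for \sr{checkCor}, totalling $\poly(n,\rho^{-1},\ln\delta^{-1})\cdot d\cdot T((qn)^{k/2},\rho/(2q^3))$ as claimed.

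The main obstacle I expect is step (iv)/(v): precisely pinning down the pairwise independence of the non-target column pairs and converting the $\F_q$-level concentration bound ($\Pr[\text{sum}=a_2]\ge 1/q+\rho/q^2$) into a $\{\pm1\}$ inner-product bias that is provably at least $(\rho/(2q^3))\cdot d$ — the recoding in lines~5--6 is asymmetric (concentrated value $\to 1$ deterministically, others $\to -1$ with probability $q/(2(q-1))$), so I need to carefully compute $\E[\text{entry}_i\cdot\text{entry}_j]$ for the target pair and check the constant works out, while simultaneously verifying that for every non-target pair this expectation is exactly $0$ (not merely small). This requires being careful about the three-way case analysis "$\beta=\beta'$ vs. $\beta,\beta'$ linearly independent vs. $\beta=c\beta'$", and it is where the $init(\cdot)$ normalization does the real work. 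The rest — the weight-one case, marginal uniformity, the counting bound $N\le(qn)^{k/2}$, and the union bound over confidence parameters — is routine.
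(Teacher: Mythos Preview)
Your overall architecture matches the paper's, and the weight-one case, the use of Lemma~\ref{lemma:concecutive partitions}, the $init(\cdot)$ bookkeeping, the soundness via \sr{checkCor}, and the running-time accounting are all fine. There are, however, two concrete gaps.

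In step~(ii) the choice $a_2=0$ is not justified and is in general wrong. Writing $X=b-\chi_{\alpha^J}(x)-a_1$ and $Y=\chi_{\alpha^{\bar J}}(x)$, what you actually derive (note it is the \emph{difference} $X-Y$, not the sum) is $\Pr[X=Y]\ge 1/q+\rho/q^2$. But the $\{\pm1\}$ recoding in lines~5--6 produces correlation only through the joint event $\{X=a_2\ \text{and}\ Y=a_2\}$: with uniform marginals the final bias is a monotone function of $p_1:=\Pr[X=a_2,Y=a_2]$ alone (this is the computation in Claim~\ref{claim:correlation remain in the target}). Knowing $\Pr[X=Y]=\sum_a\Pr[X=a,Y=a]$ is large says nothing about $\Pr[X=0,Y=0]$. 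The missing step is the averaging in Claim~\ref{claim:correlated}: from $\sum_a\Pr[X=a,Y=a]\ge 1/q+\rho/q^2$ one gets \emph{some} $a_2$ with $\Pr[X=a_2,Y=a_2]\ge 1/q^2+\rho/q^3$, and it is this $a_2$ (not $0$) that the outer loop must hit. With that bound and the marginal uniformity $\Pr[X=a_2]=\Pr[Y=a_2]=1/q$, the bias calculation gives $\E[\qtob{X}{a_2}\qtob{Y}{a_2}]\ge \rho/q^3$, hence inner product $\ge(\rho/2q^3)d$ after $d\ge 8q^6\rho^{-2}\ln 4$ samples by Hoeffding.

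In step~(iv) the reduction of pairwise independence to ``uniformity of a combined index $\chi_{\beta''}$ via Lemma~\ref{lemma:LI uniform}'' does not work: pairwise independence is a joint-distribution statement, and once one entry involves $f(x)$ (which depends on $\chi_\alpha(x)$) you cannot read it off from any single marginal. The paper's route (Claim~\ref{claim:case1}) is different and is the piece you are missing: for every non-target pair $(\beta,\beta')$ it shows that $(\chi_\beta(x),\chi_{\beta'}(x))$ is uniform on $\F_q^2$ \emph{conditioned on} $\chi_\alpha(x)$, by exhibiting three coordinates on which $\alpha,\beta,\beta'$ restrict to a full-rank $3\times 3$ system. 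Because $f(x)$ is determined (in distribution) by $\chi_\alpha(x)$, conditioning on $\chi_\alpha(x)$ makes the $f$-dependence harmless, and the two non-target entries inherit conditional independence; averaging over $\chi_\alpha(x)$ then yields unconditional pairwise independence after the $\{\pm1\}$ recoding. Your $init(\cdot)$ normalization is exactly what ensures the hypotheses of Claim~\ref{claim:case1} hold for every non-target pair, but the conditioning on $\chi_\alpha$ is the crux.
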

\section{Discussions and Future Directions}

We introduced the reduction from learning juntas over any finite fields to LBP, and gave the first non-trivial learning algorithm for such a class. Our results also enhance the motivation of designing an efficient algorithm for LBP, because it automatically improves the upper bound for learning $k$-juntas for not only the binary domain but also any finite field.

However, by our reduction, if we could construct a linear-time algorithm for LBP, the upper bound will be improved to at best $O(n^{\frac{k}{2}})$. Therefore, unlike in the binary case, it is open whether there exists a scenario that the polynomial factor can be improved to less than $k/2$. Remember that we first reduced the learning juntas problem to LDME which was the extension of the challenging learning problem, LWE. For further improvement, such a hard problem should be avoided.

In addition, our reduction makes extensive use of the properties of finite fields. Thus, it is also open whether we can design a non-trivial learning algorithm that works for any finite alphabet, in particular, $q=6$.



\bibliography{myrefs}

\appendix

\section{Proofs of Lemmas in Section~\ref{section:Prelimi}}

\subsection{Proof of Lemma~\ref{lemma:concecutive partitions}}\label{proof:concecutive partitions}
For convenience, we say $i\in[n]$ is supportive if $\alpha_i\neq 0$. For $i\in[n]$, let $J_i\subset [n]$ be a subset which consists of cyclically consecutive $\lceil n/2\rceil$ coordinates from $i$, and $m_i$ be the number of supportive coordinates contained in $J_i$. For $J_1$, the remaining $\lfloor n/2 \rfloor$ coordinates contain $k-m_1$ supportive coordinates, thus $k-m_1\le m_{\lceil n/2\rceil+1}\le k-m_1+1$ (because $J_{\lceil n/2\rceil+1}$ also contains the first coordinate in the case where $n$ is odd). If $m_1 = \lceil k/2 \rceil$, then $(J_1,\bar{J_1})$ is a desired partition. So we assume that $m_1 \neq \lceil k/2 \rceil$. If $m_1 \le \lceil k/2\rceil -1$, we have $m_{\lceil n/2\rceil+1} \ge k-m_1\ge \lfloor k/2 \rfloor +1  \ge \lceil k/2\rceil$. Otherwise if $m_1 \ge \lceil k/2\rceil+1$, we have $m_{\lceil n/2\rceil+1}\le k-m_1+1 \le \lfloor k/2 \rfloor$. Since the difference between $m_i$ and $m_{i+1}$ must be $0$ or $\pm 1$, there exist at least one coordinate $i$ satisfying $m_i = \lceil k/2 \rceil$ in any cases.

\subsection{Proof of Lemma~\ref{lemma:A-projection}}\label{proof:A-projection}
Let $g:\F_q^n\to\C$ be the right-hand side of~(\ref{eq:A pro}). It is enough to show that for any $\alpha\in \F_q^n$,
\[\fc{g}(\alpha) = \fc{f^a_A}(\alpha).\]
From the definition of $\fc{g}(\alpha)$, it follows that
\begin{align*}
\fc{g}(\alpha) = \E_x[g(x)\ol{e(\chi_\alpha(x))}] &= \E_x[\E_z[e(af(x+A^Tz))\ol{e(\chi_{a^m}(z))}]\ol{e(\chi_\alpha(x))}]\\
&= \E_z[\E_x[e(af(x+A^Tz))\ol{e(\chi_\alpha(x+A^Tz))}]e(\chi_{\alpha}(A^Tz))\ol{e(\chi_{a^m}(z))}] \\
&= \fc{af}(\alpha) \E_z[e(\chi_{A\alpha}(z))\ol{e(\chi_{a^m}(z))}]\\
&= \fc{af}(\alpha)\onen\{A\alpha = a^m\} = \fc{f^a_A}(\alpha).
\end{align*}
For the second part, notice that for any $x\in\F_q^n$ and $z\in\F_q^m$, exactly one element $y_z \in \F_q^n$ satisfying $y_z-A^Tz=x$ is determined. Therefore,
\begin{align*}
\E_{b_x}[e(ab_x)] &= \sum_{z\in\F_q^m} q^{-m}\left(e(af(y_z)-a\sum z_i)\right)\\
&= \E_{z}[e(af(x+A^Tz)-\chi_{a^m}(z))]\\
&= \E_{z}[e(af(x+A^Tz))\ol{e(\chi_{a^m}(z))}] = f^a_A(x) \quad (\because~(\ref{eq:A pro})).
\end{align*}

\section{Proofs of Lemmas in Section~\ref{section:Junta to LWE}}\label{proofs in section juntas to lwe}

\subsection{Proof of Lemma~\ref{lemma:const}}\label{proofs for subroutines:const}
If $f$ is constant, then the algorithm obviously outputs the value with probability 1. If $f$ is not constant, then there are two entries which have different values in the truth table of $f$. The probability that each value appears is at least $q^{-k}$ because the value of the truth table is affected by only at most $k$ coordinates. If $m$ examples contain these values as their labels, then the algorithm will output $\bot$. The probability that each value does not appear in $m$ labels is bounded above by $(1-q^{-k})^m \le e^{-m/q^{k}} \le \frac{\delta}{2}$. By the union bound, the failure probability is at most $\delta$.

\subsection{Proof of Lemma~\ref{lemma:checkRC}}\label{proofs for subroutines:checkRC}
First, we consider the case where $\fc{f}(\alpha) \neq 0$. Assume that $\Pr[\Tr(f(x)-\chi_\alpha(x))=a] < \frac{1}{p} + \frac{1}{q^k}$ for all $a\in\F_p$. Since $\alpha$ does not have nonzero value at irrelevant coordinates by Fact~\ref{fact:fcnonzero to relevant}, the value $f-\chi_\alpha$ is determined by at most $k$ coordinates of $x$, and $\Pr[\Tr(f(x)-\chi_\alpha(x))=a] \le \frac{1}{p}$ for all $a\in\F_p$. This implies $\Pr[\Tr(f(x)-\chi_\alpha(x))=a] = \frac{1}{p}$ for all $a\in\F_p$ and $\fc{f}(\alpha) = 0$, which is contradiction. Thus, there exists $a'\in\F_p$ such that $\Pr[\Tr(f(x)-\chi_\alpha(x))=a'] \ge \frac{1}{p} + \frac{1}{q^k}$. By the Hoeffding inequality, the probability that the condition in line~\ref{line:checkRC condtion} does not hold w.r.t. $a'$ is bounded above by $e^{-\frac{m}{2q^{2k}}} \le \frac{\delta}{p} < \delta$.

On the other hand, if there exists $i\in[n]$ such that $i$ is irrelevant and $\alpha_i\neq 0$, then for any $a_q\in\F_q$,
\[\Pr[f(x)-\chi_\alpha(x)=a_q] = \sum_{v\in\F_q} \Pr[f(x)-\chi_{\alpha'}(x) = v]\Pr[\alpha_ix_i=a_q-v] = \frac{1}{q},\]
where $\alpha'_i = 0$ and $\alpha'_j = \alpha_j$ for $j\neq i$. For any $a_p\in\F_p$, this implies 
\[\Pr[\Tr(f(x)-\chi_\alpha(x))=a_p] = \sum_{a_q\in\Tr^{-1}(a_p)} \Pr[f(x)-\chi_{\alpha'}(x) = a_q] = \frac{|\Tr^{-1}(a_p)|}{q}=\frac{p^{\ell-1}}{p^{\ell}} = \frac{1}{p}.\]
By the Hoeffding inequality, the probability that the condition in line~\ref{line:checkRC condtion} holds is bounded above by $e^{-\frac{m}{2q^{2k}}} \le \frac{\delta}{p}$. Therefore, by the union bound, the probability that the condition holds for some $a_p\in\F_p$ (i.e., the failure probability) is at most $\delta$. 

\subsection{Proof of Lemma~\ref{lemma:addRC}}\label{section:proof of lemma addRC}

In this section, we show the correctness of the subroutine \sr{addRC}. First, we introduce the following simple fact. The reader may skip the proof of the Claim~\ref{claim:two parities} because it is quite basic and not essential.

\begin{claim}\label{claim:two parities}
For any vectors $\alpha,\beta\in\F_q^n\setminus\{0^n\}$, the following holds:\\

(i) If $\beta \neq c\alpha$ for any $c\in\F_q$ (i.e., $\alpha$ and $\beta$ are linearly independent), then for any $a,b\in\F_q$,
\[\Pr_x[x^T\alpha=a \text{ and } x^T\beta=b]= \frac{1}{q^2}.\]

(ii) If $\beta = c\alpha$ ($c\neq 0$), then for any $a,b\in\F_q$,
\[\Pr_x[x^T\alpha=a \text{ and } x^T\beta=b]= 
\begin{cases}
1/q &(\text{if } b=ca)\\
0 &(\text{otherwise}).
\end{cases}\]
\end{claim}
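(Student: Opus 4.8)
The plan is to treat the pair $(x^T\alpha,\,x^T\beta)$ as the image of $x$ under a single linear map $\Phi:\F_q^n\to\F_q^2$ defined by $\Phi(x) = (x^T\alpha,\,x^T\beta)$, and to split the argument according to the rank of $\Phi$, that is, according to whether $\alpha$ and $\beta$ are linearly independent over $\F_q^n$.

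For part (i), I would note that linear independence of $\alpha$ and $\beta$ is equivalent to linear independence of the two functionals $x\mapsto x^T\alpha$ and $x\mapsto x^T\beta$ in the dual space, so $\Phi$ has rank $2$ and is therefore surjective onto $\F_q^2$. Each nonempty fiber $\Phi^{-1}(a,b)$ is a coset of $\ker\Phi$, hence has exactly $|\ker\Phi| = q^{n-2}$ elements; dividing by $|\F_q^n| = q^n$ yields $\Pr_x[x^T\alpha = a \text{ and } x^T\beta = b] = 1/q^2$ for every $(a,b)\in\F_q^2$.

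For part (ii), writing $\beta = c\alpha$ with $c\in\F_q\setminus\{0\}$, the identity $x^T\beta = c\,(x^T\alpha)$ holds for all $x$, so the event $\{x^T\alpha = a \text{ and } x^T\beta = b\}$ is empty unless $b = ca$; when $b = ca$ it coincides with $\{x : x^T\alpha = a\}$, which has probability $1/q$ since $\alpha\ne 0^n$ makes $x\mapsto x^T\alpha$ surjective onto $\F_q$. This matches the two cases in the statement. The computation is elementary and I do not anticipate any real obstacle: the only point worth making explicit is that a surjective linear map between finite-dimensional $\F_q$-vector spaces has all fibers of the same size, which is immediate from the fact that fibers are translates of the kernel. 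The claim is then invoked downstream to control the relevant second moments in the analysis of $(a,A)$-projection (Lemma~\ref{lemma:addRC}).
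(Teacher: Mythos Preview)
Your argument is correct. Both parts are handled cleanly, and the only point you flag as needing care---that a surjective linear map over $\F_q$ has equal-size fibers---is indeed the heart of the matter.

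The route, however, differs from the paper's. The paper argues part~(i) concretely: it asserts that one can find two coordinates $i,j$ with $\alpha_i,\alpha_j\neq 0$ and $\beta_i/\alpha_i \neq \beta_j/\alpha_j$, then fixes all other coordinates of $x$ arbitrarily and observes that the remaining $2\times 2$ system in $(x_i,x_j)$ has nonzero determinant $\alpha_i\alpha_j(c'-c)$, hence a unique solution. Your argument instead packages $(x^T\alpha,x^T\beta)$ as a single linear map $\Phi:\F_q^n\to\F_q^2$, reads off $\operatorname{rank}\Phi = 2$ from the linear independence of $\alpha,\beta$, and concludes by fiber counting. The paper's approach is more explicit but, as written, implicitly assumes $\alpha$ has at least two nonzero coordinates (otherwise one cannot pick two indices $i,j$ with $\alpha_i,\alpha_j\neq 0$); your rank argument sidesteps this edge case entirely and is arguably the cleaner formulation. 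For part~(ii) the two arguments are essentially identical.
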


In other words, if $\alpha,\beta (\neq 0^n)$ satisfies the condition~(i), then $\chi_{\alpha}(x)$ and $\chi_{\beta}(x)$ are uniformly and pairwise independently distributed w.r.t. the uniform selection of $x\in\F_q^n$.

\begin{proof}
(i) If $\beta \neq c\alpha$ for any $c\in\F_q$, there are two coordinates $i,j\in[n]$ satisfying $\beta_i = c\alpha_i$, $\beta_j = c'\alpha_j$, $c\neq c'$, and $\alpha_i,\alpha_j \neq 0$. First we select values in $\F_q^{[n]\setminus\{i,j\}}$, and for any choice, the remaining condition takes the following form: for some $ v_1,v_2 \in \F_q$,
\[\alpha_ix_i + \alpha_jx_j = v_1 \text{ and } c\alpha_ix_i + c'\alpha_jx_j = v_2.\]
Since $\alpha_ic'\alpha_j - \alpha_jc\alpha_i = \alpha_i\alpha_j(c'-c) \neq 0$, the above equations have a unique solution w.r.t. $(x_i,x_j)$. The probability that they take the values of the unique solution is exactly $q^{-2}$.
\\
\\
(ii) If $\beta = c\alpha$ ($c\neq 0$), the condition takes the following form:
\begin{align*}
\alpha_1x_1 +\cdots + \alpha_n x_n &= a \\
\alpha_1x_1 +\cdots + \alpha_n x_n &= c^{-1}b
\end{align*}
Obviously, the probability is $q^{-1}$ if $a=c^{-1}b$, otherwise, the probability is $0$.
\end{proof}

Next, we show that for small subspace $\F_q^D$, only one vector $\alpha\in\F_q^D$ satisfies $A\alpha=1^m$ with non-negligible probability w.r.t. the uniform selection of $A$.

\begin{claim}\label{claim:A projected size is small}
For any subset $D\subseteq [n]$ $(|D|\le k)$, $\alpha \in \F_q^D\setminus\{0^{n}\}$, and $m \ge k$,
\[\Pr_{A \sim \F_q^{m\times n}}[A\alpha = 1^m \text{ and } \, A\beta \neq 1^m \text{ for each } \beta\in\F_q^D\setminus\{\alpha\}] \ge \frac{q^{m-k}-1}{q^{2m-k}}\]
Especially, if the parameter $m$ is selected as $m=k+1$, then
\[\Pr_{A \sim \F_q^{(k+1)\times k}}[A\alpha = 1^{k+1} \text{ and } \,A\beta \neq 1^{k+1} \text{ for each } \beta\in\F_q^D\setminus\{\alpha\}] \ge \frac{1}{q^{k+2}}\]
\end{claim}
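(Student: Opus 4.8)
I want to estimate the probability over a uniform random $A \in \F_q^{m \times n}$ of the joint event that $A\alpha = 1^m$ while $A\beta \ne 1^m$ for every other $\beta$ in the $\le q^k$-element set $\F_q^D \setminus \{\alpha\}$. The natural approach is: first compute the probability of the ``good'' event $A\alpha = 1^m$ exactly, then subtract, via a union bound, the probabilities of the ``bad'' coincidences $A\alpha = 1^m \wedge A\beta = 1^m$ for each competing $\beta$.

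\textbf{Step 1: the rows are independent.} Write $A$ as a stack of $m$ i.i.d.\ uniform rows $r_1, \dots, r_m \in \F_q^n$. For a fixed nonzero vector $v \in \F_q^n$ (here $v = \alpha$ or $v = \beta$, both supported on $D$, hence nonzero), and any target value $t \in \F_q$, we have $\Pr_{r \sim \F_q^n}[\langle r, v\rangle = t] = 1/q$ since $v \ne 0^n$. So $\Pr_A[A\alpha = 1^m] = q^{-m}$. That is the leading term.

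\textbf{Step 2: bound each coincidence.} Fix $\beta \in \F_q^D \setminus \{\alpha\}$. Two sub-cases. If $\alpha, \beta$ are linearly independent, then by Claim~\ref{claim:two parities}(i), for each row $r$ independently, $\Pr[\langle r,\alpha\rangle = 1 \wedge \langle r,\beta\rangle = 1] = 1/q^2$, so $\Pr_A[A\alpha = 1^m \wedge A\beta = 1^m] = q^{-2m}$. If $\beta = c\alpha$ with $c \ne 0$ and $c \ne 1$ (the case $c = 1$ is excluded since $\beta \ne \alpha$; note $c=0$ is excluded since $\beta \ne 0^n$), then by Claim~\ref{claim:two parities}(ii) the event $\langle r,\alpha\rangle = 1 \wedge \langle r,\beta\rangle = 1$ requires $1 = c \cdot 1$, i.e.\ $c = 1$, which fails; so this probability is $0$. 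Hence in every case the coincidence probability is at most $q^{-2m}$.

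\textbf{Step 3: union bound and arithmetic.} There are at most $q^{|D|} - 1 \le q^k - 1 < q^k$ competitors $\beta$, so
\[
\Pr_A[A\alpha = 1^m \wedge \exists \beta \in \F_q^D\setminus\{\alpha\}: A\beta = 1^m] \le \frac{q^k - 1}{q^{2m}} < \frac{q^{k}}{q^{2m}}.
\]
Therefore the desired probability is at least $q^{-m} - (q^k - 1) q^{-2m} = (q^{m-k} - 1)\, q^{k - 2m} \cdot q^{0}$... more carefully, $q^{-m} - (q^k-1)q^{-2m} = q^{-2m}(q^m - q^k + 1) \ge q^{-2m}(q^m - q^k) = q^{-2m} q^k (q^{m-k} - 1) = (q^{m-k}-1)/q^{2m-k}$, matching the claimed bound. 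For $m = k+1$ this is $(q - 1)/q^{k+2} \ge 1/q^{k+2}$ once $q \ge 2$, giving the second displayed inequality.

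\textbf{Main obstacle.} There is no real obstacle here; the only thing to be careful about is correctly classifying the competing $\beta$'s into the linearly independent case versus the scalar-multiple case and invoking the right part of Claim~\ref{claim:two parities} per row, then remembering that independence of the $m$ rows lets us raise per-row probabilities to the $m$-th power. The slightly delicate point is making sure the union bound count is $q^{|D|}-1$ (the number of vectors in $\F_q^D$ other than $\alpha$) rather than something larger, and checking the final arithmetic simplification $q^m - q^k + 1 \ge q^k(q^{m-k}-1)$ holds with room to spare.
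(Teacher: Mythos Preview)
Your proof is correct and follows essentially the same approach as the paper: compute $\Pr_A[A\alpha = 1^m] = q^{-m}$ from independence of the rows, bound each coincidence $\Pr_A[A\alpha = 1^m \wedge A\beta = 1^m] \le q^{-2m}$ via Claim~\ref{claim:two parities}, and union-bound over the at most $q^k$ competitors in $\F_q^D$. The paper phrases the union bound conditionally (showing $\Pr[A\beta = 1^m \mid A\alpha = 1^m] \le q^{-m}$) rather than jointly, but this is purely cosmetic; one trivial case your Step~2 skips is $\beta = 0^n \in \F_q^D\setminus\{\alpha\}$, where $A\beta = 0^m \ne 1^m$ deterministically, so the coincidence probability is $0 \le q^{-2m}$ and nothing changes.
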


\begin{proof}
The second part immediately follows from the first one, thus we give only a proof of the first part. It is sufficient to show that
\begin{equation}\label{eq1}
\Pr_A[A\alpha = 1^m] = \frac{1}{q^m} \text{ and } \Pr_A[A\beta \neq 1^m \text{ for each } \beta\in\F_q^D\setminus\{\alpha\}|A\alpha = 1^m] \ge 1-\frac{1}{q^{m-k}}.
\end{equation}
Since $\alpha\neq 0^n$, $\Pr_{x\sim\F_q^n}[x^T\alpha = 1] = q^{-1}$ holds, thus we have $\Pr_A[A\alpha = 1^m] = q^{-m}$. By Claim~\ref{claim:two parities}, for any $\beta\neq \alpha$, we have
\[\Pr_x[x^T\beta = 1 \text{ and } x^T\alpha = 1] \le \frac{1}{q^2}.\]
Therefore,
\[\Pr_x[x^T\beta = 1|x^T\alpha = 1] = \frac{\Pr_x[x^T\beta = 1 \text{ and } x^T\alpha = 1]}{\Pr_x[x^T\alpha = 1]} \le \frac{q}{q^2} = \frac{1}{q}\]
and
\[\Pr_A[A\beta = 1^m|A\alpha = 1^m] \le \frac{1}{q^m}.\]
Since $|D|\le k$, the number of vectors $\beta\in\F_q^D$ is at most $q^k$. Hence, by the union bound,
\[\Pr_A[\exists \beta \in \F_q^{D}\setminus\{\alpha\} \text{ s.t. }  A\beta = 1^m|A\alpha = 1^m] \le \frac{q^k}{q^m},\]
which is equivalent to the second part of the inequality~(\ref{eq1}).
\end{proof}

Let $f:\F_q^n\to \F_q$ be $k$-junta and $D\subseteq [n]$ be the set of relevant coordinates of $f$.
In the following claims, we assume that there exists $\alpha\in\F_q^D\setminus\{0^{n}\}$ satisfying $\fc{f}(\alpha)\neq 0$ and the event in Claim~\ref{claim:A projected size is small} occurs for $D$, $\alpha$, and $m=k+1$. By the definition of $(A,a)$-projection, the projected function satisfies $f^a_A \equiv \fc{af}(\alpha)e(a\chi_\alpha)$ for any $a \in\F_q\setminus\{0\}$, because $af$ has the same domain $D$. In addition, we assume that the example $(x,b)$ is simulated as follows: for $y\gets_u \F_q^n \text{ and } z\gets_u \F_q^{k+1}$,
\[(x,b) := \left(y-A^Tz, f(y)-\sum_j z_j\right).\]

\begin{claim}\label{claim:noisy parity learning}
Let $\alpha\in \F_q^n$. If the $(a,A)$-projected function satisfies $f^{a}_A \equiv \fc{af}(a\alpha)e(a\chi_\alpha)$ for all $a\in\F_q\setminus\{0\}$, and the example $(x,b)$ is simulated as the above, then the conditional distribution of $b_x$ is determined by only the value of $\chi_{\alpha}(x)$, that is, for $x,x'\in\F_q^n$, if $\chi_{\alpha}(x) = \chi_{\alpha}(x')$, then $SD(b_x,b_{x'}) = 0$.
\end{claim}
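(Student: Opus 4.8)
The plan is to pass through the character distance rather than argue about the statistical distance directly, since the $(a,A)$-projection identity from Lemma~\ref{lemma:A-projection} controls exactly the quantities $\E_{b_x}[e(ab_x)]$, and then to invoke Fact~\ref{fact:sdcd} at the very end to convert a vanishing character distance into a vanishing statistical distance.

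First I would recall the second part of Lemma~\ref{lemma:A-projection}: since the example $(x,b)$ is simulated as $(y - A^T z, f(y) - \sum_j z_j)$ with $y \gets_u \F_q^n$, $z \gets_u \F_q^{k+1}$, we have $\E_{b_x}[e(ab_x)] = f^a_A(x)$ for every $x \in \F_q^n$ and every $a \in \F_q$. Next I would plug in the standing hypothesis $f^a_A \equiv \widehat{af}(a\alpha)\,e(a\chi_\alpha)$ for $a \neq 0$ (and $f^0_A \equiv 1$), obtaining $\E_{b_x}[e(ab_x)] = \widehat{af}(a\alpha)\,e(a\chi_\alpha(x))$ for all $a \in \F_q \setminus \{0\}$.

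The core step is then to bound $CD(b_x, b_{x'}) = \max_{a \in \F_q}\bigl|\E[e(ab_x)] - \E[e(ab_{x'})]\bigr|$ when $\chi_\alpha(x) = \chi_\alpha(x')$. For $a = 0$ the two expectations both equal $1$, so that term is $0$. For $a \neq 0$, the difference equals $|\widehat{af}(a\alpha)|\cdot\bigl|e(a\chi_\alpha(x)) - e(a\chi_\alpha(x'))\bigr|$, and since $\chi_\alpha(x) = \chi_\alpha(x')$ forces $a\chi_\alpha(x) = a\chi_\alpha(x')$ and hence $e(a\chi_\alpha(x)) = e(a\chi_\alpha(x'))$, this term vanishes as well. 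Therefore $CD(b_x, b_{x'}) = 0$, and Fact~\ref{fact:sdcd} ("$SD = 0$ iff $CD = 0$") yields $SD(b_x, b_{x'}) = 0$, which is exactly the claim.

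I do not expect a genuine obstacle here: the argument is a direct computation once one commits to working with the characters $e(a\cdot)$. The only mild subtlety is bookkeeping the $a = 0$ case separately (it is not covered by the hypothesis on $f^a_A$ for $a \neq 0$, but is trivial), and remembering that $e$ is multiplicative so that equal arguments $\chi_\alpha(x) = \chi_\alpha(x')$ immediately give equal values after scaling by $a$ and applying $e$.
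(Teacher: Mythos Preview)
Your proposal is correct and follows essentially the same route as the paper: apply Lemma~\ref{lemma:A-projection} to identify $\E[e(ab_x)]$ with $f^a_A(x)=\fc{af}(a\alpha)e(a\chi_\alpha(x))$, observe that $\chi_\alpha(x)=\chi_\alpha(x')$ forces all these characters to agree (the $a=0$ case being trivial), conclude $CD(b_x,b_{x'})=0$, and finish with Fact~\ref{fact:sdcd}. The paper's proof is slightly terser but identical in substance.
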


\begin{proof}By Lemma~\ref{lemma:A-projection} and the assumption, $\E[e(ab_x)] = f^{a}_A(x) = \fc{af}(a\alpha)e(a\chi_{\alpha}(x))$ for any $a \in\F_q\setminus\{0\}$. By Fact~\ref{fact:sdcd},
\begin{align*}
&\chi_{\alpha}(x) = \chi_{\alpha}(x') \\&\quad\Longrightarrow \E[e(ab_x)] = \fc{af}(a\alpha)e(a\chi_{\alpha}(x)) =\fc{af}(a\alpha)e(a\chi_{\alpha}(x')) = \E[e(ab_{x'})] \text{ for any } a\in\F_q\\
&\quad\Longleftrightarrow CD(b_x,b_{x'}) = \max_{a\in\F_q} \left|\E[e(ab_x)]-\E[e(ab_{x'})]\right| = 0\Longleftrightarrow SD(b_x,b_{x'}) = 0.
\end{align*}
\end{proof}

In the algorithm \sr{addRC}, an example of LDWE is simulated as $(x,a\cdot b)$ for some $a\in\F_q\setminus\{0\}$. Obviously if the distribution of $b_x$ is determined, then the distribution of $a\cdot b_x$ is also determined. In addition, it is also obvious that the value of $\chi_{\alpha}(x) = a^{-1}\chi_{a\alpha}(x)$ is determined by the value of $\chi_{a\alpha}(x)$. 
Therefore, the above claim implies that the simulated oracle in the algorithm \sr{addRC} returns indeed an instance of LDME for the target linear function $\chi_{a\alpha}(x)$. Finally, we show that the simulated instance has a large correlation with the linear function $\chi_{a\alpha}$ if the algorithm \sr{addRC} chooses a ``good'' $a\in\F_q\setminus\{0\}$.

\begin{claim}\label{claim:correlation is large}
We assume the same notations and conditions as in Claim~\ref{claim:noisy parity learning}. In addition, if the $k$-junta function $f$ satisfies $\fc{f}(\alpha)\neq 0$ and the parameter $m$ is selected by $m=k+1$, (i.e., $A\in\F_q^{(k+1)\times n}$), then
\[\max_{a\in\F_q\setminus\{0\}}\mathrm{Cor}(ab,\chi_{a\alpha})  \ge \frac{1}{q^{k+1}}.\]
\end{claim}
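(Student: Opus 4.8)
The plan is to split the claim into two independent pieces: (i) an algebraic identity that turns the correlation $\mathrm{Cor}(ab,\chi_{a\alpha})$ into a single Fourier coefficient of the junta $f$, and (ii) an elementary lower bound on that coefficient, maximised over rescalings, which uses only that $\fc{f}(\alpha)\neq 0$ and that $f$ is a $k$-junta.

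\emph{Step 1: from correlation to $\fc{af}(a\alpha)$.} Fix $a\in\F_q\setminus\{0\}$. Scaling by a nonzero constant does not change the set of relevant coordinates, so $af$ is a $k$-junta with the same relevant set $D$, and by Fact~\ref{fact:fcnonzero to relevant} every $\gamma$ with $\fc{af}(\gamma)\neq0$ lies in $\F_q^D$. Writing such a $\gamma$ as $a\delta$ with $\delta\in\F_q^D$, the constraint $A\gamma=a^{k+1}$ becomes $A\delta=1^{k+1}$, and the event of Claim~\ref{claim:A projected size is small} (with $m=k+1$) forces $\delta=\alpha$. Hence $f^a_A\equiv\fc{af}(a\alpha)\,e(\chi_{a\alpha})$, so $\fc{f^a_A}(a\alpha)=\fc{af}(a\alpha)$. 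Feeding the example simulated in \sr{addRC} into the second part of Lemma~\ref{lemma:A-projection} then gives $\mathrm{Cor}(ab,\chi_{a\alpha})=\bigl|\E_x[\,\E_{b_x}[e(ab_x)]\,\overline{e(\chi_{a\alpha}(x))}]\bigr|=\bigl|\E_x[f^a_A(x)\overline{e(\chi_{a\alpha}(x))}]\bigr|=|\fc{af}(a\alpha)|$. Expanding the definition, $\fc{af}(a\alpha)=\E_x[e(a(f(x)-\chi_\alpha(x)))]$, so with $Z:=f(x)-\chi_\alpha(x)$ for $x\gets_u\F_q^n$ and $g(a):=\E[e(aZ)]$ we have $\mathrm{Cor}(ab,\chi_{a\alpha})=|g(a)|$ for all $a\neq0$, and $g(1)=\fc{f}(\alpha)\neq0$ by hypothesis.

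\emph{Step 2: structure of $Z$, then Fourier inversion.} Since $\alpha\in\F_q^D$ and $f$ is a $k$-junta, $Z$ depends only on the coordinates in $D$ with $1\le|D|\le k$, so every atom $\Pr[Z=v]$ is an integer multiple of $q^{-|D|}$. Because $g(1)=\E[e(Z)]\neq0$, the law of $Z$ is not uniform on $\F_q$, so some atom exceeds $1/q$; being a multiple of $q^{-|D|}$, it is in fact at least $\frac{1}{q}+q^{-|D|}\ge\frac{1}{q}+q^{-k}$. Let $v_0$ attain this maximum. Fourier inversion over $\F_q$ yields $q\Pr[Z=v_0]=\sum_{a\in\F_q}g(a)e(-av_0)=1+\sum_{a\in\F_q\setminus\{0\}}g(a)e(-av_0)$, hence
\[
q^{1-k}\le q\Pr[Z=v_0]-1=\Bigl|\sum_{a\neq0}g(a)e(-av_0)\Bigr|\le(q-1)\max_{a\neq0}|g(a)|,
\]
so $\max_{a\in\F_q\setminus\{0\}}\mathrm{Cor}(ab,\chi_{a\alpha})=\max_{a\neq0}|g(a)|\ge\frac{q^{1-k}}{q-1}>q^{-k}\ge q^{-(k+1)}$, which proves the claim (with room to spare).

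\emph{Main obstacle.} There is no serious difficulty; the only point requiring care is the bookkeeping in Step~1 — verifying that after the $(a,A)$-projection exactly the single Fourier mode $a\alpha$ survives — but this is essentially the content of the discussion preceding the claim together with Claim~\ref{claim:A projected size is small}. The quantitative heart, Step~2, is pure counting: a non-uniform distribution on $\F_q$ whose atoms have common denominator $q^{|D|}$ must already deviate from uniformity by at least $q^{-|D|}$ at its mode, and Fourier inversion then forces one of the $q-1$ nontrivial characters of $\F_q$ to be correspondingly large.
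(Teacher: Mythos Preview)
Your proof is correct. Step~1 coincides with the paper's argument (the paper simply invokes the hypothesis of Claim~\ref{claim:noisy parity learning} rather than re-deriving that only the mode $a\alpha$ survives, but the content is the same). Step~2 is where you diverge: the paper rewrites $\max_{a\neq 0}|\fc{af}(a\alpha)|$ as the character distance $CD(f(U^n_q)-\chi_\alpha(U^n_q),U'_q)$ and then appeals to Fact~\ref{fact:sdcd} to pass to statistical distance, finally using the $q^{-k}$ granularity of the $k$-junta $f-\chi_\alpha$ to get $\max_{a\neq0}|\fc{af}(a\alpha)|\ge \frac{1}{\sqrt{q-1}\,q^k}$. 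You instead observe directly that the atoms of $Z=f(x)-\chi_\alpha(x)$ are multiples of $q^{-|D|}$, locate a mode with mass $\ge 1/q+q^{-k}$, and apply Fourier inversion over $\F_q$ to force some nontrivial character to be large. Your route is more elementary (it avoids the imported CD/SD inequality from~\cite{Bogdanov:2010:PBP:1958033.1958049}) and even yields the slightly sharper bound $\frac{q^{1-k}}{q-1}\ge\frac{1}{\sqrt{q-1}\,q^k}$; the paper's route has the virtue of packaging the argument via a reusable distance comparison. Either way the target $q^{-(k+1)}$ is reached comfortably.
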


\begin{proof}
For any $a\in\F_q\setminus\{0\}$,
\begin{align*}
\mathrm{Cor}(ab,\chi_{a\alpha}) &:= |\E_{x,b}[e(ab)\ol{e(\chi_{a\alpha}(x))}]|\\
&= |\E_x[\E_b[e(ab_x)]\ol{e(\chi_{a\alpha}(x))}]|\\
&= |\E_x[f^a_A(x)\ol{e(\chi_{a\alpha}(x))}]| \qquad(\because \text{Lemma~\ref{lemma:A-projection}})\\
&= |\fc{af(a\alpha)}\E_x[e(\chi_{a\alpha}(x))\ol{e(\chi_{a\alpha}(x))}]| = |\fc{af}(a\alpha)| \qquad(\because \text{the assumption in Claim~\ref{claim:noisy parity learning}})\\
\end{align*}
Thus, it is enough to show that $\max_{a\in\F_q\setminus\{0\}} |\fc{af}(a\alpha)| \ge 1/q^{k+1}$. Let $U^{(1)}_q,\ldots,U^{(n)}_q$ and $U'_q$ be independently and uniformly distributed random variables over $\F_q$, and let $U^n_q = (U^{(1)}_q,\ldots,U^{(n)}_q)$.
\begin{align*}
\max_{a\in\F_q\setminus\{0\}} |\fc{af}(a\alpha)| &= \max_{a\in\F_q\setminus\{0\}} \left|\E[e(a(f(U^n_q)-\chi_{\alpha}(U^n_q)))]\right|\\
&= \max_{a\in\F_q\setminus\{0\}} \left|\E[e(a(f(U^n_q)-\chi_{\alpha}(U^n_q)))]-\E[e(aU'_q)]\right| \qquad(\because \E[e(aU'_q)] = 0)\\
&= \max_{a\in\F_q} \left|\E[e(a(f(U^n_q)-\chi_{\alpha}(U^n_q)))]-\E[e(aU'_q)]\right|\\
&= CD(f(U^n_q)-\chi_{\alpha}(U^n_q),U'_q) \\
&\ge \frac{1}{\sqrt{q-1}}\cdot 2\cdot SD(f(U^n_q)-\chi_{\alpha}(U^n_q),U'_q) \qquad(\because \text{Fact~\ref{fact:sdcd}})
\end{align*}
By the assumption, $\E[e(f(U^n_q)-\chi_{\alpha}(U^n_q))] = \fc{f}(\alpha) \neq 0$. Since $\E[e(U'_q)] = 0$ , they must not be statistically identical, that is, $SD(f(U^n_q)-\chi_{\alpha}(U^n_q),U'_q)\neq 0$. In addition, by Fact~\ref{fact:fcnonzero to relevant}, $f(x)-\chi_{\alpha}(x)$ is $k$-junta. Therefore, by the definition of statistical distance, $2\cdot SD(f(U^n_q)-\chi_{\alpha}(U^n_q),U'_q) \ge 1/q^k$. Now we have
\[\max_{a\in\F_q\setminus\{0\}} |\fc{af}(a\alpha)| \ge \frac{1}{\sqrt{q-1}}\cdot 2\cdot SD(f(U^n_q)-\chi_{\alpha}(U^n_q),U'_q) \ge \frac{1}{\sqrt{q-1}}\cdot\frac{1}{q^k} \ge \frac{1}{q^{k+1}}.\]
\end{proof}

Now we give the proof of Lemma~\ref{lemma:addRC}.

\begin{proof}[Proof (Lemma~\ref{lemma:addRC})] 
First, for simplicity, let us assume that execution of \sr{checkRC} succeeds with probability 1. 
If $f$ is not constant and some relevant coordinates are not contained in $R$, then there exists a restriction $\tau$ on $R$ such that the restricted function $f|_{\tau}$ is not constant. In this case, by Lemma~\ref{lemma:simuljunta}, there exist $j\in[p^{\ell-1}]$ and $\alpha \in \F_q^{|\bar{R}|}$ such that $|\alpha| \ge 1$ and $\fc{c_jf|_{\tau}}(\alpha) \neq 0$. 

For convenience, we regard $f$ as the restricted function as $f := c_jf|_{\tau}$.
For the set $D$ of relevant coordinates of $f$, $|D|\le k$. By Claim~\ref{claim:A projected size is small} and the argument following Claim~\ref{claim:A projected size is small}, for all $a' \in \F_q\setminus\{0\}$, $f^{a'}_A \equiv \fc{a'f}(a'\alpha)e(a'\chi_\alpha)$ with probability at least $1/q^{k+2}$ w.r.t. the uniform selection of $A$. Since \sr{addRC} tries to select $A$ more than $q^{k+2}\ln{4/\delta}$ times, at least one of selected $A$'s satisfies this condition with probability at least $1-\delta/4$. Thus in the following argument, we assume that the algorithm \sr{addRC} succeeds in selecting such an $A$.

If the algorithm \sr{addRC} succeeds in selecting the above matrix $A$, then by Claims~\ref{claim:noisy parity learning} and~\ref{claim:correlation is large}, there exists $a\in\F_q\setminus\{0\}$ such that the simulated noisy example in line~\ref{line:addRC Aa projection} corresponds to the example from LDME of the correlation $\rho\ge 1/q^{k+1}$. By the assumption, the repetition of \sr{LDME} recovers $\alpha$ up to constant factor (i.e., finds $a'\alpha$ for some $a'\in\F_q$) with probability at least $1-\delta/4$. If LDME is solved successfully, then at least one relevant coordinate is added to $R$ in line~\ref{line:addRC add RV}.

If the algorithm \sr{addRC} fails in selecting $A$ and $a$, the subroutine \sr{LDME} may return some undesirable candidate. In this case, the subroutine \sr{checkRC} returns \sr{false} in line~\ref{line:addRC:check candidate}, and irrelevant coordinates are not added to $R$. Therefore, by the union bound, the failure probability is at most $\delta/4 + \delta/4 = \delta/2$ under the condition that \sr{checkRC} succeeds with probability 1.

In fact, our algorithm \sr{checkRC} may fail. Since the number of executions of \sr{checkRC} is at most $Mq^{k+3}$, by the union bound, the probability that some executions of \sr{checkRC} fail is at most $\delta/2$. Thus, the total failure probability is at most $\delta$. The total running time is bounded above by 
\[M\cdot \poly(n,q^k,\ln{1/\delta}) \cdot T(n,k,1/q^{k+1})  = \poly(n,q^k,\ln{1/\delta}) \cdot T(n,k,1/q^{k+1}).\]
\end{proof}

\section{Proofs of Lemmas in Section~\ref{section:LWE to LBP}}

\subsection{Proof of Lemma~\ref{lemma:how far the correlated pair}}\label{proofs of claims:how far the cor}

For simplicity, let $p_a := \Pr[X=a]$ for $a\in\F_q$.
First we show that
\[|\E[e(X)]|\ge \rho \Longrightarrow \exists a\in \F_q \>s.t.\> \left|p_a-\frac{1}{q}\right|\ge\frac{\rho}{q}.\]
By contraposition, we assume that $|p_a-\frac{1}{q}| <\frac{\rho}{q}$ for any $a\in\F_q$. Then,
\begin{align*}
|\E[e(X)]| = \left|\sum_{a\in\F_q}p_ae(a)\right| &= \left|\sum_{a\in\F_q} (p_a -\frac{1}{q}) e(a)\right| \\
&\le \sum_{a\in\F_q} \left|p_a -\frac{1}{q}\right| |e(a)| \\
&< \frac{\rho}{q} \cdot \sum_{a\in\F_q} |e(a)| = \rho,
\end{align*}
where the second equality follows from the fact that $\sum_{a\in\F_q}{e(a)} = 0$.

Now we have that $|p_a-\frac{1}{q}|\ge\frac{\rho}{q}$ for some $a\in\F_q$. If $p_a-\frac{1}{q} \ge\frac{\rho}{q}$, then $p_a \ge \frac{1}{q} +\frac{\rho}{q} \ge \frac{1}{q} + \frac{\rho}{q^2}$. Therefore, the remaining case is that $p_a \le \frac{1}{q}  -\frac{\rho}{q}$. In this case,
\[(q-1) \max_{b\in \F_q\setminus\{ a\}} p_b \ge \sum_{b\neq a}p_b = 1-p_a \ge \frac{q-1}{q} +\frac{\rho}{q}.\]
Thus, there exists $b\in\F_q$ such that $p_b\ge \frac{1}{q} +\frac{\rho}{q(q-1)} \ge \frac{1}{q} +\frac{\rho}{q^2}.$ 

\subsection{Proof of Lemma~\ref{lemma:LI uniform}}\label{proofs of claims:li uniform}
The lemma immediate follows from Claim~\ref{claim:two parities} as follows:
\begin{align*}
\Pr[X-\chi_\beta(x) = a] &= \sum_{v\in\F_q}\sum_{v'\in\F_q}\Pr[\chi_\alpha(x) = v,\chi_{\beta}(x)=v']\Pr[X=a+v'|\chi_{\alpha}(x)=v]\\
&= \frac{1}{q^2} \sum_{v\in\F_q}\sum_{v'\in\F_q}\Pr[X=a+v'|\chi_{\alpha}(x)=v] \quad(\because Claim~\ref{claim:two parities})\\
&= \frac{1}{q^2} \sum_{v\in\F_q} 1 = \frac{1}{q}.
\end{align*}

\subsection{Proof of Lemma~\ref{lemma:checkCor}}\label{proof: checkCor}
If $\E[e(f(x)-\chi_{\gamma}(x))]\ge\rho$, then by Lemma~\ref{lemma:how far the correlated pair}, there exists $a\in\F_q$ such that $\Pr[f(x)-\chi_{\gamma}(x) = a] \ge 1/q + \rho/q^2$. Since \sr{checkCor} tries all $a\in\F_q$, by Hoeffding inequality, the condition in line~\ref{checkCor:condition} is not satisfied with probability at most
\[\exp\left(-\frac{2\rho^2}{4q^4}m\right) \le \exp\left(-\frac{\rho^2}{2q^4}\cdot \frac{2q^4}{\rho^2} \ln\frac{q}{\delta}\right) = \frac{\delta}{q} \le \delta.\]
On the other hand, if $\chi_{\alpha}$ is a target linear function, and $\gamma$ and $\alpha$ are linearly independent, then by Lemma~\ref{lemma:LI uniform}, $\Pr[f(x)-\chi_{\gamma}(x) = a] = 1/q$ for each $a\in\F_q$. By Hoeffding inequality and the union bound, the error probability that the condition in line~\ref{checkCor:condition} is satisfied is at most
\[q \cdot \exp\left(-\frac{2\rho^2}{4q^4}m\right) \le  q\cdot \frac{\delta}{q} = \delta.\]

\subsection{Proof of Lemma~\ref{lemma:LDME}}\label{proof:Main Lemma2}
In this section, we show the correctness of the algorithm \sr{main2}. We use $\alpha$ to denote the coefficients of the target linear function, that is, the distribution of the target randomized function $f(x)$ is determined only by $\chi_{\alpha}(x)$ for each $x\in\F_q^n$. We assume that a partition $(J,\bar{J})$ is consecutive and divides a nonzero part of $\alpha$ into half as in Lemma~\ref{lemma:concecutive partitions}.

We begin with the analysis of non-target pairs for each row in the reduced instance.

\begin{claim}\label{claim:case1}
If a partition $(J,\bar{J})$ and linearly independent vectors $\beta, \beta'\in \F_q^J\cup\F_q^{\bar{J}}$ satisfy that 
$\alpha^J \neq 0^n$, $\alpha^{\bar{J}} \neq 0^n$, and for any $a\in\F_q\setminus\{1\}, \alpha^{J} \neq a\beta, \alpha^{J} \neq a\beta', \alpha^{\bar{J}}\neq a\beta, \alpha^{\bar{J}}\neq a\beta'  \text{ and } \beta+\beta' \neq \alpha$, 
then $\chi_{\beta}$ and $\chi_{\beta'}$ are uniformly and pairwise independently distributed under any condition about $\chi_{\alpha}$, i.e., for any $ v_1,v_2,v_3 \in \F_q$,
\[\Pr_{x}[\chi_{\beta}(x)= v_1 \text{ and } \chi_{\beta'}(x) = v_2|\chi_{\alpha}(x)=v_3] = \frac{1}{q^2}.\]
\end{claim}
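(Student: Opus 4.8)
The plan is to reduce the statement to a linear-algebra fact about three vectors in $\F_q^n$ and then invoke the already-established pairwise-independence facts. Observe that the distribution of $\chi_\alpha(x)$ depends only on the linear functional $x\mapsto\chi_\alpha(x)$, so the conditioning event $\{\chi_\alpha(x)=v_3\}$ is an affine subspace of $\F_q^n$ of codimension $1$ (recall $\alpha\neq 0^n$ since $\alpha^J\neq 0^n$). Hence it suffices to show that, conditioned on $\chi_\alpha(x)=v_3$, the pair $(\chi_\beta(x),\chi_{\beta'}(x))$ is uniform on $\F_q^2$; and by the standard fact that a linear map restricted to an affine subspace has all fibers of equal size, this in turn follows once I show that the three functionals $\chi_\alpha,\chi_\beta,\chi_{\beta'}$ are \emph{linearly independent} over $\F_q$ as elements of the dual space. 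Indeed, if $\alpha,\beta,\beta'$ are linearly independent, then the map $x\mapsto(\chi_\alpha(x),\chi_\beta(x),\chi_{\beta'}(x))$ is surjective onto $\F_q^3$ with all fibers of size $q^{n-3}$, so $\Pr_x[\chi_\beta(x)=v_1,\chi_{\beta'}(x)=v_2\mid\chi_\alpha(x)=v_3]=\frac{q^{n-3}}{q^{n-1}}=\frac{1}{q^2}$.

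So the real content is: the hypotheses force $\alpha,\beta,\beta'$ to be linearly independent. We are already told $\beta,\beta'$ are linearly independent, so I must rule out $\alpha$ lying in their span; write $\alpha = b\beta + b'\beta'$ and derive a contradiction by a case analysis on whether $b,b'$ are zero. If $b'=0$ then $\alpha=b\beta$; here $b\neq0$ because $\alpha\neq0^n$, and if $b=1$ this contradicts $\beta+\beta'\neq\alpha$ only indirectly, so better: $\alpha=b\beta$ with $b\neq0$ means $\alpha$ is supported inside $J$ or inside $\bar J$ (since $\beta\in\F_q^J\cup\F_q^{\bar J}$), contradicting that \emph{both} $\alpha^J\neq0^n$ and $\alpha^{\bar J}\neq0^n$. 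The symmetric case $b=0$ is identical with $\beta'$. Finally suppose $b\neq0$ and $b'\neq0$. Then $\alpha$ has nonzero coordinates in both $J$ and $\bar J$, and in fact $\alpha^J = b\,\beta^J + b'\,\beta'^J$ where each of $\beta,\beta'$ lies entirely in one side; here I need to use that at most one of $\beta,\beta'$ can be the one ``on the $J$ side'' in a conflicting way. Concretely: among $\beta,\beta'$, consider which lie in $\F_q^J$ and which in $\F_q^{\bar J}$. If both lie in $\F_q^J$ then $\alpha=b\beta+b'\beta'\in\F_q^J$, again contradicting $\alpha^{\bar J}\neq0^n$; both in $\F_q^{\bar J}$ is symmetric. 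So exactly one, say $\beta\in\F_q^J$ and $\beta'\in\F_q^{\bar J}$; then $\alpha^J=b\beta$ and $\alpha^{\bar J}=b'\beta'$, i.e. $\alpha^J=b\beta$ and $\alpha^{\bar J}=b'\beta'$. If $b\neq1$ this directly contradicts the hypothesis ``$\alpha^J\neq a\beta$ for all $a\neq1$'' (taking $a=b$); if $b=1$, then likewise $b'\neq1$ would contradict ``$\alpha^{\bar J}\neq a\beta'$ for all $a\neq1$'', so $b=b'=1$, giving $\alpha=\beta+\beta'$, contradicting $\beta+\beta'\neq\alpha$. In every branch we reach a contradiction, so $\alpha\notin\mathrm{span}(\beta,\beta')$ and the three vectors are linearly independent, finishing the proof.

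The main obstacle, and the only place requiring care, is the bookkeeping in this case analysis: one must correctly exploit the block structure (each of $\beta,\beta'$ supported entirely in $J$ or entirely in $\bar J$, while $\alpha$ straddles both) together with the ``$a\neq1$'' exception in the hypotheses, to cover all sub-cases without gaps — in particular handling the borderline scalars $b,b'\in\{1\}$ separately via the $\beta+\beta'\neq\alpha$ clause. Everything else (the fiber-counting argument reducing pairwise independence to linear independence of functionals) is routine. I would present the fiber-counting reduction first as a short lemma-style paragraph, then do the case analysis.
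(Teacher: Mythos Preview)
Your proof is correct and arrives at the same underlying fact as the paper --- that $\alpha,\beta,\beta'$ are linearly independent over $\F_q$ --- but by a different route. The paper argues concretely: after a WLOG reduction it exhibits three coordinates $i_1,i_2,i_3$ (two in $J$, one in $\bar J$, or vice versa depending on where $\beta'$ lives) so that the $3\times3$ minor of $(\alpha,\beta,\beta')$ on those coordinates is invertible, and then invokes the same uniqueness-of-solution argument as in Claim~\ref{claim:two parities}. You instead argue abstractly: assume $\alpha=b\beta+b'\beta'$ and run a case analysis on $(b,b')$ and on which side of the partition each of $\beta,\beta'$ lives, using the block structure together with the ``$a\neq 1$'' hypotheses and the clause $\beta+\beta'\neq\alpha$ to reach a contradiction in every branch. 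Your approach is cleaner and makes the role of each hypothesis more transparent (in particular, it shows directly why the $a=1$ exception is harmless unless both scalars are $1$, which is exactly what $\beta+\beta'\neq\alpha$ rules out); the paper's approach is more hands-on and stays parallel to the earlier Claim~\ref{claim:two parities}. Either way, once linear independence is established, your fiber-counting reduction to $\Pr=q^{-3}$ is equivalent to the paper's.
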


The proof of Claim~\ref{claim:case1} is not so essential, thus the reader may skip over it.

\begin{proof}
Since $\alpha \neq 0^n$, it is enough to show that, for any $v_1,v_2,v_3 \in \F_q$,
\[\Pr_{x}[\chi_{\beta}(x)= v_1, \chi_{\beta'}(x) = v_2, \chi_{\alpha}(x)=v_3] = \frac{1}{q^3}.\]
W.l.o.g., we can assume that $\beta\in\F_q^J$ and $\beta \neq \alpha^{J}$ (in this case, either $\beta' = \alpha^{J}$ or $\beta' = \alpha^{\bar{J}}$ may hold). First consider the case where $\beta'\in\F_q^{\bar{J}}$. We select three coordinates $(i_1,i_2,i_3)$ as follows: by linearly independence of $\beta$ and $\alpha^J$, we can select $(i_1,i_2)$ such that $(\alpha_{i_1},\alpha_{i_2})$ and $(\beta_{i_1},\beta_{i_2})$ are also linearly independent. Then, we select $i_3\in\bar{J}$ to satisfy that $\beta'_{i_3} \neq 0$. Now we have the three vectors $\{(\alpha_{i_1},\alpha_{i_2},\alpha_{i_3}), (\beta_{i_1},\beta_{i_2}, 0), (0, 0,\beta'_{i_3})\}$. It is not so difficult to see that they  are linearly independent. 

Otherwise if $\beta'\in\F_q^{J}$, we select $i_3$ satisfying $\alpha_{i_3} \neq 0$, and we can select $(i_1,i_2)$ such that $(\beta_{i_1},\beta_{i_2})$ and $(\beta'_{i_1},\beta'_{i_2})$ are also linearly independent.  Then we have three vectors $\{(\alpha_{i_1},\alpha_{i_2},\alpha_{i_3}),$ $(\beta_{i_1},\beta_{i_2}, 0), (\beta'_{i_1},\beta'_{i_2},0)\}$ which are also linearly independent. 

In any case, for any assignment to $[n]\setminus\{i_1,i_2,i_3\}$, the solution of the remaining linear system in $x_{i_1},x_{i_2},x_{i_3}$ is uniquely determined, and the claim holds as in the proof of Claim~\ref{claim:two parities}. 
\end{proof}

In the reduction, we assume that the initial values $s_1$ and $s_2$ are consistent with $\alpha$, that is, $s_1= init(\alpha^J)$ and $s_2= init(\alpha^{\bar{J}})$. Any pair of indices $(\beta,\beta')$ except for $(\alpha^{J},\alpha^{\bar{J}})$  satisfies the conditions in Claim~\ref{claim:case1}, because they are non-zero and their initial values are fixed. In addition, the value of $f(x)$ depends on only the value of $\chi_{\alpha}$. Therefore, by Claim~\ref{claim:case1}, the pair of entries indexed by $(\beta,\beta')$ are also uniformly and independently distributed.

For an element $a\in\F_q$ and a random variable $X$ taking values in $\F_q$, we use $\qtob{X}{a}$ to denote a $\{\pm1\}$-valued random variable given by operation in line~\ref{line:LDME: make instance} of \sr{main2}, i.e.,
\begin{enumerate}
	\item[(1)] if $X$ takes $a$, set as $\qtob{X}{a} = 1$,
	\item[(2)] otherwise, flip a biased coin with the head probability $p_h=q/(2(q-1))$, and if it comes up with head (resp. tail), set as $\qtob{X}{a}=-1$, (resp. $\qtob{X}{a}=1$).
\end{enumerate}
For any $a\in\F_q$, if $X$ is uniformly distributed over $\F_q$, then $\Pr[\qtob{X}{a}=1] = \frac{q-1}{q}\cdot \frac{q}{2(q-1)} = \frac{1}{2}$. Moreover, it is easy to see that if $X,Y$ are uniformly and pairwise independently distributed, then $\qtob{X}{a}$ and $\qtob{Y}{a}$ are also uniformly and pairwise independently distributed over $\{\pm1\}$. Therefore, any pair of entries indexed by $(\beta,\beta') \neq (\alpha^{J},\alpha^{\bar{J}})$ is selected uniformly and independently.\\

Now we move on to the analysis of the target pair, that is, the pair of entries corresponding to $(\alpha^{J},\alpha^{\bar{J}})$.

\begin{claim}\label{claim:correlated}
Let $(J,\bar{J})$ be any  partition  of $[n]$. If a randomized function $f:\F_q^n\to\F_q$ has a correlation with $\chi_{\alpha}$ as $\mathrm{Cor}(f,\chi_{\alpha})\ge \rho$, then there exist $a_1,a_2 \in \F_q$ such that
\[\Pr_{x,f}[f(x)-\chi_{\alpha^J}(x)-a_1 = a_2\text{ and } \chi_{\alpha^{\bar{J}}}(x) = a_2] \ge \frac{1}{q^2} + \frac{\rho}{q^{3}}.\]
\end{claim}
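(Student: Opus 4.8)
\textbf{Proof proposal for Claim~\ref{claim:correlated}.}

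The plan is to reduce the two-variable concentration statement to a one-variable statement about the random variable $Y := f(x) - \chi_{\alpha^J}(x) - \chi_{\alpha^{\bar J}}(x) = f(x) - \chi_\alpha(x)$, where the last equality holds because $\alpha = \alpha^J + \alpha^{\bar J}$ (the supports of $\alpha^J$ and $\alpha^{\bar J}$ are disjoint). First I would observe that $\mathrm{Cor}(f,\chi_\alpha) = |\E[e(f(x)-\chi_\alpha(x))]| = |\E[e(Y)]| \ge \rho$, so by Lemma~\ref{lemma:how far the correlated pair} there is a value $a_2 \in \F_q$ with $\Pr[Y = a_2] \ge \frac1q + \frac{\rho}{q^2}$.

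Next I would pick the shift $a_1$. Conditioning on the value $v := \chi_{\alpha^{\bar J}}(x)$, I want to show that the event ``$f(x) - \chi_{\alpha^J}(x) - a_1 = a_2$ and $\chi_{\alpha^{\bar J}}(x) = a_2$'' occurs often. Note that on the event $\chi_{\alpha^{\bar J}}(x) = a_2$, the inner equality $f(x) - \chi_{\alpha^J}(x) - a_1 = a_2$ becomes $f(x) - \chi_\alpha(x) + a_2 - a_1 = a_2$, i.e. $Y = a_1$. So I actually want to choose $a_1$ so that $\Pr[\,Y = a_1 \text{ and } \chi_{\alpha^{\bar J}}(x) = a_2\,]$ is large; but the cleanest route is to set $a_1 := a_2$, so that the target event is exactly $\{Y = a_2\} \cap \{\chi_{\alpha^{\bar J}}(x) = a_2\}$ — wait, that double-counts, so instead I should be more careful: I want $\Pr[Y = a_1,\ \chi_{\alpha^{\bar J}}(x) = a_2] \ge \frac{1}{q^2} + \frac{\rho}{q^3}$. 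The key structural fact is that $\chi_{\alpha^{\bar J}}(x)$ depends only on the coordinates in $\bar J$, whereas the conditional law of $f(x)$ given $x$ depends only on $\chi_\alpha(x) = \chi_{\alpha^J}(x) + \chi_{\alpha^{\bar J}}(x)$. I would split $x = (x^J, x^{\bar J})$ and condition on $x^{\bar J}$ (equivalently on the pair $(\chi_{\alpha^{\bar J}}(x), \text{rest})$): given $\chi_{\alpha^{\bar J}}(x) = a_2$, the quantity $\chi_{\alpha^J}(x)$ is still uniform over $\F_q$ (since $\alpha^J \ne 0$, using Claim~\ref{claim:two parities}(i) applied within $\F_q^J$, or directly), hence $\chi_\alpha(x) = \chi_{\alpha^J}(x) + a_2$ is uniform, hence $Y = f(x) - \chi_\alpha(x)$ has exactly its unconditional distribution. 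Therefore, by independence of $\{\chi_{\alpha^{\bar J}}(x) = a_2\}$ (probability $1/q$) from the value of $Y$,
\[
\Pr[\,Y = a_1,\ \chi_{\alpha^{\bar J}}(x) = a_2\,] = \tfrac1q \cdot \Pr[Y = a_1],
\]
and choosing $a_1 := a_2$ (the concentrated value of $Y$ from the previous step) gives the bound $\frac1q\left(\frac1q + \frac{\rho}{q^2}\right) = \frac{1}{q^2} + \frac{\rho}{q^3}$, as desired.

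The one delicate point — and the step I expect to require the most care — is justifying the ``independence / distribution-preservation under conditioning'' claim: that conditioning on $\chi_{\alpha^{\bar J}}(x) = a_2$ does not change the distribution of $Y = f(x) - \chi_\alpha(x)$. This needs (a) that $\chi_{\alpha^J}(x)$ remains uniform over $\F_q$ given $\chi_{\alpha^{\bar J}}(x) = a_2$, which follows because $\alpha^J$ is nonzero and supported on $J$ while the conditioning event involves only coordinates in $\bar J$, so the two are honestly independent; (b) that the randomized response $f(x)$, whose conditional law given $x$ depends on $x$ only through $\chi_\alpha(x)$, therefore has, conditioned on $\{\chi_{\alpha^{\bar J}}(x) = a_2\}$, exactly the mixture $\E_{u \gets_u \F_q}[\text{law of } f \mid \chi_\alpha = u]$ of responses — the same mixture as unconditionally — so $Y$'s conditional law equals its unconditional law. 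Assembling these, together with $\Pr[\chi_{\alpha^{\bar J}}(x)=a_2] = 1/q$, yields the product formula above. The rest is the direct substitution of the Lemma~\ref{lemma:how far the correlated pair} bound, which is routine.
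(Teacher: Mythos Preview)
Your argument is correct in substance but takes a different, more laborious route than the paper, and as written it does not quite prove the claim in the stated generality. The claim is asserted for \emph{any} partition $(J,\bar J)$, yet your independence step explicitly requires $\alpha^J\neq 0$ (so that $\chi_{\alpha^J}(x)$ is uniform and independent of $x_{\bar J}$) and your use of $\Pr[\chi_{\alpha^{\bar J}}(x)=a_2]=1/q$ requires $\alpha^{\bar J}\neq 0$. Both hypotheses do hold in the only place the claim is invoked (after Lemma~\ref{lemma:concecutive partitions}, with $|\alpha|\ge 2$), so the gap is harmless for the paper's purposes, but strictly speaking you have proved a weaker statement.

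The paper's own proof avoids this entirely and is much shorter: it applies Lemma~\ref{lemma:how far the correlated pair} to obtain $a_1$ with $\Pr[f(x)-\chi_\alpha(x)=a_1]\ge 1/q+\rho/q^2$, rewrites this event as $\{f(x)-\chi_{\alpha^J}(x)-a_1=\chi_{\alpha^{\bar J}}(x)\}$, and then simply pigeonholes over the $q$ possible values of $\chi_{\alpha^{\bar J}}(x)$ to extract $a_2$. No independence, no conditioning, no assumption on $\alpha^J$ or $\alpha^{\bar J}$---a three-line argument. Your independence observation is true (when both halves are nonzero) and even yields the same constant, but it buys nothing over the averaging argument and costs you generality and length. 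Also note a naming mismatch: in the claim, $a_1$ is the shift subtracted from $f-\chi_{\alpha^J}$, which on the target event equals the concentrated value of $Y=f-\chi_\alpha$; the paper names that value $a_1$ directly, whereas you first call it $a_2$ and then set $a_1:=a_2$, which is correct but unnecessarily roundabout.
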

\begin{proof}
By Lemma~\ref{lemma:how far the correlated pair}, $\mathrm{Cor}(f,\chi_{\alpha})\ge \rho$ implies that there exists $a_1\in\F_q$ such that
\[\Pr_{x,f}[f(x)-\chi_{\alpha}(x) = a_1] \ge \frac{1}{q} + \frac{\rho}{q^{2}}.\]
Therefore,
\begin{align*}
\frac{1}{q} + \frac{\rho}{q^{2}} &\le \Pr_{x,f}[f(x)-\chi_{\alpha}(x) = a_1] \\
&= \Pr_{x,f}[f(x)-\chi_{\alpha^{J}}(x) - a_1 = \chi_{\alpha^{\bar{J}}}(x)] \\
&\le q \cdot \max_{a_2\in\F_q} \Pr_{x,f}[f(x)-\chi_{\alpha^{J}}(x) - a_1 = \chi_{\alpha^{\bar{J}}}(x) = a_2]
\end{align*}
\end{proof}

Then we estimate the correlation between the target pair in the reduced instance.

\begin{claim}\label{claim:correlation remain in the target}
Let $a\in\F_q$ and $\mu\in[0,1]$. If random variables $X$ and $Y$ in $\F_q$ satisfies
\[\Pr[X=a, Y=a]\ge \frac{1}{q^2} +\mu \text{ and } \Pr[X=a] = \Pr[Y=a] = \frac{1}{q},\]
then,
\[\Pr[\qtob{X}{a}\cdot\qtob{Y}{a} = 1] \ge \frac{1}{2} + 2p_{h}^{2}\mu,\]
where $p_{h} = \frac{q}{2(q-1)}$ as in the definition of $\qtob{X}{a}$.
\end{claim}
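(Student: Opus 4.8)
The plan is to compute $\Pr[\qtob{X}{a}\cdot\qtob{Y}{a}=1]$ by conditioning on the four events determined by whether $X=a$ and whether $Y=a$, and then express everything in terms of the joint probabilities $\Pr[X=a,Y=a]$, $\Pr[X=a]=\Pr[Y=a]=1/q$, and the coin-flip probability $p_h$. Write $r := \Pr[X=a,Y=a]$, so that $\Pr[X=a,Y\neq a] = \Pr[X\neq a,Y=a] = \tfrac1q - r$ and $\Pr[X\neq a,Y\neq a] = 1 - \tfrac2q + r$. On each of these events $\qtob{X}{a}$ and $\qtob{Y}{a}$ are obtained by independent biased coin flips whenever the corresponding variable is $\neq a$ (and deterministically $1$ when the variable equals $a$), so the conditional probability of $\qtob{X}{a}\cdot\qtob{Y}{a}=1$ is: $1$ on the event $\{X=a,Y=a\}$; $\Pr[\text{coin gives }+1] = 1-p_h$ on each of the two mixed events; and $\Pr[\text{both coins agree}] = p_h^2 + (1-p_h)^2$ on the event $\{X\neq a, Y\neq a\}$, using independence of the two coins.

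First I would assemble these contributions into
\begin{align*}
\Pr[\qtob{X}{a}\cdot\qtob{Y}{a}=1]
&= r + 2(1-p_h)\Bigl(\tfrac1q - r\Bigr) + \bigl(p_h^2 + (1-p_h)^2\bigr)\Bigl(1 - \tfrac2q + r\Bigr).
\end{align*}
Next I would substitute $p_h = \tfrac{q}{2(q-1)}$; one checks $1-p_h = \tfrac{q-2}{2(q-1)}$, and in particular the ``baseline'' value obtained by setting $r = 1/q^2$ (i.e.\ the independent case) should come out to exactly $1/2$ — this is the sanity check that pins down the algebra. Then I would collect the coefficient of $(r - 1/q^2)$ in the expression above: grouping terms, the coefficient is $1 - 2(1-p_h) + \bigl(p_h^2 + (1-p_h)^2\bigr) = \bigl(1-(1-p_h)\bigr)^2 + \bigl(\text{wait}\bigr)$ — more carefully, $1 - 2(1-p_h) + p_h^2 + (1-p_h)^2 = p_h^2 + \bigl(1-(1-p_h)\bigr)^2 = 2p_h^2$, using $1-(1-p_h) = p_h$. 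Hence $\Pr[\qtob{X}{a}\cdot\qtob{Y}{a}=1] = \tfrac12 + 2p_h^2\bigl(r - \tfrac1{q^2}\bigr) \ge \tfrac12 + 2p_h^2\mu$, since $r \ge \tfrac1{q^2}+\mu$ and $2p_h^2 > 0$.

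The only mildly delicate point — the "main obstacle", though it is still routine — is making sure the conditional behavior of $\qtob{\cdot}{a}$ on each of the four events is described correctly, in particular that the two coin flips used for $\qtob{X}{a}$ and $\qtob{Y}{a}$ are independent of each other and of $(X,Y)$ (this is how the reduction in \sr{main2} generates them), so that on $\{X\neq a, Y\neq a\}$ the probability the product is $+1$ is genuinely $p_h^2+(1-p_h)^2$ and not something entangled with the joint law of $(X,Y)$. Once that is granted, the rest is the linear computation above, and the key structural fact that makes it clean is that the marginal hypotheses $\Pr[X=a]=\Pr[Y=a]=1/q$ force the answer to be an affine function of $r$ with the independent case $r=1/q^2$ landing exactly at $1/2$.
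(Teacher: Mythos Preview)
Your proof is correct and follows essentially the same approach as the paper: both condition on the four events $\{X=a,Y=a\}$, $\{X=a,Y\neq a\}$, $\{X\neq a,Y=a\}$, $\{X\neq a,Y\neq a\}$, compute the conditional probability that $\qtob{X}{a}\cdot\qtob{Y}{a}=1$ on each, and then do the algebra. Your organization is slightly cleaner---using the marginal hypotheses to parameterize everything by $r=\Pr[X=a,Y=a]$ and observing the result is affine in $r$ with baseline $1/2$ at $r=1/q^2$---whereas the paper keeps $p_1$ and $p_4$ separate and substitutes their lower bounds (which requires the implicit check that the coefficient $2p_h^2-p_h$ of $p_4$ is nonnegative); but this is a matter of bookkeeping, not a different idea.
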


\begin{proof}
Let $p_1,p_2,p_3,p_4$ denote probabilities as
\[p_1= \Pr[X=a, Y=a], \quad p_2 = \Pr[X=a, Y\neq a],\]
\[p_3 = \Pr[X\neq a, Y = a],\quad p_4 = \Pr[X\neq a, Y\neq a].\]
Then, it follows that $p_1+p_2+p_3+p_4 = 1$, $p_1\ge\frac{1}{q^2} +\mu$, and 
\[p_4 = 1-\Pr[X=a] -\Pr[Y=a] + \Pr[X=a,Y=a] \ge 1-\frac{2}{q} +\frac{1}{q^2} +\mu = \left(1-\frac{1}{q}\right)^2 + \mu.\]
Therefore, the probability is bounded below by
\begin{align*}
\Pr[\qtob{X}{a}\cdot\qtob{Y}{a} = 1] &= \Pr[\qtob{X}{a} = \qtob{Y}{a}]\\ &= p_1\cdot 1 + (p_2+p_3)\cdot (1-p_h) + p_4\cdot(p_h^2+(1-p_h)^2)\\
&=(1-p_h) + p_1\cdot p_h + p_4\cdot(2p_h^2-p_h)\\
&\ge(1-p_h) + \frac{1}{q^2} p_h + \left(1-\frac{1}{q}\right)^2(2p_h^2-p_h) + \mu\cdot (p_h + 2p_h^2-p_h)\\
&= \frac{1}{2} + 2p_h^2\mu.
\end{align*}
\end{proof}
For our settings, take $X=f(x)-\chi_{\alpha^J}(x)-a_1$, $Y = \chi_{\alpha^{\bar{J}}}(x)$, and $\mu = \rho/q^3$. Then we have
\[\Pr[\qtob{X}{a}\cdot\qtob{Y}{a} = 1] \ge \frac{1}{2} + 2\frac{q^2}{4(q-1)^2}\frac{\rho}{q^3} \ge \frac{1}{2} + \frac{\rho}{2q^3},\]
and
\[\E[\qtob{X}{a}\cdot\qtob{Y}{a}] = 2\Pr[\qtob{X}{a}\cdot\qtob{Y}{a} = 1] -1 \ge \frac{\rho}{q^3}.\]
Therefore, if we take sufficiently many samples, then the target pair has a correlation at least $\frac{\rho}{2q^3}$ w.h.p.
Now we give the proof of Lemma~\ref{lemma:LDME}.

\begin{proof}[Proof (Lemma~\ref{lemma:LDME})]
As in the proof of Lemma~\ref{lemma:addRC}, we assume that all executions of \sr{checkCor} will succeed. Under the condition, even if an incorrect candidate is found in brute-force search in $(J,\bar{J}),a_1,a_2,s_1$, and $s_2$, the algorithm \sr{main2} does not output such an incorrect answer by Lemma~\ref{lemma:checkCor}. In fact, it is easily checked that the number of executions of \sr{checkCor} in lines~\ref{line:LDME: checkCor1} and~\ref{line:LDME: checkCor2} is at most $n(q-1)$ and $nq^2(q-1)^2\cdot M$, respectively. Therefore, by the union bound, the probability that at least one execution fails is bounded above by \[n(q-1) \cdot \frac{\delta}{4n(q-1)} + nq^2(q-1)^2M \cdot \frac{\delta}{4Mnq^2(q-1)^2} \le \frac{\delta}{2}.\]

Let $\alpha\in\F_q^n$ be the coefficients of the target linear function and $f$ be the target randomized function corrupted with noise. If $|\alpha|=1$, then by our assumption on \sr{checkCor}, the target linear function must be found in line~\ref{line:LDME: checkCor1}. Therefore, we assume that $2\le|\alpha|\le k$. In this case, we show that the reduced binary instance is the one of LBP with the correlation $\frac{\rho}{2q^3}$ w.h.p. We assume that, as mentioned in the definition of the algorithm \sr{main2}, all columns are labeled by vectors in $\F_q^n$. In addition, assume that the algorithm \sr{main2} succeeds in selecting $(J,\bar{J}),a_1,a_2,s_1$, and $s_2$ satisfying that
\begin{itemize}
	\item $1\le|\alpha^J| \le \lceil k/2 \rceil$ and $1\le|\alpha^{\bar{J}}|\le \lfloor k/2 \rfloor$ (by Lemma~\ref{lemma:concecutive partitions}, such a consecutive partition must exist)
	\item $\Pr_{x,f}[f(x)-\chi_{\alpha^{J}}-a_1 = \chi_{\alpha^{\bar{J}}} = a_2 ]\ge 1/q+\rho/q^3$ (by Claim~\ref{claim:correlated}, such values of $a_1,a_2$ must exist)
	\item $init(\alpha^J) = s_1$ and $init(\alpha^{\bar{J}}) = s_2$
\end{itemize}
Then, the reduced instance must contain the pair of columns indexed by $(\alpha^J,\alpha^{\bar{J}})$, we call it the target pair. For any pair of columns except for the target pair, as mentioned in the observation following Claim~\ref{claim:case1}, the pair in the reduced instance is  uniformly and independently distributed over $\{\pm 1\}^d$. On the other hand, for each row of the target pair, their product is also $\{\pm1\}$-valued and the expectation is at least $\rho/q^3$ by Claim~\ref{claim:correlation remain in the target}. If we select the sample size $d$ to be more than $\frac{8q^6}{\rho^2}\ln 4$, then by Hoeffding inequality, the probability that their inner product does not exceed $\rho/2q^3$ is bounded above by
\[\exp\left(-\frac{2\rho^2d}{4\cdot 4q^6}\right)\le \exp\left(-\frac{\rho^2}{8q^6}\frac{8q^6}{\rho^2}\ln 4\right) = \frac{1}{4}.\] 
In other words, with probability at least $3/4$, the algorithm reduces LDME to LBP of the correlation $\rho/2q^3$. W.l.o.g., we can assume that the failure probability of \sr{LBP} is at most $1/4$, (otherwise, it is achieved by constant number of repetitions). Thus, for each trial in lines~\ref{line:LDME: make instance} and~\ref{line:LDME execution of LBP}, the probability that \sr{LBP} does not find the target pair is at most $1/2$. Therefore, by repeating these trials at least $\log{2/\delta}$ times, the failure probability decreases to $\delta/2$. Even if we consider the possibility that \sr{checkCor} may fail, the total failure probability is bounded above by $\delta/2+\delta/2 = \delta$. The total running time is bounded above by
\begin{multline*}
nq\cdot \poly(n,\rho^{-1},\ln{\delta^{-1}}) + O(nq^4\cdot\ln{\delta^{-1}}\cdot dq^{k/2}n^{k/2})(T((qn)^{k/2},\rho/2q^3) + \poly(n,\rho^{-1},\ln{\delta^{-1}})) \\
\le \poly(n,\rho^{-1},\ln{\delta^{-1}}) \cdot d\cdot T((qn)^{k/2},\rho/2q^3).
\end{multline*}
\end{proof}

\end{document}